\title{NEWMA: a new method for scalable model-free online change-point detection}
\author[1]{Nicolas Keriven}
\author[2]{Damien Garreau}
\author[3]{Iacopo Poli}
\date{}
\affil[1]{CNRS, GIPSA-lab.}
\affil[2]{Universit\'e C\^ote d'Azur, Inria, CNRS, LJAD.}
\affil[3]{LightOn.}
\newcommand{\rev}[1]{#1}
\newtheorem{theorem}{Theorem}
\newtheorem{lemma}[theorem]{Lemma}
\newtheorem{proposition}[theorem]{Proposition}
\newtheorem{remark}{Remark}
\newcommand{\paren}[1]{\left(#1\right)}
\newcommand{\brac}[1]{\left[#1\right]}
\newcommand{\inner}[1]{\left\langle#1\right\rangle}
\newcommand{\norm}[1]{\left\|#1\right\|}
\newcommand{\set}[1]{\left\{#1\right\}}
\newcommand{\abs}[1]{\left\lvert #1 \right\rvert}
\newcommand{\order}[1]{\mathcal{O}\paren{#1}}
\def \Exp {\mathbb{E}}
\def\PP{\mathbb{P}}
\def \Prob {\pi}
\newcommand{\diag}[1]{\textrm{diag}\paren{#1}}
\newcommand*\diff{\mathop{}\!\mathrm{d}}
\def \iid {\emph{i.i.d.}~}
\def \ie {\emph{i.e.}\@\xspace}
\def \eg {\emph{e.g.}\@\xspace}
\def \simiid {\overset{\iid}{\sim}}
\DeclareMathOperator*{\argmin}{arg\,min}
\def \RR {\mathbb{R}}
\def \CC {\mathbb{C}}
\def \NN {\mathbb{N}}
\def \Sketch {\mathbf{z}}
\def\sample{x}
\def \kernel {K}
\def \kernelPsi {\kappa}
\def \freqdist {\Gamma}
\def \freq {{\boldsymbol \omega}}
\def \feat {\phi}
\def \rfeat {{\feat_{\freq}}}
\def \pFail {\rho}
\def \ve {\mathbf{e}}
\def \vx {\mathbf{x}}
\def \vu {\mathbf{u}}
\def \vv {\mathbf{v}}
\def \mA {\mathbf{A}}
\def \mB {\mathbf{B}}
\def \mP {\mathbf{P}}
\def \AnyOpNA{\Psi}
\newcommand{\AnyOp}[1]{\Psi\paren{#1}}
\def \Tfalsealarm {\overline{T}}
\newcommand{\enet}{\mathcal{C}}
\def \Ball {\mathcal{B}}
\definecolor{darkpurple}{rgb}{0.3,0,0.3}
\def\updt{\Lambda}
\def\updtp{\lambda}
\def\thres{\tau}
\def\winsize{B}
\def\MeasSpace{\mathcal{H}}
\def\updtCst{c}
\def\MMD{\textrm{\textup{MMD}}}
\def\eigen{\xi}
\def\eigenfunc{\psi}
\newcommand{\kroneck}[1]{\textrm{\textup{I}}_{\set{#1}}}
\def\eqdef{\stackrel{\textrm{\textup{def.}}}{=}}
\def\Id{\textrm{\textup{Id}}}
\def\Efund{E_\textup{conc.}}
\def\Einit{E_\textup{init.}}
\def\epsfund{\varepsilon_1}
\def\epsinit{\varepsilon_2}
\def\Ewin{E_\textup{assum.}}
\def \distphi{d_\AnyOpNA}
\def\EWMA{{EWMA}}
\def\NEWMA{{NEWMA}}
\newcommand{\edit}[1]{{#1}}
\begin{document}

\maketitle

\begin{abstract}
We consider the problem of detecting abrupt changes in the distribution of a multi-dimensional time series, with limited computing power and memory. In this paper, we propose a new, simple method for model-free online change-point detection that relies only on fast and light recursive statistics, inspired by the classical Exponential Weighted Moving Average algorithm (EWMA). The proposed idea is to compute \emph{two} EWMA statistics on the stream of data with different forgetting factors, and to compare them. By doing so, we show that we implicitly compare recent samples with older ones, without the need to explicitly store them. Additionally, we leverage Random Features (RFs) to efficiently use the Maximum Mean Discrepancy as a distance between distributions, furthermore exploiting recent optical hardware to compute high-dimensional RFs in near constant time. We show that our method is significantly faster than usual non-parametric methods for a given accuracy. 
\end{abstract}

\section{Introduction}

The goal of online change-point detection is to detect abrupt changes in the distribution of samples in a data stream. 
One seeks to detect a change as soon as it occurs, while minimizing the number of false alarms.
Online change-point detection has numerous practical applications, for instance medical monitoring via the segmentation of EEG, ECG and fMRI signals~\cite{Malladi2013,Staudacher2005,Bosc2003}, or detections of changes in audio~\cite{Bie_Bac_Con:2015} or video~\cite{Kim_Mar_Per:2009,Abo_Gou_Blo:2015} streams.  
We refer to~\cite{Pol_Tar:2012} for a thorough review. 
In recent applications, the need arises to perform such methods on embedded devices, for instance in video streams from body-worn or surveillance video cameras~\cite{Allen2016}, or on data collected by smart phones~\cite{Khan2016a}. 
In addition to being constrained by limited power and memory, such personal devices collect data that can be potentially sensitive, hence the need to process the stream on-the-fly, ideally without storing any raw data. 

\edit{In this paper, we propose a new approach for online, non-parametric change-point detection, whose main advantage is that it does not require to store any raw data in memory, but only appropriate smoothed quantities. It is inspired by: a) the classical Exponentially-Weighted Moving Average (EWMA), but requires less prior knowledge about the in-control distribution of the data, and b) a simple Sliding Window (SW) strategy in its model-free version, but is more efficient in memory and preserves data privacy.}


\subsection{Framework: model-free methods and generalized moments}

We consider a stream of samples $\left(\sample_t\right)_{t\in\NN}$ with values in $\RR^d$ with potentially large $d$. 
The goal of online change-point detection is to detect changes in the distribution of the samples $x_t$ in a sequential manner. \rev{We assume that the samples are independent and identically distributed ($i.i.d.$) before and after each change, and that there may be multiple changes in a row to be detected on-the-fly. As we will see, some methods assume prior knowledge about the distributions before (and sometimes after) each change, however we will consider that no prior knowledge is available here, and develop a so-called \emph{model-free} method.}

Historically, many methods assume that the distributions \rev{of interest} belong to a parametric family of distributions whose likelihood $p_\theta$ is entirely specified (often Gaussians), and rely on a (generalized) likelihood ratio \rev{(GLR)} test. However, such a complete specification is not always available. \rev{In a non-parametric context, some methods then rely on approximating generic discrepancies between distributions such as the Kullback-Leibler (KL) divergence, the total variation~\cite{Kifer2004} or some optimal transport-based distances~\cite{Cheng2019}. However, it is well-known \cite{Weed2017} that most of these metrics are difficult to estimate in high dimension $d$, and/or may be too costly to compute in an online framework.} \rev{On the contrary, simpler} methods are designed to detect changes in some quantity related to the distribution such as the mean or the variance~\cite{Costa2006}. We consider a generalization of this last concept, namely, to detect changes in \textbf{a collection of generalized moments} $\theta_\AnyOpNA(\Prob) \eqdef \Exp_{\sample\sim\Prob} \AnyOp{\sample}$, where $\Prob$ is the distribution of the samples, and $\AnyOpNA:\RR^d \to \MeasSpace$ is a mapping to a normed space $(\MeasSpace,\norm{\cdot})$ (generally, $\MeasSpace = \RR^m$ or $\CC^m$). We therefore introduce the following pseudometric on distributions:
\begin{equation}\label{eq:distphi}
\distphi(\Prob,\Prob') \eqdef \norm{\Exp_{\Prob}\AnyOp{\sample}-\Exp_{\Prob'}\AnyOp{\sample}}
\, ,
\end{equation}
which measures how different two distributions are in terms of these moments.
For instance, when $\AnyOpNA =\Id$, then $\theta_\AnyOpNA(\Prob) =\Exp x$, \edit{and the underlying assumption is that changes will occur in the mean of the distribution of the samples.} 
This also includes higher order moments ($\Psi(x) = x^{\otimes k}$) or histograms ($\Psi(x) = (1_{x\in B_i})_{i=1}^k$ where the $B_i$ are regions of space). If infinite-dimensional spaces $\MeasSpace$ such as Reproducing Kernel Hilbert Spaces (RKHS) are considered, this framework also includes the so-called \emph{kernel} change-point detection~\cite{Harchaoui2009, Gar_Arl:2018}, and $\distphi$ is then referred to as the Maximum Mean Discrepancy (MMD)~\cite{Gretton2007}. We note that this framework does not, strictly speaking, include \emph{centered} moments such as the variance, however one could modify the definition of $\distphi$ to compute the variance from first order and second-order moments. We do not consider centered moments here for simplicity. 

If the user has \emph{prior knowledge} about which quantity $\theta_\AnyOpNA(\Prob)$ is susceptible to change over time, then $\AnyOpNA$ can be chosen accordingly. If not, we will see in Sec.~\ref{sec:MMD} that a somewhat ``universal'' embedding can be obtained by taking $\AnyOpNA$ as \emph{kernel random features} \cite{Rahimi2007}, \rev{which allows to efficiently approximate the MMD with high probability and controlled memory resources.} 

\subsection{Prior knowledge on the in-control statistic} 



\edit{\rev{As mentioned above, some} methods assume prior knowledge about the in-control distribution, that is, the distribution before the change. In our framework, it corresponds to the knowledge of the generalized moments $\theta^\star = \theta_\AnyOpNA(\Prob^\star)$, where $\Prob^\star$ is the in-control distribution.}

One such classical approach is the Exponential Weighted Moving Average (EWMA) algorithm~\cite{Roberts1959}, which we describe\footnote{Note that our description of EWMA is similar to the original \cite{Roberts1959}, with the addition that the data are transformed by the mapping $\AnyOpNA$.} in Alg.~\ref{alg:ewma}. 
EWMA computes \emph{recursively} a weighted average of $\AnyOp{\sample_t}$, with exponential weights that favor the more recent samples:
\[
\Sketch_t = (1-\updt) \Sketch_{t-1} + \updt  \AnyOp{\sample_t}
\]
where $0< \updt < 1$. When this average deviates too much from $\theta^\star$, an alarm is raised. The exponential weights (instead of, say, uniform weights) reduce the detection delay, and increase robustness to potentially irrelevant data in the past. 

\edit{When $\MeasSpace = \RR^m$, a classical multivariate extension of EWMA is called Multivariate-EWMA, and rely on the fact that every dimension may not need the same forgetting factor, and therefore replace $\updt$ by a diagonal matrix $\diag{\updt_1,\ldots,\updt_m}$. In our case there is no assumption on the marginals of the distribution of the data, and moreover it is not clear how the presence of the mapping $\AnyOpNA$ would affect this strategy, hence we consider a single forgetting factor instead, and note however that our method could be extended when $\updt$ is a matrix, which we leave for future work.}

From our point of view, when considering high-dimensional data, the main advantage of \EWMA{} is that it is extremely fast and have low memory footprint, due to its recursive nature: when a new sample arrives, the cost of the update is essentially that of computing $\AnyOp{\sample}$ once. 
Moreover, it preserves data privacy, in the sense that it never stores raw samples but only a smoothed statistic computed from them. 
However, EWMA requires the prior knowledge of $\theta^\star$, which severely limits its use in some cases where it is not available. 

\begin{figure}
\begin{algorithmic}
 \REQUIRE Stream of data $\sample_t$, function $\AnyOpNA$, in-control value $\theta^\star$, forgetting factor $0<\updt<1$, threshold $\thres>0$, initial value $\Sketch_0$
\FOR{$t=1,2,\ldots$}
\STATE$\Sketch_t = (1-\updt) \Sketch_{t-1} + \updt  \AnyOp{\sample_t}$\;
\IF{$\norm{\Sketch_t - \theta^\star} \geq \thres$}
\STATE Flag $t$ as a change-point
\ENDIF
\ENDFOR
\end{algorithmic}
\caption{\EWMA{}~\cite{Roberts1959}}
\label{alg:ewma}
\end{figure}

\subsection{Methods without prior knowledge} 

To solve this last problem, methods with no prior knowledge requirement about the in-control distribution were proposed. Many of them are \emph{two-steps} adaptation of the previous class of approaches: \edit{the parameter $\theta^\star$ is estimated from some training samples during a \emph{Phase I}, before the actual detection during a \emph{Phase II}~\cite{Hawkins2010, Zou2010a}. In the \emph{online} setting, where several changes can happen during a continuous run, this strategy is often adapted in a ``sliding windows'' approach: a window of recent samples is compared against a window of samples that came immediately before~\cite{Kifer2004,Liu2013a,Li2015b}.  
In our settings, given a window size $\winsize$, the most natural approach is to compare an \emph{empirical average} of $\AnyOp{x}$ over the last $\winsize$ samples with one computed on the $\winsize$ samples that came before, to approximate $\distphi$. When the difference is higher than a threshold, an alarm is raised. We refer to this simple algorithm as Sliding Window\footnote{While this simple algorithm appears several times in the literature~\cite{Kifer2004, Li2015b}, as far as we know it does not have a designated name.} (SW, Alg.~\ref{alg:ma}).}

%

Such \emph{model-free} methods are useful in a wide class of problems, since they can adapt to potentially any in-control situation. 
%
Despite these advantages, they can have a high memory footprint, since they store raw data that may be high-dimensional (see Tab.~\ref{tab:complexities} in Sec.~\ref{sec:newma}). 

\begin{figure}
\begin{algorithmic}
\REQUIRE Stream of data $\sample_t$, function $\AnyOpNA$, in-control value $\theta^\star$, forgetting factor $0<\updt<1$, threshold $\thres>0$
\STATE Initialize $\Sketch_{2B} =\frac{1}{B} \sum_{i=1}^B \AnyOp{x_i}$, $\Sketch'_{2B} =\frac{1}{B} \sum_{i=1}^{B} \AnyOp{x_{B+i}}$
\FOR{$t=2B+1,\ldots$}
\STATE $\Sketch_t = \Sketch_{t-1} + \frac{1}{B} (\AnyOp{\sample_{t-B}}- \AnyOp{\sample_{t-2B}})$
\STATE $\Sketch_t' = \Sketch_{t-1}' + \frac{1}{B} (\AnyOp{\sample_t}- \AnyOp{\sample_{t-B}})$
\IF{$\norm{\Sketch_t - \Sketch'_t} \geq \thres$}
\STATE Flag $t$ as a change-point
\ENDIF
\ENDFOR
\end{algorithmic}
\caption{Sliding Window (SW) (\eg,~\cite{Kifer2004})}
\label{alg:ma}
\end{figure}

\subsection{Contributions and outline of the paper} 

The main goal of this paper is to propose a method that gets the best of both worlds, that is, that does not store any raw data, like EWMA, while being simultaneously free of prior knowledge like SW. To this end, in Sec.~\ref{sec:newma}, we introduce ``No-prior-knowledge'' EWMA (NEWMA), based on a simple and intuitive idea: compute \textbf{two} EWMA statistics with \emph{different} forgetting factors, and flag a change when the distance between them crosses a threshold. 
We show that NEWMA mimics the behavior of the SW algorithm by implicitly comparing pools of recent and old samples, but \emph{without having to keep them in memory}. 
In Sec.~\ref{sec:MMD}, we show how choosing $\AnyOpNA$ as Random Features (RFs)~\cite{Rahimi2007} brings the method closer to kernel change-point detection~\cite{Harchaoui2009}, and in particular its online version the so-called Scan-$B$ algorithm~\cite{Li2015b}, while retaining low complexity and memory footprint.
In Section~\ref{sec:thres}, we examine how to set the detection threshold. We first review two classical ``parametric'' approaches, which are however generally not applicable in practice in model-free situations, then propose a numerical procedure for computing on-the-fly a dynamic threshold $\thres$, which empirically performs better than a fixed threshold. 
Experiments over synthetic and real data are presented in Sec.~\ref{sec:expe}, where we take advantage of a sublinear construction of RFs~\cite{Le2013} and, more strikingly, of a recent development in optical computing~\cite{Saade2016} that can compute RFs in $\order{1}$ for a wide range\footnote{The limitations are due to the optical hardware itself. Currently, state-of-the-art Optical Processing Units (OPU) can compute random features in constant time for $d$ and $m$ in the order of millions.} of dimensions $d$ and numbers of features $m$.
We show that our algorithm 
retrieves change-points at a given precision significantly faster than competing model-free approaches.

\section{Related Work}\label{sec:related}

As mentioned before, the idea of using several forgetting factors in recursive updates has been proposed in the so-called Multivariate EWMA (MEWMA)~\cite{Lowry1992,Khan2016a}, which uses a different factor for each coordinate of multivariate data, or to optimize the detection over different time-scales~\cite{Han2007}. It is different from NEWMA, which computes and compares two recursive averages over the same data. 
Closer to NEWMA, it has been pointed out to us that the idea of using several forgetting factors is used in a trading method called \emph{moving average crossover} (which, to the best of our knowledge, has never been published): it consists in computing two recursive averages over uni-dimensional data (such as pricing data), and interpreting the time when they ``cross'' (change relative position) as indicating a smoothed general trend of pricing going up or down. In addition to handling multi-dimensional data (which nullifies the meaning of going ``up'' or ``down''), NEWMA exploits these statistics in a very different way: it compute the \textbf{difference} between the two recursive averages, in order to extract time-varying information without keeping any sample in memory.
To the best of our knowledge, the key idea behind NEWMA has not been proposed before.

Dimension reduction methods such as \emph{sketching} have been used in the context of high-dimensional change-point detection~\cite{Xie2016}. In our notations, it corresponds to choosing a mapping~$\AnyOpNA$ which is dimension-reducing ($m\ll d$). While the authors in~\cite{Xie2016} then considers classical parametric methods in the new low-dimensional space, in a non-parametric context their approach could be combined with NEWMA for additional memory gain.

As described in Section \ref{sec:MMD}, when using RFs as the mapping~$\AnyOpNA$, our framework bears connection with the kernel change-point detection methodology~\cite{Harchaoui2009, Gar_Arl:2018}, in which the original estimator of the MMD based on a $U$-statistic is considered~\cite{Gretton2007} instead of averaged random features. In particular, an \emph{online} version of kernel change-point has been proposed in~\cite{Li2015b}, with the so-called Scan-$B$ algorithm. It is a variant of the sliding window approach, which however compares a window of recent samples with \emph{several} past windows, instead of only one as in SW.

Finally, the use of low-dimensional mappings or RFs have been proposed for fast anomaly detection (which is slightly different from change-point detection) in~\cite{Gopalan2018, Francis2018}, where the authors also describe how to exploit low-rank approximations to accelerate the method. In our paper, we show how NEWMA offers a different kind of acceleration in the context of change-point detection, especially when exploiting optical RFs~\cite{Saade2016}.


\section{Proposed algorithm}\label{sec:newma}



\begin{figure}
\begin{algorithmic}
\REQUIRE Stream of data $\sample_t$, function $\AnyOpNA$, forgetting factors $0<\updtp<\updt<1$, threshold $\thres>0$, initial value $\Sketch_0 = \Sketch'_0$
\FOR{$t=1,2,\ldots$}
\STATE $\Sketch_t = (1-\updt) \Sketch_{t-1} + \updt  \AnyOp{\sample_t}$
\STATE $\Sketch'_t = (1-\updtp) \Sketch'_{t-1} + \updtp  \AnyOp{\sample_t}$
\IF{$\norm{\Sketch_t - \Sketch'_t} \geq \thres$}
\STATE Flag $t$ as a change-point
\ENDIF
\ENDFOR
\end{algorithmic}
\caption{ \NEWMA{} (proposed)}
\label{alg:newma}
\end{figure}

\begin{figure}[h]
\centering
\includegraphics[width = 0.32\textwidth]{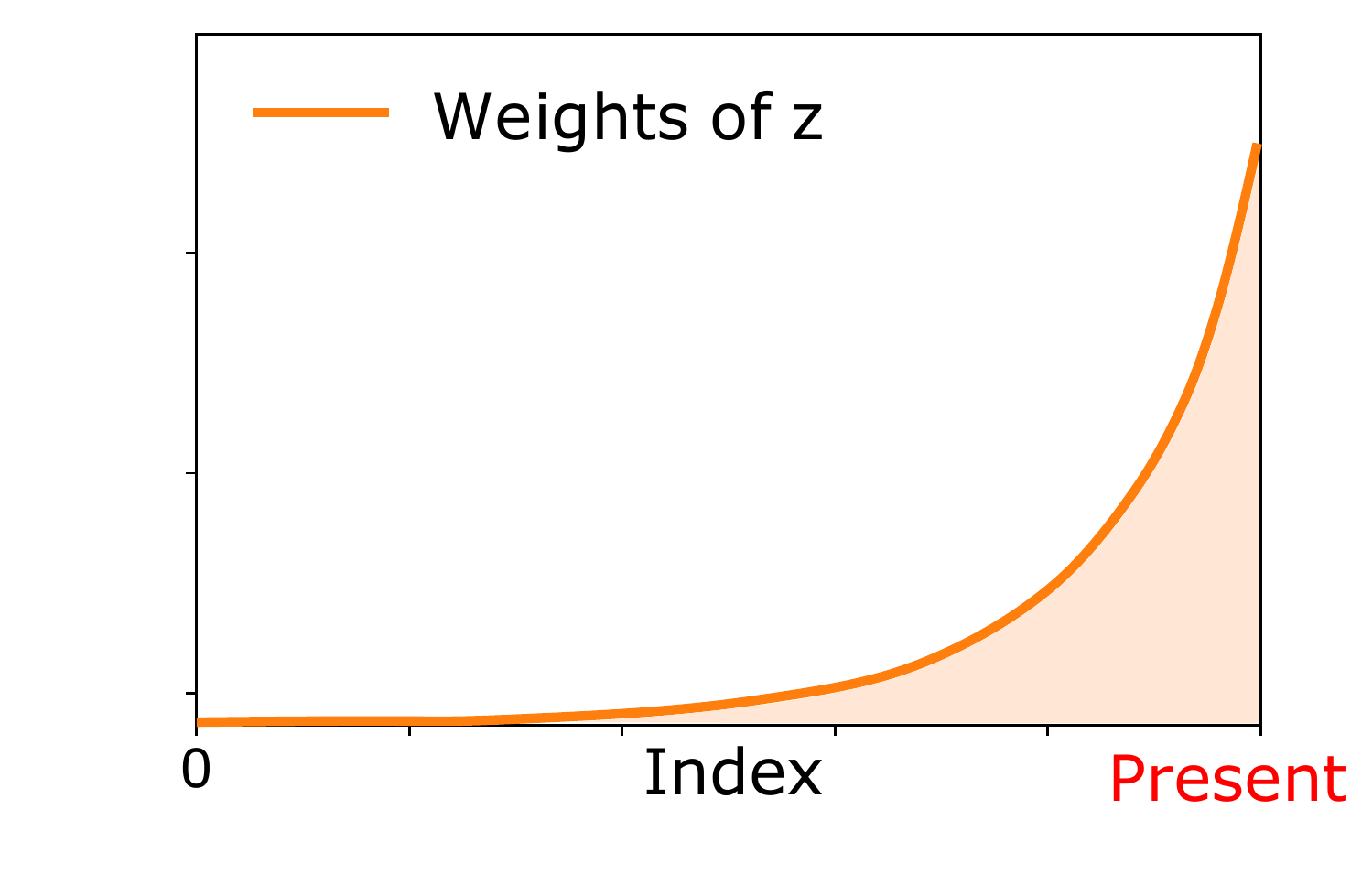}
\includegraphics[width = 0.32\textwidth]{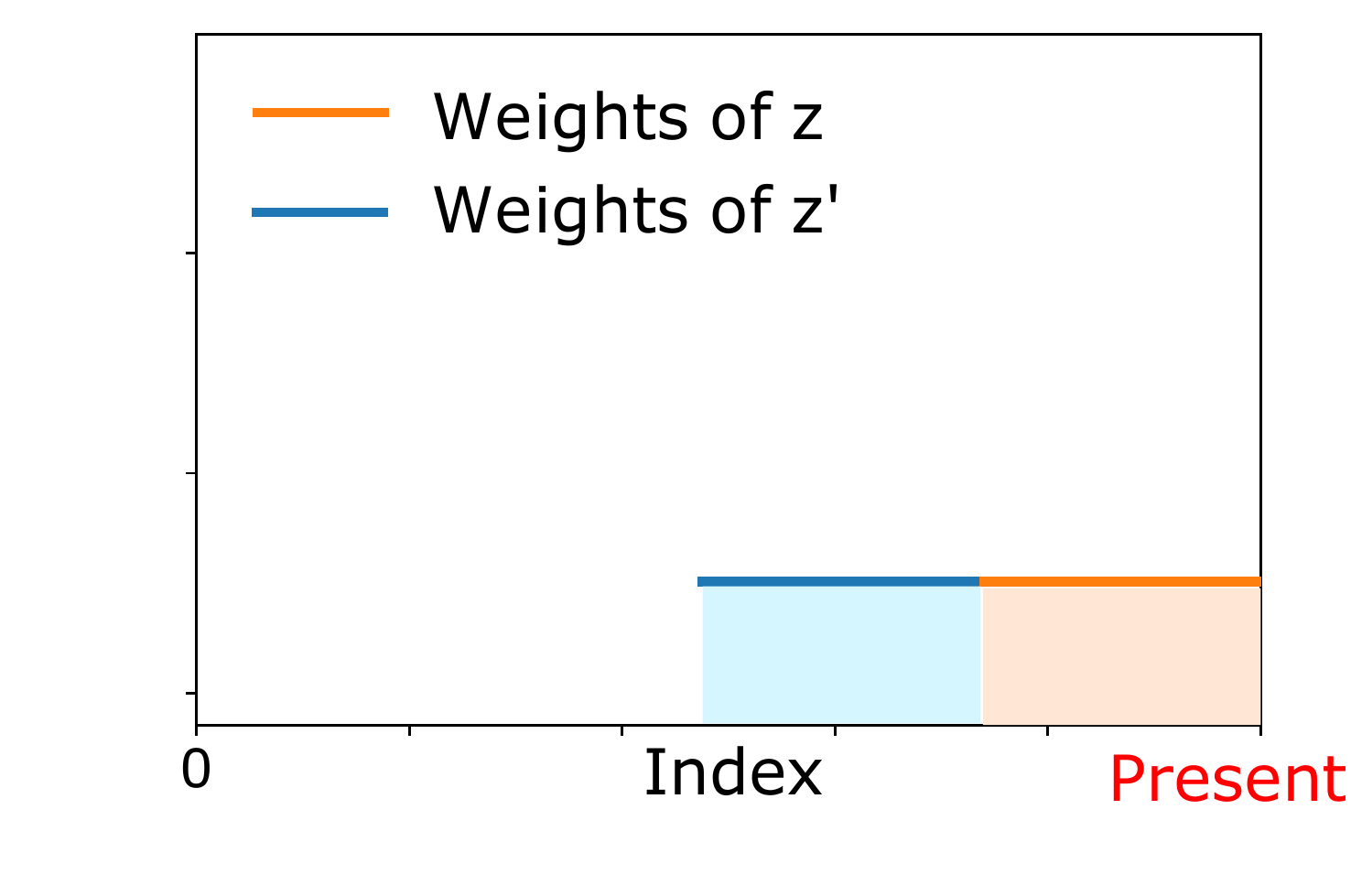}
\includegraphics[width = 0.32\textwidth]{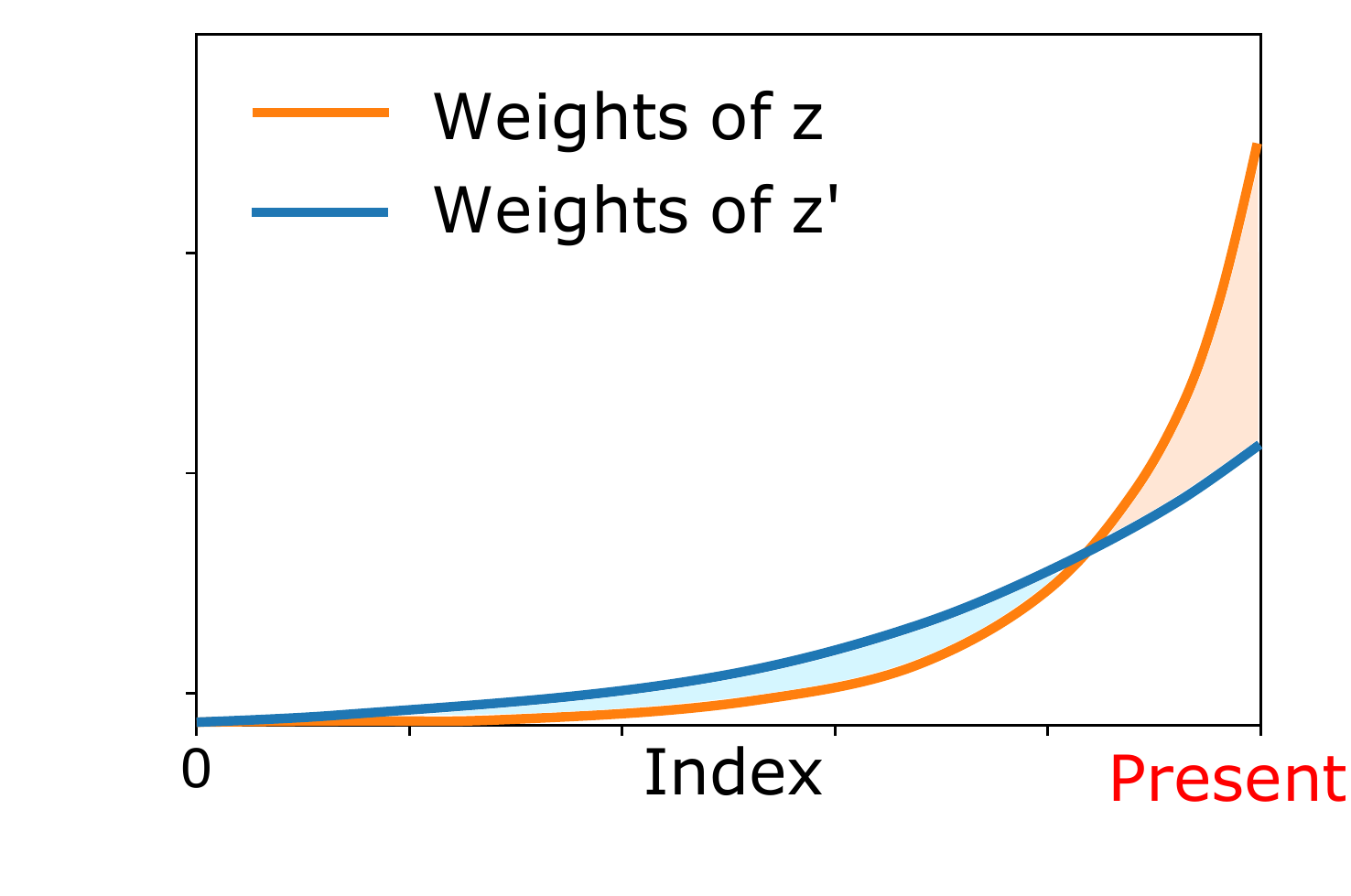}
\caption{\label{fig:illus_weights}Weights used in the empirical average computations in EWMA (top left), SW (top right), and NEWMA (bottom) algorithms as a function of time. 
In orange (resp. blue), the weights associated to the average $\Sketch_t$ (resp. $\Sketch'_t$). 
}
\end{figure}

\begin{table}[h]
\caption{\label{tab:complexities}Computational and memory footprint of the main algorithms discussed in this article. $C_\AnyOpNA$ (resp. $M_\AnyOpNA$) indicates time complexity (resp. the memory requirement) of computing $\AnyOpNA$ (see Sec. \ref{sec:RFcost}). In Scan-$B$, $N$ is the number of windows of size $\winsize$ considered.} 
\centering
\small
\begin{tabular}{lll}
\textbf{ALGORITHM} & \textbf{TIME} & \textbf{MEMORY} \\
\hline
\EWMA{} (Alg.~\ref{alg:ewma}) & $C_\AnyOpNA + m$  & $m + M_\AnyOpNA$ \\
\hline
\multicolumn{1}{c}{\emph{Model-free:}} & & \\
SW (Alg.~\ref{alg:ma}) & $C_\AnyOpNA + m$ & $Bd + m + M_\AnyOpNA$ \\
Scan-$B$~\cite{Li2015b} & $N B d$ & $NBd$ \\
\NEWMA{} (Alg.~\ref{alg:newma}) & $C_\AnyOpNA + m$ & $m + M_\AnyOpNA$ 
\end{tabular}
\end{table}




In this section we introduce the proposed algorithm NEWMA (Alg.~\ref{alg:newma}), give some of its basic theoretical properties, and derive heuristics to choose some of its hyperparameters.

\subsection{The NEWMA algorithm}

NEWMA is based on the following idea: compute \textbf{two EWMA statistics with different forgetting factors} $\updtp<\updt$, and raise an alarm when these two statistics are too far apart. The intuition behind this idea is simple: the statistic with the larger forgetting factor $\updt$ gives ``more importance'' to recent samples than the one that uses $\updtp$, so the distance between them should increase in case of a recent change.

To help illustrate this, in Fig.~\ref{fig:illus_weights}, we schematically represent the three different weighting procedures of EWMA, SW and NEWMA. As mentioned in the introduction: 
1) EWMA computes recursively one average with exponential weights, but requires prior knowledge of a control value to compare with; 
2) SW computes averages in two different time windows, but needs to keep in memory the last $2B$ samples for this purpose; 
and 3) by recursively computing two exponentially weighted averages with different forgetting factors, NEWMA compares pools of recent and old samples (see Prop.~\ref{prop:decomp_sketch} just below), but \emph{does not need to store them in memory}. \rev{By this point of view, NEWMA relies on the same principle as SW (which will be our main baseline for comparison), but is expected to be more efficient computationally.}

\rev{
\begin{remark}\label{rem:KL}
Unlike SW, the NEWMA algorithm is very specific to the use of generalized moments. Indeed, since SW has access to the raw data in two time windows, it could potentially estimate any generic metric between batches of samples, such as the KL-divergence, the total variation~\cite{Kifer2004} or the Wasserstein distance~\cite{Cheng2019}, although we mentioned in the introduction their potential issues in high dimension. On the contrary, NEWMA does not have access to the raw data, and is specifically based on computing on-the-fly generalized moments with different forgetting factors. Although a finite number of generalized moments can never capture all information for all probability distributions, we will see in Section \ref{sec:MMD} how \textbf{\emph{randomly chosen}} moments approximate the MMD, which is a true metric.
\end{remark}
}

In Table~\ref{tab:complexities}, we compare their computational costs, along with the Scan-$B$ algorithm of~\cite{Li2015b} described in Sec.~\ref{sec:related} and used in our experiments.
We can see that the complexity of MA and Scan-$B$ is generally dominated by the storage of the raw data, while \NEWMA{} has the same complexity as \EWMA{}. A crucial factor is the computational cost of $\AnyOpNA$, see Sec.~\ref{sec:MMD} for the case of kernel random features.

\subsection{Preliminary analysis of NEWMA}

Let us formalize a bit the intuition behind NEWMA, by showing first that it indeed computes implicitely a difference between empirical averages in two different time windows. The following, simple proposition is showed in App.~\ref{sec:proof-decomp-sketch} in the supplementary material.
%



\begin{proposition}[\textbf{Rewriting the detection statistic}]
\label{prop:decomp_sketch}
Define $\winsize = \winsize(\updtp,\updt) \eqdef \left\lceil \frac{\log\paren{\updt/\updtp}}{\log\paren{(1-\updtp)/(1-\updt)}}\right\rceil$, and run NEWMA (Alg.~\ref{alg:newma}).
Then, for any $t > \winsize$,
\[
\Sketch_t-\Sketch'_t = C \paren{\sum_{i=t - \winsize+1}^t a_i \AnyOp{\sample_i} - \paren{b_0 \Sketch_0 + \sum_{i=1}^{t-\winsize} b_i \AnyOp{\sample_i}}}
\, ,
\]
where $C = C(\updtp,\updt)\eqdef (1-\updtp)^\winsize - (1-\updt)^\winsize \in (0,~1)$, and $a_i,b_i$ are positive numbers which depend only on~$\updt$ and~$\updtp$, such that $\sum_{i=t-\winsize+1}^t a_i = 1$ and $\sum_{i=0}^{t-\winsize}b_i = 1$.
The exact expressions of
$a_i$ and~$b_i$, 
can be found in App.~\ref{sec:proof-decomp-sketch}. 
\end{proposition}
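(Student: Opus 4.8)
The plan is to unroll the two EWMA recursions explicitly and then re-group the resulting weighted sums around the cutoff time $t-\winsize$.

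First I would expand both statistics. Starting from $\Sketch_0=\Sketch'_0$, an easy induction on the recursion $\Sketch_t=(1-\updt)\Sketch_{t-1}+\updt\AnyOp{\sample_t}$ gives, for $t\ge 1$,
\[
\Sketch_t = (1-\updt)^t \Sketch_0 + \updt\sum_{i=1}^t (1-\updt)^{t-i}\AnyOp{\sample_i}\, ,
\]
and similarly for $\Sketch'_t$ with $\updtp$ in place of $\updt$. Subtracting,
\[
\Sketch_t-\Sketch'_t = \paren{(1-\updt)^t-(1-\updtp)^t}\Sketch_0 + \sum_{i=1}^t \paren{\updt(1-\updt)^{t-i}-\updtp(1-\updtp)^{t-i}}\AnyOp{\sample_i}\, .
\]
Write $c_i \eqdef \updt(1-\updt)^{t-i}-\updtp(1-\updtp)^{t-i}$ for the coefficient of $\AnyOp{\sample_i}$. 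The key elementary observation is a sign change: since $\updtp<\updt$, for $i$ close to $t$ (small exponent $t-i$) the term $\updt(1-\updt)^{t-i}$ dominates and $c_i>0$, whereas for $i$ small (large exponent) the term $\updtp(1-\updtp)^{t-i}$ dominates and $c_i<0$. Solving $c_i\ge 0$ for the exponent $j=t-i$ gives exactly $j < \log(\updt/\updtp)/\log((1-\updtp)/(1-\updt))$, which is why $\winsize$ is defined with that ceiling: $c_i\ge 0$ precisely for $i\ge t-\winsize+1$ and $c_i\le 0$ for $i\le t-\winsize$. I would also check $c_i$ is monotone in $j$ so there is a single crossing.

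Next I collect the two groups. Define the total positive mass $C_+ \eqdef \sum_{i=t-\winsize+1}^t c_i$ and the total negative mass $-C_- \eqdef \sum_{i=1}^{t-\winsize} c_i$. A telescoping computation using $\sum_{j=0}^{\winsize-1}\updt(1-\updt)^j = 1-(1-\updt)^\winsize$ shows $C_+ = \paren{1-(1-\updt)^\winsize} - \paren{1-(1-\updtp)^\winsize} = (1-\updtp)^\winsize-(1-\updt)^\winsize$, which is exactly $C$; one checks $C\in(0,1)$ from $0<\updtp<\updt<1$. For the negative part, summing all coefficients $\sum_{i=1}^t c_i = \paren{1-(1-\updt)^t}-\paren{1-(1-\updtp)^t}=(1-\updtp)^t-(1-\updt)^t$, and combining with the $\Sketch_0$ prefactor $(1-\updt)^t-(1-\updtp)^t = -\big((1-\updtp)^t-(1-\updt)^t\big)$ shows the $\Sketch_0$ coefficient exactly cancels the leftover, i.e. $(1-\updt)^t-(1-\updtp)^t + \sum_{i=1}^{t-\winsize}c_i = -C_-$ with $C_-=C$ as well (the total weight on old information, sample or initial, equals $C$). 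Then set $a_i \eqdef c_i/C$ for $i\in\IntegerInterval{t-\winsize+1}{t}$, $b_i \eqdef -c_i/C$ for $i\in\IntegerInterval{1}{t-\winsize}$, and $b_0 \eqdef \paren{(1-\updtp)^t-(1-\updt)^t}/C$; these are nonnegative by the sign analysis, depend only on $\updt,\updtp$ (note $c_i$ depends on $i$ only through $t-i$, and $b_0$ is handled by absorbing it—one can reindex so the statement's $b_0$ is genuinely $t$-independent, or simply note the bound $t>\winsize$ makes the split well-defined), and by construction $\sum a_i = 1$ and $b_0+\sum_{i=1}^{t-\winsize} b_i = 1$. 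Factoring out $C$ yields the claimed identity.

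The main obstacle is purely bookkeeping: getting the crossing index to match the ceiling in the definition of $\winsize$ exactly (off-by-one errors on whether $c_i$ at $i=t-\winsize$ is nonpositive), and verifying cleanly that the residual initial-condition weight combines with the negative-coefficient sum to give total mass exactly $C$ so that the $b_i$ normalize to one. Both reduce to the two telescoping identities above for geometric sums; there is no analytic difficulty, only careful index tracking, which is why the detailed expressions for $a_i,b_i$ are deferred to the appendix.
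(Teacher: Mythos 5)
Your proposal is correct and follows essentially the same route as the paper's proof: unroll both recursions into weighted sums, identify $t-\winsize$ as the single sign-change index of the coefficient difference (which is exactly how $\winsize$ is defined), and use the geometric-sum identities to show that the positive and negative masses both equal $C$ before normalizing. Your explicit attention to the boundary/off-by-one case and to the $t$-dependence of $b_0$ is a welcome extra care beyond what the paper writes.
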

We see that NEWMA indeed computes the difference between a weighted empirical average of $\AnyOp{\sample_i}$ over the last $\winsize$ samples (where $\winsize$ depends on $\updt$ and $\updtp$) and an empirical average over the samples that came before, and therefore that its behavior intuitively mimics that of SW, without the requirement to store raw data in memory.

Using Prop.~\ref{prop:decomp_sketch} and simple concentration inequalities, we can show basic probabilistic bounds on the detection statistic.
We recall that we designed our algorithm to detect changes through the lens of $\theta_\AnyOpNA(\Prob)=\Exp\AnyOp{\sample}$, and defined a pseudometric $\distphi$ in \eqref{eq:distphi}. The following proposition shows simple ``pointwise'' bounds on $\Sketch_t - \Sketch'_t$ under the null or when there is a change in the last $\winsize$ samples. Its proof, based on Mc Diarmid's concentration inequality, is given in App.~\ref{app:proof_pointwise} in the supplementary material. We note that such pointwise results are different from usual quantity examined in change-point detection such as the mean time between false alarm, which will be examined in Section \ref{sec:ARL}.

\begin{proposition}[\textbf{Bounds at a given time}]
\label{prop:pointwise_detection}
Suppose that $M \eqdef \sup_{\sample\in\RR^d} \norm{\AnyOp{\sample}} < \infty$. 
Let $t>\winsize$ be a fixed time point, and $\rho\in (0,1)$ be some probability of failure. 
\begin{enumerate}[label=(\roman*),leftmargin=15pt]
\item Assume that all samples $\sample_1,\ldots,\sample_t$ are drawn \iid from $\Prob$. 
Then, with probability at least $1-\pFail$, we have 
\begin{equation}\label{eq:pointwise_null}
\norm{\Sketch_t - \Sketch'_t} \leq \epsfund + \epsinit 
\, ,
\end{equation}
where 
$\epsfund = 4\sqrt{2}M\sqrt{(\updt+\updtp)\log(1/\pFail)}$ and $
\epsinit = \left[(1-\updtp)^t - (1-\updt)^t\right] \norm{\Sketch_0 - \Exp_\Prob \AnyOp{\sample}}$.
\item Assume that the last $\winsize$ samples are drawn \iid from a distribution $\Prob'$, and all the samples that came before are drawn \iid from $\Prob$ (that is, $\sample_{t-\winsize},\ldots,\sample_t \simiid \Prob'$ and $\sample_{1},\ldots,\sample_{t-\winsize} \simiid \Prob$). 
Then, with probability at least $1-\pFail$ on the samples, we have 
\begin{equation} \label{eq:pointwise_alter}
\norm{\Sketch_t - \Sketch'_t}\geq C \distphi(\Prob,\Prob') - \epsfund - \epsinit \, .
\end{equation}
where $C$ is defined as in Prop.~\ref{prop:decomp_sketch}.
\end{enumerate}

\end{proposition}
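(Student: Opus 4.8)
The plan is to expand both EWMA statistics explicitly, exhibiting $\Sketch_t - \Sketch'_t$ as a weighted sum of independent bounded random vectors plus a deterministic offset, and then run a bounded-differences argument. Unfolding the recursions of Alg.~\ref{alg:newma} gives $\Sketch_t = (1-\updt)^t\Sketch_0 + \updt\sum_{i=1}^t(1-\updt)^{t-i}\AnyOp{\sample_i}$, and likewise for $\Sketch'_t$ with $\updtp$ in place of $\updt$, hence
\[
\Sketch_t - \Sketch'_t = c_0\,\Sketch_0 + \sum_{i=1}^t w_i\,\AnyOp{\sample_i},
\]
with $w_i \eqdef \updt(1-\updt)^{t-i} - \updtp(1-\updtp)^{t-i}$ and $c_0 \eqdef (1-\updt)^t - (1-\updtp)^t$. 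This is exactly the decomposition of Prop.~\ref{prop:decomp_sketch}, simply regrouped: the $w_i$ with $i > t-\winsize$ sum to $C = (1-\updtp)^\winsize - (1-\updt)^\winsize$ and, normalized by $C$, are the $a_i$; the remaining $-w_i$ together with $-c_0$ give the $b_i$; and $\sum_{i=1}^t w_i = -c_0 = (1-\updtp)^t - (1-\updt)^t$, which I will reuse.

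Next I would compute the mean. Set $\theta^\star \eqdef \Exp_\Prob\AnyOp{\sample}$ and, in case (ii), $\theta' \eqdef \Exp_{\Prob'}\AnyOp{\sample}$. Under the null all $\AnyOp{\sample_i}$ have mean $\theta^\star$, so $\Exp[\Sketch_t - \Sketch'_t] = c_0\Sketch_0 + \big(\sum_i w_i\big)\theta^\star = c_0(\Sketch_0 - \theta^\star)$ and $\norm{\Exp[\Sketch_t - \Sketch'_t]} = \abs{c_0}\,\norm{\Sketch_0 - \theta^\star} = \epsinit$ exactly. Under the alternative, splitting the sum at $t - \winsize$ and using $\sum_{i > t-\winsize} w_i = C$ (hence $\sum_{i \le t-\winsize} w_i = -c_0 - C$) gives $\Exp[\Sketch_t - \Sketch'_t] = C(\theta' - \theta^\star) + c_0(\Sketch_0 - \theta^\star)$, so by the reverse triangle inequality $\norm{\Exp[\Sketch_t - \Sketch'_t]} \ge C\,\distphi(\Prob,\Prob') - \epsinit$.

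The concentration step is where the work lies. Let $G \eqdef \norm{\Sketch_t - \Sketch'_t - \Exp[\Sketch_t - \Sketch'_t]}$; since the expectation is deterministic, replacing $\sample_j$ by an independent copy moves $\Sketch_t - \Sketch'_t$ by $w_j(\AnyOp{\sample_j} - \AnyOp{\sample_j'})$ and hence changes $G$ by at most $2M\abs{w_j}$. The elementary bound I need is $\sum_{i=1}^t w_i^2 \le \updt + \updtp$, which follows from $(\alpha - \beta)^2 \le \alpha^2 + \beta^2$ for $\alpha,\beta \ge 0$ and summing two geometric series ($\updt^2\sum_{k\ge 0}(1-\updt)^{2k} = \updt/(2-\updt) \le \updt$, similarly for $\updtp$). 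A second-moment computation together with Jensen's inequality gives $\Exp G \le M\sqrt{\sum_i w_i^2} \le M\sqrt{\updt + \updtp}$, using that $\MeasSpace$ is a Hilbert space here (a dimension-free bound via the triangle inequality is available in the general normed case). McDiarmid's bounded-differences inequality then yields, with probability at least $1 - \pFail$, $G \le \Exp G + M\sqrt{2(\updt + \updtp)\log(1/\pFail)}$; collecting the two terms into $\epsfund$ (up to the stated, deliberately loose constant) finishes this step. Crucially, McDiarmid only needs the $\sample_i$ independent, not identically distributed, so the same bound on $G$ holds verbatim in case (ii).

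It remains to combine. On the event $\{G \le \epsfund\}$ of probability $\ge 1 - \pFail$: in case (i), $\norm{\Sketch_t - \Sketch'_t} \le \norm{\Exp[\Sketch_t - \Sketch'_t]} + G \le \epsinit + \epsfund$; in case (ii), $\norm{\Sketch_t - \Sketch'_t} \ge \norm{\Exp[\Sketch_t - \Sketch'_t]} - G \ge C\,\distphi(\Prob,\Prob') - \epsinit - \epsfund$. The proof is mostly bookkeeping; the two mildly delicate points are the geometric-series estimate $\sum_i w_i^2 \le \updt + \updtp$ (and keeping it consistent with the $a_i, b_i$ of Prop.~\ref{prop:decomp_sketch}) and bounding $\Exp G$ by a constant independent of the embedding dimension $m$ — which is exactly where the Hilbert structure of $\MeasSpace$ is convenient — while the precise numerical constant appearing in $\epsfund$ is immaterial to the argument.
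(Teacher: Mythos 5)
Your proof is correct, and it rests on the same decomposition and the same concentration tool (McDiarmid) as the paper, but it is organized differently in a way worth noting. The paper proves Prop.~\ref{prop:pointwise_detection} as a corollary of a more general lemma (Lemma~\ref{lem:conc}): it introduces ghost samples to quantify robustness when the change is not aligned with the window (the extra term $\Ewin$), splits the statistic into the two windows of Prop.~\ref{prop:decomp_sketch}, applies McDiarmid separately to $\norm{\sum_{i\le t-\winsize} b_i(\AnyOp{y_i}-\Exp_\Prob\AnyOp{y})}$ and to the analogous recent-window term, bounds each expectation by Rademacher symmetrization (giving $2M\sqrt{\sum b_i^2}$), and takes a union bound (whence a $\log(2/\pFail)$ in its lemma). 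You instead center the whole weighted sum at its exact expectation, apply McDiarmid once to $G=\norm{\Sketch_t-\Sketch'_t-\Exp[\Sketch_t-\Sketch'_t]}$ with increments $2M\abs{w_j}$, and bound $\Exp G\le M\sqrt{\sum_i w_i^2}$ by Jensen plus the orthogonality of independent centered vectors in the Hilbert space $\MeasSpace$ — the same Hilbert-structure fact the paper uses inside its symmetrization step. Your route is slightly tighter (no union bound, no factor $2$ from symmetrization, and it recovers the $\log(1/\pFail)$ actually written in the proposition rather than the lemma's $\log(2/\pFail)$), and it handles (i) and (ii) uniformly through the exact mean computation; what it gives up is the paper's extra generality, namely the explicit $\Ewin$ term controlling misalignment between the change-point and the window boundary, which the paper uses for the remark following the proposition. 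Both arguments share the same residual looseness in the constant of $\epsfund$ (absorbing $\Exp G$ into the deviation term requires $\log(1/\pFail)\gtrsim 1$), which you correctly flag as immaterial.
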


Prop.~\ref{prop:pointwise_detection} shows that, when no change occurs (under the null) the detection statistic is bounded with high probability, and when the last $\winsize$ samples are distributed ``differently'' from the previous ones, it is greater than a certain value with high probability. As expected, this difference is measured in terms of the pseudometric $\distphi$.
Note that a more precise statement can be found in App.~\ref{app:proof_pointwise} in the supplementary material.

\begin{remark}
For the sake of clarity, in Prop.~\ref{prop:pointwise_detection}, $(ii)$, we assumed that \emph{exactly} the last $\winsize$ samples were drawn from $\Prob'$, and that \emph{all} samples that came before were drawn from $\Prob$. In App.~\ref{app:proof_pointwise}, we show a more general result which explicits robustness to slight deviations from this assumption.
\end{remark}

\subsection{Choice of the forgetting factors $\updt$ and $\updtp$}\label{sec:optimal_param}

Although the role of the hyperparameters $(\updt,\updtp)$ in the NEWMA algorithm is simple to understand intuitively, it is not clear how to set their values at this stage. On the contrary, the window size $\winsize$ in Prop. \ref{prop:decomp_sketch} has a more \emph{interpretable} meaning: it is the number of recent samples compared with old ones. \rev{While it is known that the choice of a window size is a difficult problem and that there is no ``universally'' good choice, we assume that practitioners are more familiar with choosing a proper window size (sometimes by simple trial-and-error), than they are with choosing forgetting factors that may be difficult to interpret. 
Hence, in this section, we derive a simple heuristic to set both parameters $(\updt,\updtp)$ for a given $\winsize$, which we assume to be given by the user. Methods to guide the selection of $B$ are left for future investigations.} We build upon the theoretical results of the previous section. We note that choosing a forgetting factor for \EWMA{} is also known to be a difficult problem~\cite{Cisar2011}.

Our starting point is the expression of the window size~$\winsize$ derived in Prop.~\ref{prop:decomp_sketch}. 
We first note that a possible parameterization of NEWMA is through $\winsize$ and \emph{one} of the forgetting factors, say $\updt$: given $\winsize$ and $\updt > \frac{1}{B+1}$, there is a unique $\updtp = \updtp_{\updt,\winsize} \leq \frac{1}{B+1}$ such that $\frac{\log\paren{\updt/\updtp}}{\log\paren{(1-\updtp)/(1-\updt)}} = \winsize$ in Prop.~\ref{prop:decomp_sketch}.
Indeed, $f:x \mapsto x(1-x)^\winsize$ is increasing on $[0,\frac{1}{\winsize+1}]$ and decreasing on $[\frac{1}{\winsize+1},1]$, so the equation $f(x) = f(\updt)$ has exactly one solution in $[0,1]$ besides $\updt$ itself.
Thus $\updtp$ is uniquely defined by $(\winsize,\updt)$. 
We now turn to the choice of $\updt$ given a user-defined window size $\winsize$. 

%
From Prop.~\ref{prop:pointwise_detection}, we can see that the null hypothesis is intuitively distinguishable from the alternative if the bound under the null \eqref{eq:pointwise_null} is smaller than the guaranteed deviation \eqref{eq:pointwise_alter} when there is a change, that is, 
$
\epsfund + \epsinit \leq C \distphi(\Prob,\Prob') - \epsfund - \epsinit\, ,
$
which is equivalent to 
\[
\distphi(\Prob,\Prob') \geq 2(\epsfund + \epsinit)/C\, .
\]
Since we want our algorithm to be sensitive to the smallest possible change in  $\distphi(\Prob,\Prob')$, the previous reasoning suggest that a good choice for $\updt$ is to minimize the right-hand side of this expression. \rev{Note that $\varepsilon_1$ depends on the chosen probability of failure $\rho$: the smaller it is, the larger $\distphi(\Prob,\Prob')$ should be, however at a mild logarithmic rate.}

To obtain our final heuristic, we replace $\norm{\Sketch_0 - \Exp_\Prob \AnyOp{\sample}}$ by the upper bound $2M$ in in the expression of $\epsinit$, and we take $t=2\winsize$; since $\epsinit \xrightarrow[t\to \infty]{} 0$ and intuitively we consider that our algorithm must be ``applicable'' as soon as we have received twice the window size in data.
In definitive, for a user-defined $\winsize$, we propose the following heuristic to choose\footnote{Note that we discard the multiplicative constants as well as $\log\frac{1}{\rho}$, which we found to have negligible effect in practice.} $\updt$:
\[
\updt^\star \!\! = \!\! \argmin_{\updt \in (\frac{1}{\winsize+1},1)} \!\! \frac{\sqrt{\updtp_{\updt,\winsize} + \updt} + (1-\updtp_{\updt,\winsize})^{2\winsize} - (1-\updt)^{2\winsize}}{(1-\updtp_{\updt,\winsize})^\winsize - (1-\updt)^\winsize}
 ,
\]
where we recall that $\updtp_{\updt,\winsize}$ is the unique $\updtp$ such that $\frac{\log\paren{\updt/\updtp}}{\log\paren{(1-\updtp)/(1-\updt)}} = \winsize$. Once $\updt^\star$ is chosen, we naturally set the corresponding $\updtp^\star = \updtp_{\updt^\star,\winsize}$ to respect the window size $\winsize$. 
We note that $\updt^\star$ and $\updtp^\star$ do not have explicit expressions with respect to $B$, but they can easily be approximated by simple one-dimensional optimization schemes. \rev{In practice, given $\updt$ and $B$, we find $\updtp_{\updt,\winsize}$ with a simple gradient descent, and we select $\updt^\star$ with an exhaustive search over a fine grid of $[0,1]$.}

This heuristic is seen to yield good results in practice in many situations. We leave for future work a more rigorous proof of optimality in simpler settings (\emph{e.g.}, Gaussian data).

\section{Choice of $\AnyOpNA$: Random Features}\label{sec:MMD}


Let us now turn to the important choice of the embedding $\AnyOpNA$. 
We recall that $\AnyOpNA$ is user-defined, and that the algorithms studied in this paper are sensitive to variations in the collection of generalized moments $\theta_\AnyOpNA(\Prob) = \Exp_\Prob\AnyOp{\sample}$. As mentioned before, if the practitioner knows in advance which statistic is susceptible to vary, then $\AnyOpNA$ can be chosen accordingly. 
However, one does not necessarily have a priori knowledge on the nature of the change. In this section, we describe a generic embedding related to kernel metric on distributions.

\subsection{Maximum Mean Discrepancy}

For most $\AnyOpNA$, $\distphi(\Prob,\Prob')$ is only a \emph{pseudo}metric on probability distributions: for instance, when $\AnyOp{\sample} = \sample$, it can only distinguish distributions that have different means. 
Ideally however, one would like $\distphi$ to be a true metric, that is, we want $\distphi(\Prob, \Prob') = 0$ if, and only if, $\Prob = \Prob'$. 
Unfortunately, for any mapping $\AnyOpNA$ with values in a finite-dimensional space, $\distphi$ cannot be a true metric---otherwise $\theta_\AnyOpNA(\cdot)$ would be an isometry between an infinite-dimensional space and a finite-dimensional space.
In particular, this is the case for any $\AnyOpNA$ used in practice. 
Luckily, as described in the rest of this section, an interesting strategy is to leverage the Random Features methodology to obtain \emph{random} embeddings $\AnyOpNA$ such that $\distphi(\Prob,\Prob')$ approximates a true metric between distributions \emph{with high probability}.
%

A possible choice for such a metric is the Maximum Mean Discrepancy (MMD,~\cite{Gretton2007}). 
Given a positive definite kernel $\kernelPsi$ on $\RR^d$, take $\MeasSpace$ as the Reproducing Kernel Hilbert Space (RKHS\footnote{A good introduction to the theory of RKHSs is e.g. \cite{Berlinet2004}}) associated to $\kernelPsi$. 
If we set $\AnyOp{\sample} = \kernelPsi(\sample,\cdot)$ and $\norm{\cdot} = \norm{\cdot}_\MeasSpace$, then, in our notation, $\distphi(\Prob, \Prob') = \norm{\Exp_\Prob \kernelPsi(x,\cdot) - \Exp_\Prob \kernelPsi(x,\cdot)}_\MeasSpace$ is the MMD between $\Prob$ and $\Prob'$, that we denote by $\MMD(\Prob,\Prob')$. 
When the kernel $\kernelPsi$ is \emph{characteristic}, it is a true metric. 
Many conditions have been formulated over the years for $\kernelPsi$ to be characteristic~\cite{Sriperumbudur2010}, and for instance the Gaussian kernel is characteristic.
First introduced in the context of two-sample test, the MMD appears quite naturally in the context of kernel change-point detection~\cite{Harchaoui2009, Gar_Arl:2018} and in particular the online Scan-$B$ algorithm~\cite{Li2015b}. 

\subsection{Random Features} \label{sec:RF}
In practice, since $\AnyOp{\sample} = \kernelPsi(\sample,\cdot)$ cannot be stored in memory to compute the theoretical MMD, empirical estimates thereof are used. 
Such estimates usually make use of the so-called \emph{kernel trick}, and require the computation of a $U$-statistic depending on populations drawn from both distributions: it is for instance the method used in the kernel Scan-$B$ algorithm~\cite{Li2015b}.
Since we do not want to store samples when \NEWMA{} is running, least of all perform costly computations on these samples, we resort to kernel Random Features (RF,~\cite{Rahimi2007}), exploiting the simple fact \emph{the Euclidean distance between averaged random features approximates the MMD with high probability over the features}. RFs and MMD have been combined together before, for accelerating the estimation of the MMD~\cite{Sutherland2015} or as a mean to design random projections of distributions in an inverse-problem context~\cite{Gribonval2017}. We also note that alternatives to RFs have been studied in the MMD literature~\cite{Chwialkowski2015}, which is an interesting path for future work.



Let us briefly describe the RF machinery. 
Assume that the kernel $\kernelPsi$ can be written as
$
\kernelPsi(\sample,\sample') = \Exp_{\freq \sim \freqdist} \rfeat(\sample)\overline{\rfeat(\sample')}
$ 
for a family of functions $\rfeat:\RR^d\to \CC$ parameterized by $\freq \in \RR^q$, and a probability distribution $\freqdist$ on $\RR^q$. 
This is for instance the case for all translation-invariant kernels~\cite{Ber_Chr_Res:1984, Rahimi2007}: \rev{by Bochner's theorem, they can all be written under this form for complex exponentials $\rfeat(\sample) = e^{\textrm{i}\freq^\top \sample}$ and some symmetric distribution $\Gamma$. Using complex exponentials as $\rfeat$ is usually referred to as Random \emph{Fourier} Features (RFF). The most classical example is the Gaussian kernel $\kernelPsi(\sample, \sample') = e^{-\frac{\norm{\sample - \sample'}^2}{2\sigma^2}}$, which is written under this form for a Gaussian distribution $\Gamma = \mathcal{N}(0, \sigma^{-2} \Id)$.

For some (large) integer $m \in \mathbb{N}$, the RF paradigm consists in drawing $m$ parameters $\freq_1,\ldots,\freq_m\simiid \freqdist$ and defining $\AnyOpNA:\RR^d \to \CC^m$ as}
\begin{equation}\label{eq:defRF}
\AnyOp{\sample} \eqdef \frac{1}{\sqrt{m}}\paren{\feat_{\freq_j}(\sample)}_{j=1}^m
\, ,
\end{equation}
and taking $\norm{\cdot}$ as the classical Hermitian norm on $\CC^m$. A simple computation (see the proof of Prop.~\ref{prop:MMD-RF} in App.~\ref{app:proof_MMD}) then shows that $\distphi(\Prob,\Prob')\approx \MMD(\Prob,\Prob')$, with high probability over the $\freq_j$.
With this choice of $\AnyOpNA$, we have the following result similar to Prop.~\ref{prop:pointwise_detection}.

\begin{proposition}[\textbf{EWMA-RF pointwise detection}]
\label{prop:MMD-RF}
Suppose that $\sup_{\sample,\freq} \abs{\rfeat(\sample)} \leq M$. 
Define $\AnyOp{\cdot}$ as in Eq.~\eqref{eq:defRF}. 
Let $\pFail\in (0,1)$ be a probability of failure. 
Suppose that the assumptions of Prop.~\ref{prop:pointwise_detection}, $(ii)$ hold. Then, with probability at least $1-2\pFail$ on both samples $\sample_i$ \emph{and} parameters $\freq_j$, it holds that
\begin{align}
\label{eq:pointwise_MMD}
\norm{\Sketch_t - \Sketch'_t}\geq C \Big(\MMD^2(\Prob,\Prob') -\varepsilon_m \Big)_+^\frac12 - \epsfund - \epsinit
\, ,
\end{align}
where $(x)_+ = \max(x,0)$ and $\varepsilon_m = \tfrac{2\sqrt{2}M^2}{\sqrt{m}}\sqrt{\log\tfrac{1}{\pFail}}$.
%
\end{proposition}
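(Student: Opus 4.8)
The plan is to combine Proposition~\ref{prop:pointwise_detection}$(ii)$ with a concentration argument showing that, with high probability over the random frequencies $\freq_j$, the finite-dimensional pseudometric $\distphi(\Prob,\Prob')$ is close to $\MMD(\Prob,\Prob')$. The key point is that we only need a \emph{lower} bound on $\distphi$, so a one-sided deviation inequality suffices.

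First I would expand $\distphi(\Prob,\Prob')^2 = \norm{\Exp_\Prob\AnyOp{\sample} - \Exp_{\Prob'}\AnyOp{\sample}}^2$ using the definition~\eqref{eq:defRF}. This gives $\frac1m\sum_{j=1}^m \abs{\Exp_\Prob \feat_{\freq_j}(\sample) - \Exp_{\Prob'}\feat_{\freq_j}(\sample)}^2$, which is an empirical average of $m$ i.i.d.\ bounded terms (each term lies in $[0, 4M^2]$, since $\abs{\rfeat(\sample)}\le M$ implies $\abs{\Exp_\Prob\rfeat - \Exp_{\Prob'}\rfeat}\le 2M$). Its expectation over $\freq \sim \freqdist$ is exactly $\MMD^2(\Prob,\Prob')$, by the defining property $\kernelPsi(\sample,\sample') = \Exp_{\freq}\rfeat(\sample)\overline{\rfeat(\sample')}$ and the identity $\MMD^2(\Prob,\Prob') = \Exp_{\sample,\sample'\sim\Prob}\kernelPsi - 2\Exp_{\sample\sim\Prob,\sample'\sim\Prob'}\kernelPsi + \Exp_{\sample,\sample'\sim\Prob'}\kernelPsi$ (this is the ``simple computation'' referenced in the statement). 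Then Hoeffding's inequality on these $m$ bounded variables yields that, with probability at least $1-\pFail$ over the $\freq_j$,
\[
\distphi(\Prob,\Prob')^2 \geq \MMD^2(\Prob,\Prob') - \frac{2\sqrt{2}M^2}{\sqrt{m}}\sqrt{\log\tfrac1\pFail} = \MMD^2(\Prob,\Prob') - \varepsilon_m\, .
\]
(One-sided Hoeffding with range $4M^2$ gives deviation $\tfrac{4M^2}{\sqrt{2m}}\sqrt{\log(1/\pFail)} = \tfrac{2\sqrt2 M^2}{\sqrt m}\sqrt{\log(1/\pFail)}$, matching $\varepsilon_m$.) Taking positive parts and square roots, $\distphi(\Prob,\Prob') \geq (\MMD^2(\Prob,\Prob') - \varepsilon_m)_+^{1/2}$ on this event.

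Next I would invoke Proposition~\ref{prop:pointwise_detection}$(ii)$: under the stated assumptions on the samples, with probability at least $1-\pFail$ over the samples, $\norm{\Sketch_t - \Sketch'_t} \geq C\,\distphi(\Prob,\Prob') - \epsfund - \epsinit$. Note that $M = \sup_\sample \norm{\AnyOp{\sample}} \le \sup_{\sample,\freq}\abs{\rfeat(\sample)} \le M$ here because $\norm{\AnyOp{\sample}}^2 = \frac1m\sum_j\abs{\feat_{\freq_j}(\sample)}^2 \le M^2$, so the constant $M$ in Prop.~\ref{prop:pointwise_detection} is the same one appearing in the hypothesis, and the definitions of $\epsfund,\epsinit$ carry over unchanged. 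Finally, a union bound over the two failure events (one over samples, one over frequencies, which are independent so the union bound is valid and in fact slightly loose) gives that with probability at least $1 - 2\pFail$ both inequalities hold simultaneously; chaining them yields
\[
\norm{\Sketch_t-\Sketch'_t} \geq C\paren{\MMD^2(\Prob,\Prob') - \varepsilon_m}_+^{\frac12} - \epsfund - \epsinit\, ,
\]
which is exactly~\eqref{eq:pointwise_MMD}.

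The only genuinely delicate point is getting the constant in $\varepsilon_m$ right and making sure the boundedness of the summands is argued cleanly (the range is $4M^2$, not $2M^2$ or $M^2$); everything else is a routine application of Hoeffding plus a union bound. There is also a minor subtlety in that one should note $C = C(\updtp,\updt) \in (0,1)$ is deterministic, so multiplying the high-probability lower bound on $\distphi$ by $C$ preserves the probability, and the map $x \mapsto C\sqrt{x_+}$ is monotone, so substituting the lower bound on $\distphi^2$ into the bound on $\norm{\Sketch_t - \Sketch'_t}$ is legitimate.
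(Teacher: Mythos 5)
Your proposal is correct and matches the paper's proof essentially step for step: the paper also lower-bounds $\distphi(\Prob,\Prob')^2$ by $\MMD^2(\Prob,\Prob')-\varepsilon_m$ via a one-sided Hoeffding bound on the $m$ i.i.d.\ terms $\abs{\Exp_\Prob\feat_{\freq_j}-\Exp_{\Prob'}\feat_{\freq_j}}^2\in[0,4M^2]$ (whose mean over $\freq\sim\freqdist$ is $\MMD^2$), then combines this with Prop.~\ref{prop:pointwise_detection}$(ii)$ by a union bound. Your additional checks on the constant in $\varepsilon_m$ and on $\norm{\AnyOp{\sample}}\leq M$ are both accurate.
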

By the previous proposition, if the MMD between $\Prob$ and $\Prob'$ is large, then with high probability so is the deviation of $\norm{\Sketch_t - \Sketch'_t}$. The additional error $\sqrt{\varepsilon_m}$ is of the order of the previous error $\epsfund$ if $m = \order{(\updt + \updtp)^{-2}}$. 


The choice of a good kernel $\kernelPsi$ is a notoriously difficult problem. Ideally, one would choose it so as to maximize $\MMD(\Prob,\Prob')$, however neither $\Prob$ nor $\Prob'$ are known in advance in our setting. 
In practice, we use the Gaussian kernel.
Having access to some initial training data, we choose the bandwidth~$\sigma$  using the median trick as in~\cite{Li2015b}. 
We leave for future work more involved methods for kernel selection~\cite{Yang2012a}.

\subsection{Fast random features and computational cost}\label{sec:RFcost}

A crucial factor in the application of \NEWMA{} is the complexity of the mapping $\AnyOpNA$, both in computation time or memory footprint of the parameters necessary to compute it, which we respectively denoted by $C_\AnyOpNA$ and $M_\AnyOpNA$ in Table \ref{tab:complexities}. For usual RFFs~\cite{Rahimi2007}, computing $\AnyOp{\sample}$ requires storing the dense matrix of frequencies $\freq_j \in \RR^d$, and performing a costly matrix-vector product. Therefore, in this case both $C_\AnyOpNA$ and $M_\AnyOpNA$ scale as $\order{md}$, which somehow mitigates the computations advantages of using NEWMA over more costly methods.

However, a large body of work is dedicated to accelerate the computation of such random features. 
For instance, the Fastfood (FF) approximation~\cite{Le2013} reduces the time complexity to $\order{m\log d}$ and memory to $\order{m}$. 

More strikingly, in~\cite{Saade2016}, the authors build an Optical Processing Unit (OPU), to which we had had access for our experiments (Sec.~\ref{sec:expe}), that computes random features in $\order{1}$ and eliminates the need to store the random matrix. Let us briefly describe how the OPU operates. To compute some random mapping $\paren{\feat_{\freq_j}(\sample)}_{j=1}^m$, the signal $\sample$ is first encoded into a \emph{light beam} using a Digital Micromirror Device (DMD). The beam is then focused and \emph{scattered} through a layer of heterogeneous material, which corresponds to performing many random linear operations on the underlying signal. Then, the amplitude is measured by a camera, which adds a non-linearity on the output. The corresponding kernel, which is imposed by the physical device, is an elliptic kernel whose expression is given in~\cite{Saade2016}. We refer the reader to~\cite{Saade2016, Dremeau2015} for complete details on the process. In addition to being able to compute RFs in $\order{1}$ for $m,d$ in the order of millions (for current hardware), OPUs are also significantly less energy-consuming than classical GPUs.

We summarize the respective complexities of these three approaches in Table~\ref{tab:RF}.


\begin{table}[h]
\caption{\label{tab:RF}Time complexity $C_\AnyOpNA$ and memory requirement $M_\AnyOpNA$ for different Random Features schemes.}
\centering
\small
\begin{tabular}{l|lll}
& \textbf{RFF} & \textbf{FF}  & \textbf{OPU} \\ 
\hline 
$C_\AnyOpNA$ & $\order{md}$  & $\order{m \log d}$ & $\order{1}$ \\ 
$M_\AnyOpNA$ & $\order{md}$ & $\order{m}$ & $\order{1}$ \\ 
\end{tabular}
\end{table}

\section{Setting the threshold}\label{sec:thres}

In this section, we go back to the case of any general mapping $\AnyOpNA$.
An important question for any change-point detection method that tracks a statistic along time is how to set the threshold $\thres$ above which a change-point is detected. 
In this section, we begin by adapting to NEWMA two classical approaches that use the property of the algorithm under the null hypothesis. 
However, while they are interesting in their own right, these approaches generally cannot be directly used in practice since they require to know the in-control distribution $\Prob$. Hence we describe an efficient numerical procedure to dynamically adapt the threshold during a continuous run with multiple changes. 

\subsection{Mean time between false alarms}\label{sec:ARL}

A classical method to set the threshold $\thres$ is to adjust a desired mean time between false alarms under the null, defined as
%
%
$
\Tfalsealarm \eqdef \Exp\brac{\inf\set{t~|~t\text{ is flagged}}}, 
$
%
where the expectation is over the samples under the null (that is, drawn \iid from some distribution $\Prob$).
In the literature, it is often referred to as the Average Run Length (ARL) under control.

\edit{Unless strong assumptions are made on $\Prob$, it is often impossible to derive a closed-form expression for the ARL. A possible strategy is to estimate it using some training data, however this method is impractical in a continuous run with multiple changes. We will derive simpler strategies in the next sections.
For theoretical purposes, we nevertheless show that it is possible to adapt the Markov chain-based proof developed for classical EWMA in~\cite{Fu2002} to NEWMA: this method derives an expression for the ARL -- however, as we mentioned, it rarely has a closed-form expression and requires unreasonable prior knowledge. Unlike the results from the previous sections, our analysis is valid without any boundedness assumption on $\AnyOpNA$. We present our theorem in the unidimensional case $\MeasSpace = \RR$.

\begin{theorem}[\bf Average Run Length of NEWMA]\label{thm:ARLuni}
Assume that $\AnyOpNA:\RR^d \to \RR$ maps to a unidimensional space, and assume that $\AnyOp{X}$ has a density under the null. Denote by $F(x) \eqdef \PP_{X\sim \Prob}\paren{\AnyOp{X}\leq x}$ its cumulative distribution function. For any $\varepsilon>0$, define $\{a_1,\ldots a_M\}$ an $\varepsilon$-grid of $[-1/\varepsilon, 1/\varepsilon]$, that is, $M \eqdef \left\lceil \frac{2}{\varepsilon^2} \right\rceil$ and $a_i \eqdef (i-1)\varepsilon - 1/\varepsilon$. Then, consider the list of all couples $\vu_k = (a_i, a_j)$ such that $\abs{a_i - a_j} \leq \thres$, indexed by $1\leq k\leq K$ for some $K\leq M^2$ that depends on $\varepsilon$ and $\thres$. For any $\vu_{k_1} = (a_{i_1}, a_{j_1})$ and $\vu_{k_2} = (a_{i_2},a_{j_2})$, define
\[
p_{k_1 k_2} = \begin{cases} F(u_2) - F(u_1) &\quad \text{if $u_1 < u_2$} \\
0 &\quad\text{otherwise.} \end{cases}
\]
where
\[
\begin{cases}
u_1 &= \max\Big(\frac{1}{\updt}(a_{i_2}-(1-\updt)a_{i_1}-\varepsilon/2),\\
&\qquad\qquad\frac{1}{\updtp}(a_{j_2}-(1-\updtp)a_{j_1}-\varepsilon/2)\Big) \, , \\
u_2 &= \min\Big(\frac{1}{\updt}(a_{i_2}-(1-\updt)a_{i_1}+\varepsilon/2),\\
&\qquad\qquad\frac{1}{\updtp}(a_{j_2}-(1-\updtp)a_{j_1}+\varepsilon/2)\Big) \, .
\end{cases} 
\]
Define the matrix $\mP = [p_{k_1k_2}]_{k_1,k_2= 1}^K$. Then, we have
\begin{equation}\label{eq:ARLuni}
\Tfalsealarm = 1+ \sum_{\ell\geq 1} \lim_{\varepsilon\to 0}\ve_1^\top \mP^\ell \mathbf{1}_K
\, ,
\end{equation}
where $\Tfalsealarm$ is the ARL of NEWMA, $\mathbf{1} = [1,\ldots,1]^\top$ and $\ve_1 = [1,0,\ldots,0]^\top$.
\end{theorem}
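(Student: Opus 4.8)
The plan is to recognize that the pair process $(\Sketch_t,\Sketch'_t)_t$ is a time-homogeneous Markov chain on $\RR^2$, to approximate it by a finite Markov chain obtained by discretizing the state space with mesh $\varepsilon$, to express the expected absorption time of that finite chain, and then to let $\varepsilon\to 0$. This follows the strategy of \cite{Fu2002} for ordinary EWMA, adapted to a two-dimensional statistic and to the distance-type stopping rule $\abs{\Sketch_t-\Sketch'_t}\geq\thres$.

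First I would note that, from the recursions in Alg.~\ref{alg:newma}, $(\Sketch_t,\Sketch'_t)=\big((1-\updt)\Sketch_{t-1}+\updt\,\AnyOp{\sample_t},\,(1-\updtp)\Sketch'_{t-1}+\updtp\,\AnyOp{\sample_t}\big)$; since the $\sample_t$ are i.i.d.\ under the null, this is a time-homogeneous Markov chain whose transition kernel depends on the law of $\AnyOp{X}$ only through its c.d.f.\ $F$. The detection time $T\eqdef\inf\{t:\abs{\Sketch_t-\Sketch'_t}\geq\thres\}$ is then the hitting time of the absorbing set $A\eqdef\{(u,u'):\abs{u-u'}\geq\thres\}$, and $\Tfalsealarm=\Exp[T]$. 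Next I would set up the discretization: fix $\varepsilon$, form the grid $\{a_1,\dots,a_M\}$ of $[-1/\varepsilon,1/\varepsilon]$, and partition a truncation of $\RR^2$ into squares of side $\varepsilon$ centered at the couples $\vu_k=(a_i,a_j)$; those with $\abs{a_i-a_j}\leq\thres$ are the ``non-absorbing'' cells, the rest (and everything outside the box) are merged into $A$. The finite chain moves from the cell centered at $(a_{i_1},a_{j_1})$ by pretending the current state \emph{is} that center, applying one NEWMA step, and reading off the new cell: the new first (resp.\ second) coordinate is $(1-\updt)a_{i_1}+\updt\AnyOp{\sample}$ (resp.\ $(1-\updtp)a_{j_1}+\updtp\AnyOp{\sample}$), which lands in the slot of $a_{i_2}$ (resp.\ $a_{j_2}$) exactly when $\AnyOp{\sample}$ lies in an affine preimage interval; intersecting the two intervals gives $[u_1,u_2]$ as in the statement (no inequality flips, since $\updt,\updtp>0$), so the one-step probability is $F(u_2)-F(u_1)$ if $u_1<u_2$ and $0$ otherwise. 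With $\mP=[p_{k_1k_2}]$ the sub-stochastic matrix on non-absorbing cells, $\ve_1^\top\mP^\ell\mathbf{1}_K$ is the probability that the finite chain, started in the cell picked out by $\ve_1$ (the one containing $\Sketch_0=\Sketch'_0$), avoids $A$ for $\ell$ consecutive steps, and it equals $1$ for $\ell=0$.

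The final step combines two ingredients. The elementary one is that $T$ is $\NN$-valued, hence $\Tfalsealarm=\Exp[T]=\sum_{\ell\geq0}\PP(T>\ell)=1+\sum_{\ell\geq1}\PP(T>\ell)$ with $\PP(T>\ell)=\PP\big(\abs{\Sketch_s-\Sketch'_s}<\thres,\ s=1,\dots,\ell\big)$. The substantive one is that, for each \emph{fixed} $\ell$, $\ve_1^\top\mP^\ell\mathbf{1}_K\to\PP(T>\ell)$ as $\varepsilon\to0$: expanding $\ve_1^\top\mP^\ell\mathbf{1}_K$ as a sum over non-absorbing grid paths and writing each factor $p_{kk'}$ as an increment of $F$ exhibits it as a Riemann-type sum for the $\ell$-fold integral (in $\sample_1,\dots,\sample_\ell$, with law $F^{\otimes\ell}$ for $(\AnyOp{\sample_i})_i$) defining $\PP(T>\ell)$; the two errors — replacing the running state by the nearest cell center, an $\order{\varepsilon}$ perturbation inside the arguments of $F$, and truncating to $[-1/\varepsilon,1/\varepsilon]^2$ — vanish as $\varepsilon\to0$ because $F$ is a genuine (hence tight and continuous) c.d.f., all the maps involved are affine, and the boundary $\{\abs{u-u'}=\thres\}$ is $F^{\otimes\ell}$-null. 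Since the limit already sits inside the sum in \eqref{eq:ARLuni}, no interchange of limit and series is needed, and substituting into the identity above finishes the proof.

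The main obstacle is precisely the convergence $\ve_1^\top\mP^\ell\mathbf{1}_K\to\PP(T>\ell)$: for each fixed horizon $\ell$ one must control both the cell-center discretization error and the mass escaping the truncated box, and in particular handle the cells whose one-step transition interval $[u_1,u_2]$ straddles the absorbing boundary $\abs{u-u'}=\thres$. Everything else — the Markov property, the derivation of $p_{k_1k_2}$, and the identity $\Exp[T]=\sum_\ell\PP(T>\ell)$ — is routine once this Riemann-sum / dominated-convergence estimate is in place.
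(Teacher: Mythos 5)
Your proposal is correct and mirrors the paper's proof: both view $(\Sketch_t,\Sketch'_t)$ as a Markov chain under the null, discretize it on the $\varepsilon$-grid to obtain the sub-stochastic matrix $\mP$, combine $\Tfalsealarm=1+\sum_{\ell\geq1}\PP(T>\ell)$ with the identification of $\ve_1^\top\mP^\ell\mathbf{1}_K$ as the discrete chain's survival probability, and take $\varepsilon\to0$ for each fixed $\ell$, so that no interchange of limit and infinite sum is required. The only difference is in how the key convergence $\ve_1^\top\mP^\ell\mathbf{1}_K\to\PP(T>\ell)$ is argued: you sketch a Riemann-sum/perturbation estimate, while the paper couples the projected grid chain to the true chain (driven by the same samples) and proves almost-sure convergence at each fixed time via the contraction of the updates, the openness of the continuation region, and the density assumption making the boundary $\set{\abs{u-u'}=\thres}$ a null set --- the same ingredients you invoke, so the two arguments are essentially equivalent.
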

In App.~\ref{app:ARL} we prove a \rev{(quite notation-heavy)} more general version of this theorem in the case where $\AnyOpNA:\RR^d \to \RR^m$ is a multidimensional map, in which case the grid $\{a_i\}$ is replaced by an $\varepsilon$-net.

\rev{From \eqref{eq:ARLuni}, it is difficult to describe precisely the effect of the different parameters of NEWMA on its ARL. Naturally, the larger the threshold $\tau$ is, the higher the ARL is (in Theorem \ref{thm:ARLuni}, a higher threshold results in a larger $K$). Similarly, a larger window size $B$ in Proposition \ref{prop:decomp_sketch} intuitively results in a ``smoother'' algorithm and a higher ARL, although it is less obvious in the theoretical expression. In simple cases where $F$ is known, it is possible to perform numerical simulations using \eqref{eq:ARLuni}. In general,} it is impossible to exchange the infinite sum ``$\sum_{\ell\geq 1}$'' and the limit ``$\varepsilon\to 0$'' in \eqref{eq:ARLuni}, as this would require uniform convergence. However, in practice, for numerical purpose, we can fix a small $\varepsilon>0$ and use the identity $\sum_{\ell\geq 0} \mA^\ell = (\Id-\mA)^{-1}$ to approximate $\Tfalsealarm \approx \ve_1^\top (\Id - \mP)^{-1}\mathbf{1}$.
We illustrate this principle in Fig.~\ref{fig:ARL} with a Gaussian distribution $\AnyOp{X} \sim \mathcal{N}(0,1)$, using respectively Th.~\ref{thm:ARLuni} and the original approach by~\cite{Fu2002}, as well as numerical simulations of runs on synthetic data.}
As we mentioned before, when the in-control distribution $\Prob$ is not known (or, more precisely, when the cumulative distribution function $F$ is not known), Th.~\ref{thm:ARLuni} cannot be directly applied. In some cases~\cite{Li2015b}, one can obtain an \emph{asymptotic} expression for $\Tfalsealarm$ when $\tau \to \infty$ which does not depend on $\Prob$, which we leave for future work. 

\begin{figure}[ht]
\centering
\includegraphics[width=0.43\textwidth]{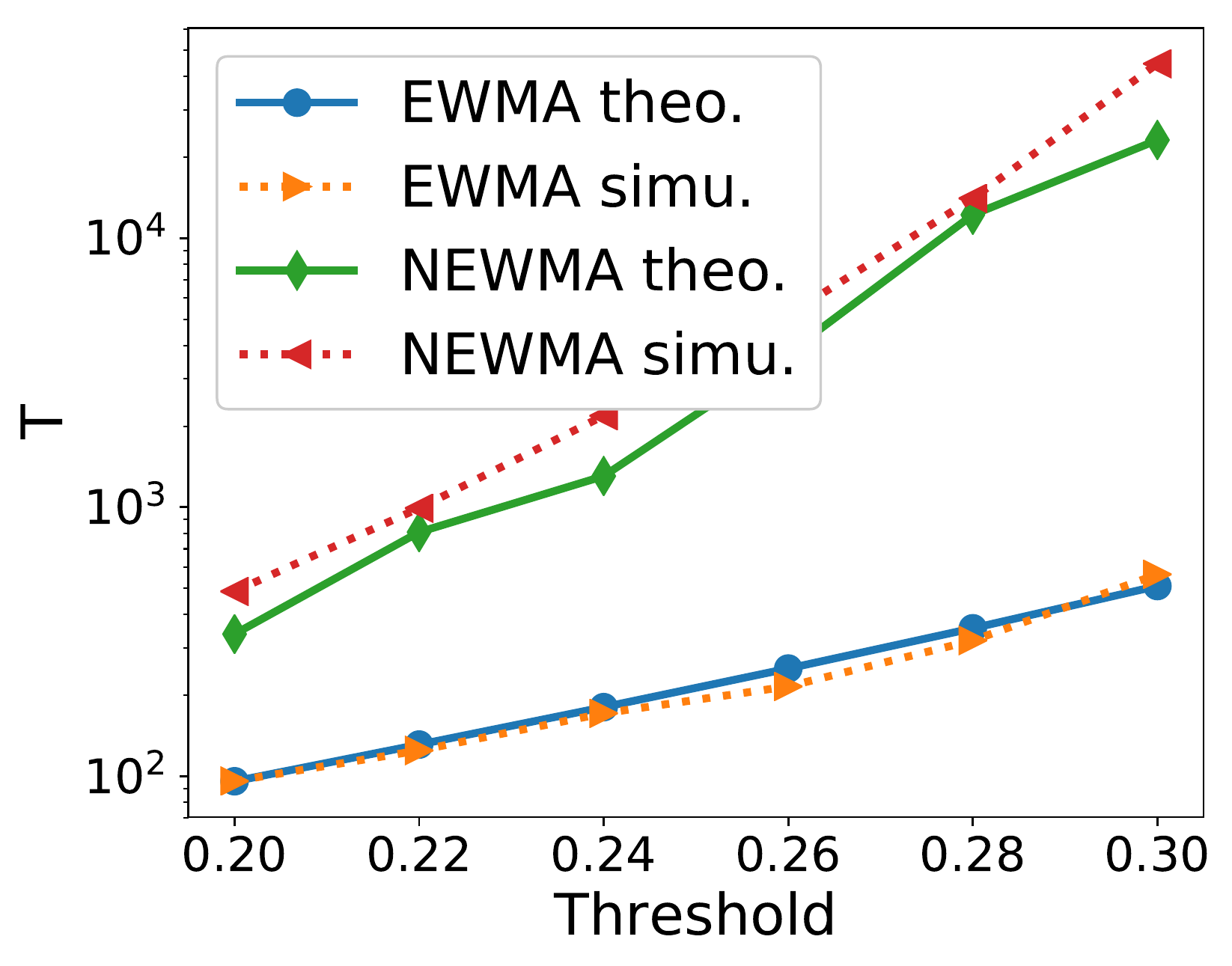}
	\caption{Comparison of the theoretical and observed values of $\Tfalsealarm$ for \NEWMA{} (resp. \EWMA{}) with respect to $\thres$,  
with $\Prob = \mathcal{N}(0,1)$, $\AnyOp{\sample} = \sample$, $\updt = 2\cdot 10^{-1}$ and $\updtp=10^{-1}$. 
The simulations are averaged over $1000$ runs, the theoretical expression is obtained with Th.~\ref{thm:ARLuni} (resp.~\cite{Fu2002}) with a grid of precision $\varepsilon=2\cdot 10^{-2}$. \rev{Here we do not qualitatively compare EWMA and NEWMA, but rather illustrate the quality of the theoretical approximations for the ARL.}}
\label{fig:ARL}
\end{figure}

\subsection{Asymptotic distribution under the null}

\edit{Another, arguably simpler} approach to set the threshold $\thres$ is to derive the distribution of $\norm{\Sketch_t - \Sketch_t'}$ under the null, and set $\thres$ to obtain a desired probability of exceeding it.
In this section, we derive an asymptotic result on the distribution of this statistic when $\updtp\to 0$ and $t \to \infty$. Unlike Prop.~\ref{prop:pointwise_detection}, where $\AnyOpNA$ is assumed uniformly bounded, it relies on the slightly weaker assumption that $\AnyOpNA$ has a finite fourth order moment.

\begin{theorem}[\textbf{Convergence under the null}]
\label{thm:limit}
Assume $\updtCst = \updt/\updtp >1$ is fixed, and let $\updt \to 0$, with $t\geq \frac{2}{\updtp}\log\frac{1}{\updtp}$.
Assume that all samples $\sample_i$ are drawn \iid from $\Prob$. 
Suppose that $\Exp_{\Prob} \norm{\AnyOp{\sample}}^4 < +\infty$.
Set $\mu \eqdef \Exp_\Prob \AnyOp{\sample}$, and $\kernel(\sample,\sample') \eqdef \inner{\AnyOp{\sample}-\mu,\AnyOp{\sample'}-\mu}_\MeasSpace$. 

Define the eigenvalues and eigenvectors of $\kernel$ in $L^2(\Prob)$, \ie, define $\eigen_\ell\geq 0$ and $\eigenfunc_\ell \in L^2(\Prob)$ such that $\kernel(\sample,\sample') = \sum_{\ell\geq 1}\eigen_\ell \eigenfunc_\ell(\sample)\eigenfunc_\ell(\sample')$ and $\inner{\eigenfunc_\ell,\eigenfunc_{\ell'}}_{L^2(\Prob)} = 1_{\ell = \ell'}$.

Then, 

\begin{equation}
\frac{1}{\updtp}\norm{\Sketch'_t-\Sketch_t}^2 \xrightarrow[\eta \to 0]{\mathcal{L}} Y \eqdef  \frac{(1-\updtCst)^2}{2(1+\updtCst)}\sum_{\ell\geq 1}\eigen_\ell W_\ell^2
\, ,\label{eq:conv}
\end{equation}
where $\left(W_\ell\right)_{\ell \geq 1}$ is an infinite sequence of independent standard normal random variables.
\end{theorem}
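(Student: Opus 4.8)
The plan is to unroll the two NEWMA recursions, discard an asymptotically negligible deterministic term, and recognize the remaining statistic as a weighted quadratic form in the centered features, whose limiting law is read off a Mercer expansion of $\kernel$, in the spirit of the classical computation of the null distribution of kernel two-sample statistics~\cite{Gretton2007}. First I would write, using $\Sketch_0 = \Sketch'_0$ and setting $\zeta_i \eqdef \AnyOp{\sample_i} - \mu$ (centered and i.i.d.),
\[
\Sketch'_t - \Sketch_t = \sum_{i=1}^t w_i\, \zeta_i + \big[(1-\updtp)^t - (1-\updt)^t\big]\paren{\Sketch_0 - \mu}\, , \qquad w_i \eqdef \updtp(1-\updtp)^{t-i} - \updt(1-\updt)^{t-i} \, .
\]
Since $t \geq \tfrac{2}{\updtp}\log\tfrac1{\updtp}$ forces $(1-\updtp)^t \leq \updtp^2$, the deterministic term has norm $\order{\updtp^2}$, so after dividing $\norm{\Sketch'_t - \Sketch_t}^2$ by $\updtp$ it is negligible (Slutsky) and it suffices to study $\tfrac1{\updtp}\norm{V_t}^2$ for $V_t \eqdef \sum_i w_i\zeta_i$. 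Setting $\sigma_t^2 \eqdef \sum_i w_i^2$, an elementary geometric-sum computation (dropping a $\order{\updtp^5}$ remainder, again via $(1-\updtp)^t\le\updtp^2$) gives $\sigma_t^2/\updtp \to \tfrac{(1-\updtCst)^2}{2(1+\updtCst)}$.

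I would then split $\norm{V_t}^2 = D_t + U_t$, with $D_t \eqdef \sum_i w_i^2 \norm{\zeta_i}^2$ and $U_t \eqdef \sum_{i\ne j} w_i w_j \inner{\zeta_i, \zeta_j}_\MeasSpace$, and handle the two parts separately. For the diagonal part, $\Exp D_t = \sigma_t^2 \Exp\norm{\zeta}^2$ and $\var{D_t} \leq (\max_i w_i^2)\,\sigma_t^2\,\var{\norm{\zeta}^2} = \order{\updtp^3}$ (this is where the fourth-moment hypothesis enters, through $\var{\norm{\zeta}^2}<\infty$), so $\tfrac1{\updtp}D_t$ converges in probability to $\tfrac{(1-\updtCst)^2}{2(1+\updtCst)}\,\Exp\norm{\zeta}^2 = \tfrac{(1-\updtCst)^2}{2(1+\updtCst)}\sum_{\ell}\eigen_\ell$, using $\Exp\norm{\zeta}^2 = \Exp\kernel(\sample,\sample) = \sum_\ell \eigen_\ell$. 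For the off-diagonal part, I substitute $\inner{\zeta_i,\zeta_j}_\MeasSpace = \sum_\ell \eigen_\ell \eigenfunc_\ell(\sample_i)\eigenfunc_\ell(\sample_j)$ and, with $S_\ell \eqdef \sum_i w_i \eigenfunc_\ell(\sample_i)$, write $U_t = \sum_\ell \eigen_\ell\big(S_\ell^2 - \sum_i w_i^2 \eigenfunc_\ell(\sample_i)^2\big)$. The $\eigenfunc_\ell$ are $\Prob$-centered (since $\Exp_\sample\kernel(\sample,\cdot)=0$), orthonormal in $L^2(\Prob)$, and have finite fourth moment (because $\eigenfunc_\ell(\sample)^2 \le \norm{\zeta_\sample}^2/\eigen_\ell$); a Lindeberg--Feller CLT applied to the triangular array $(w_i/\sigma_t)\eigenfunc_\ell(\sample_i)$ — the Lindeberg condition being immediate from $\max_i|w_i|/\sigma_t = \order{\updtp^{1/2}}\to0$ — together with the Cram\'er--Wold device gives $(S_1,\dots,S_L)/\sigma_t \xrightarrow{\mathcal{L}} (W_1,\dots,W_L)$ for every fixed $L$ (the exact identity covariance coming from orthonormality), while a second-moment estimate yields $\tfrac1{\sigma_t^2}\sum_i w_i^2 \eigenfunc_\ell(\sample_i)^2 \to 1$ in probability; combining, $\tfrac1{\sigma_t^2}\sum_{\ell\le L}\eigen_\ell\big(S_\ell^2 - \sum_i w_i^2\eigenfunc_\ell(\sample_i)^2\big) \xrightarrow{\mathcal{L}} \sum_{\ell\le L}\eigen_\ell(W_\ell^2-1)$.

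It remains to send $L\to\infty$, for which I would establish a tail bound uniform in $t$: with $\kernel_{>L}(\sample,\sample')\eqdef\sum_{\ell>L}\eigen_\ell\eigenfunc_\ell(\sample)\eigenfunc_\ell(\sample')$ still $\Prob$-centered, only coincident index pairs survive in the second moment, so
\[
\Exp\Big(\tfrac1{\sigma_t^2}\sum_{i\ne j} w_i w_j\, \kernel_{>L}(\sample_i,\sample_j)\Big)^2 \;\le\; 2\,\frac{\big(\sum_i w_i^2\big)^2}{\sigma_t^4}\sum_{\ell>L}\eigen_\ell^2 \;=\; 2\sum_{\ell>L}\eigen_\ell^2 \xrightarrow[L\to\infty]{} 0 \, ,
\]
the series converging since $\sum_\ell \eigen_\ell^2 = \Exp\kernel(\sample,\sample')^2 \le (\Exp\norm{\zeta}^2)^2 < \infty$. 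A standard converging-together argument then gives $\tfrac1{\sigma_t^2}U_t \xrightarrow{\mathcal{L}} \sum_\ell \eigen_\ell(W_\ell^2-1)$, hence $\tfrac1{\updtp}U_t \xrightarrow{\mathcal{L}} \tfrac{(1-\updtCst)^2}{2(1+\updtCst)}\sum_\ell\eigen_\ell(W_\ell^2-1)$; adding the in-probability limit of $\tfrac1{\updtp}D_t$ (Slutsky) and the negligible deterministic term yields $\tfrac1{\updtp}\norm{\Sketch'_t - \Sketch_t}^2 \xrightarrow{\mathcal{L}} \tfrac{(1-\updtCst)^2}{2(1+\updtCst)}\sum_\ell \eigen_\ell W_\ell^2 = Y$. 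I expect this final interchange — the uniform-in-$t$ control of the Mercer tail of the degenerate quadratic part, which is what lets the infinite sum and the $\updtp\to0$ limit be swapped — to be the main obstacle; the rest reduces to bookkeeping with geometric series and with second- and fourth-moment bounds.
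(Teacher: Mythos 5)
Your proof is correct and follows essentially the same route as the paper's (which adapts Serfling, Sec.~5.5.2): drop the initialization term, expand the centered kernel via Mercer, apply a multivariate Lindeberg CLT to the finite-rank part, and control the Mercer tail by a second-moment bound that is uniform in $\updtp$ so the two limits can be interchanged. The only organizational differences are that you split the quadratic form into diagonal and off-diagonal parts (handling the former by a law of large numbers) where the paper truncates the full V-statistic directly, and that you conclude with the standard converging-together lemma where the paper runs an explicit $\varepsilon/3$ characteristic-function argument; neither changes the substance.
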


\begin{figure}[ht]
\centering
\includegraphics[width=0.4\textwidth]{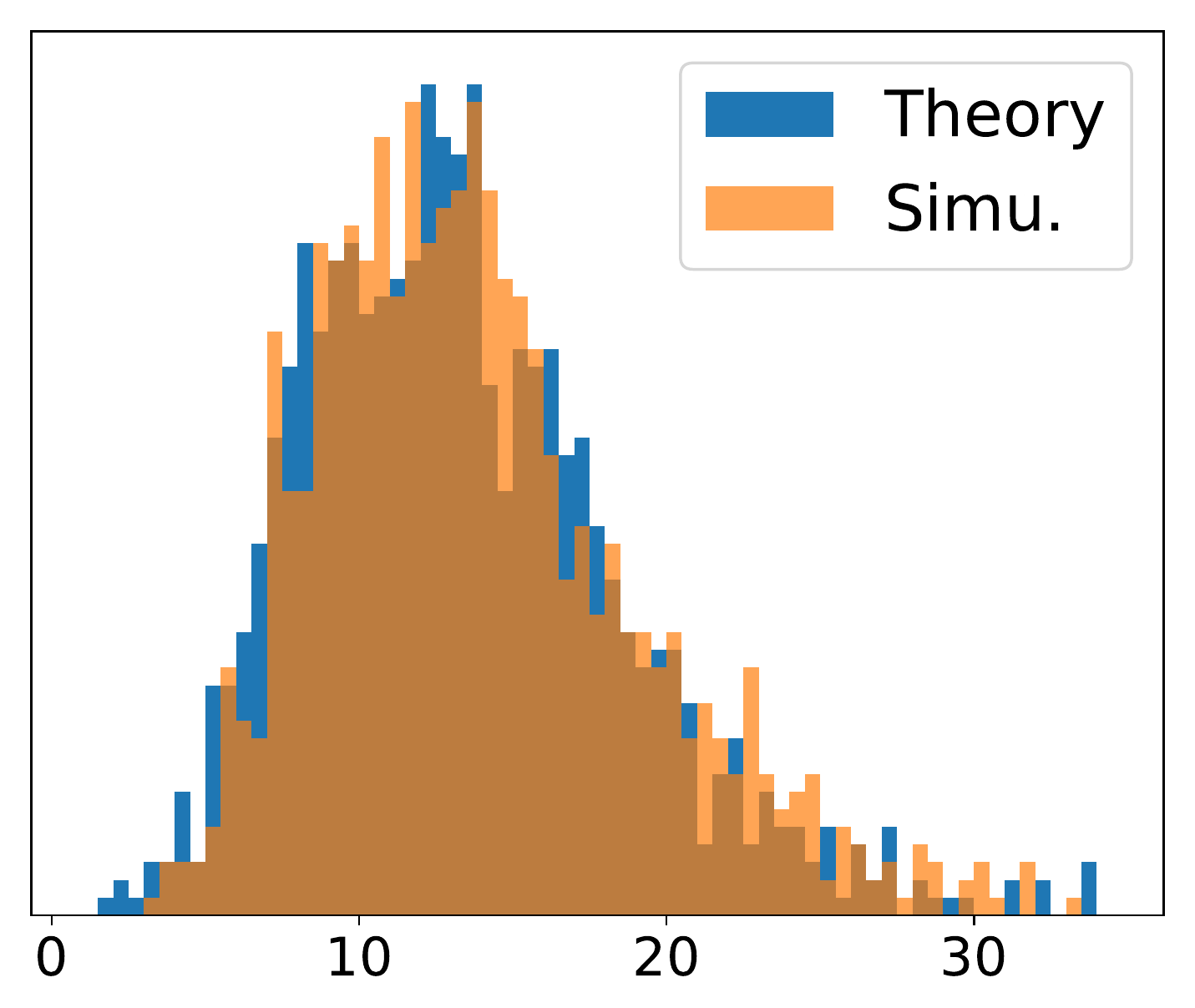}
\caption{Distribution of $\frac{1}{\updtp}\norm{\Sketch_t-\Sketch'_t}^2$ when $\updtp\to 0$ as predicted by Th.~\ref{thm:limit}, on a toy example. Namely,~$\Prob$ is the uniform distribution on $[0,1]$ and $\AnyOpNA(\sample) = \brac{\sqrt{\eigen_\ell}\eigenfunc_\ell(\cdot)}_{\ell=1}^{30}$ is defined as a collection of eigenfunctions $\eigenfunc_{\ell}(x) = \sqrt{2}\cos(2\pi \ell x)$, where the eigenvalues $\left(\eigen_{\ell}\right)_{\ell=1}^{30}$ are randomly generated. We perform $1000$ simulations of both Eq.~\eqref{eq:conv} and \NEWMA{} with $\updt=2\cdot 10^{-2}, \updtp = 10^{-2}$.}
\label{fig:limitdistrib}
\end{figure}

The proof, given in App. \ref{app:limit}, follows closely~\cite{Serfling1980} (Sec.~5.5.2) adapted to our setting, with the use of a multivariate version of Lindeberg's central limit theorem (recalled as Th.~\ref{thm:lindeberg} in the supplementary) instead of the classical Central Limit Theorem.
Th.~\ref{thm:limit} allows to set the threshold $\thres$ if the eigenvalues $\eigen_\ell$ are (approximately) known, for instance they can be estimated using the Gram matrix of $\kernel$ on training data~\cite{Gretton2009}, which we leave for future work. In Fig.~\ref{fig:limitdistrib}, we illustrate the result on a toy example. 

\subsection{Choice of an adaptive threshold}\label{sec:adapt_thres}

\begin{figure}
\begin{algorithmic}
\small
 \REQUIRE Stream of detection statistics $S_t$, estimation rate $0<\alpha<1$, coefficient $a$
\STATE Initialize $\mu_0 = 0$ (mean), $\mu^{(2)}_0 = 0$ (second order moment)
 \FOR{$t=1,2,\ldots$}
  \STATE $\mu_t = (1-\alpha) \mu_{t-1} + \alpha S_t^2$
  \STATE $\mu^{(2)}_t = (1-\alpha) \mu^{(2)}_{t-1} + \alpha S_t^4$
  \STATE $\sigma_t = \sqrt{\mu^{(2)}_t - \mu_t^2}$
  \IF{$S_t^2 \geq \mu_t + a \sigma_t $}
  \STATE Flag $t$
  \ENDIF
  \ENDFOR
 \end{algorithmic}
 \caption{Adaptive Threshold procedure for any online change-point algorithm $S_t$, under the assumption that $S_t^2$ is approximately Gaussian.}
 \label{alg:adapt_thres}
\end{figure}

The two strategies presented above are generally difficult to carry out in practice, since we do not know the in-control distribution $\Prob$. \edit{If we had access to training data, a classical method would be to estimate either the ARL or the distribution of $S_t \eqdef \norm{\Sketch'_t-\Sketch_t}$ during a ``Phase 1'' estimation, before the actual run. In the case where we do not have access to training data beforehand, we propose to adapt the second strategy and perform an \emph{online} estimation of the distribution of the statistic $S_t$, which yields a dynamic threshold $\thres_t$ that can adapt to multiple changes in a continuous run (Fig.~\ref{fig:adapt}).

According to Thm.~\ref{thm:limit}, $S_t^2$ asymptotically follows a distribution formed by a linear combination of an infinite number of independent centered normalized Gaussians with unknown weights $\xi_\ell$. While it would be possible to estimate these weights with relatively heavy computations by maintaining a Gram matrix~\cite{Gretton2009}, in the spirit of the paper we propose a light method that assume that $S_t^2$ itself is approximately Gaussian: indeed, it is easy to see that with additional assumptions on the $\xi_\ell$, generalizations of the Central Limit Theorem (see e.g. Th. \ref{thm:lindeberg}) would guarantee that with proper normalization the r.h.s. of \eqref{eq:conv} converges to a Gaussian (details are omitted here).

Hence, if we consider $S_t^2$ to be Gaussian, we just need to estimate its mean $\mu_t$ and standard deviation $\sigma_t$, which we do with a simple online estimation procedure using exponential weights with a learning rate $\alpha$ (Alg.~\ref{alg:adapt_thres}), to continuously adapt in the case of multiple changes. Then the threshold at time $t$ is set as $\thres_t^2 = \mu_t + a \sigma_t$, where $a$ is chosen according to the desired quantile of the normal distribution (\eg, $a=1.64$ for $5\%$ of false alarms). \rev{Recall that the amplitude of a change between $\Prob$ and $\Prob'$ is approximately described by \eqref{eq:pointwise_alter} (and \eqref{eq:pointwise_MMD} when using RFs), and that we can expect a successful detection when it is higher than the threshold.}

While this method relies on a heuristic than may not necessarily be satisfied, we found in practice this adaptive procedure to perform better than any fixed threshold, while avoiding having to set it. In our experiments, we applied the same strategy to other change-point detection algorithms that produce positive statistics $S_t$ such as SW or Scan-$B$, and found the procedure to perform extremely well in each case. We leave its theoretical analysis for future work, and emphasize again that, in case where training data is available, more complex or computationally intensive procedures could be used.
}

\begin{figure}[ht]
\centering
\includegraphics[width = 0.43\textwidth]{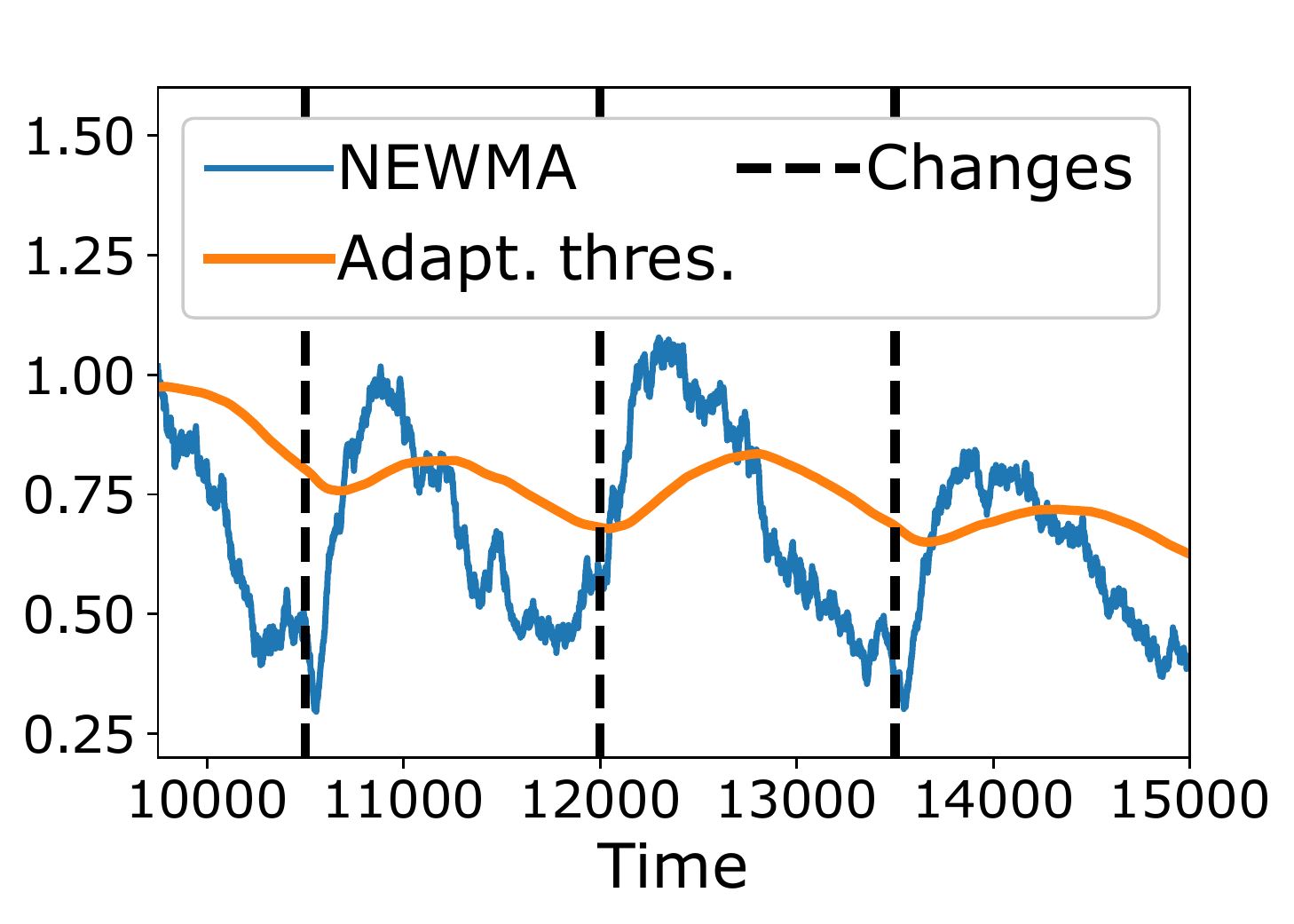}
\caption{Illustration of the adaptive threshold procedure. The dotted line indicate a change, the blue line is the NEWMA statistic $\norm{\Sketch_t - \Sketch'_t}$, and in yellow line is the adaptive threshold computed online as described in Sec.~\ref{sec:adapt_thres}.}
\label{fig:adapt}
\end{figure}

\section{Experiments}\label{sec:expe}

In this experimental section we compare several model-free approaches: \NEWMA{} where $\AnyOpNA$ is one of the three different random features schemes described in Sec.~\ref{sec:MMD}: classical RFFs~\cite{Rahimi2007}, FF~\cite{Le2013}, or OPU~\cite{Saade2016}, the SW algorithm (Alg. \ref{alg:ma},~\cite{Kifer2004}) with RFFs, and the kernel Scan-$B$ algorithm~\cite{Li2015b} with $N=3$ windows.
Scan-$B$ is implemented with a Gaussian kernel \rev{$\kernelPsi(\sample, \sample') = e^{-\frac{\norm{\sample-\sample'}^2}{2\sigma^2}}$} with a bandwidth $\sigma$ chosen by the median trick. All other methods use RFs that correspond to the same kernel, \rev{either complex exponentials for RFFs (see Section \ref{sec:RFcost}) or the Fastfood (FF) method~\cite{Le2013}}, except when using the OPU, for which the RFs and corresponding kernel are imposed by the optical hardware~\cite{Saade2016}.

\begin{remark}
\rev{We do not compare NEWMA with parametric methods such as GLR or CUSUM, or methods requiring prior knowledge such as EWMA, since the settings are very different and fair comparison would be difficult. In the presence of parametric modelling assumptions or prior knowledge, we naturally expect the algorithms exploiting them to perform better than model-free methods such as SW, Scan-$B$ or NEWMA.}
\end{remark}

The experiments run on a laptop with an Intel Xeon Gold 6128 3.40GHz. 
%
The code is available at \url{https://github.com/lightonai/newma}.

\begin{figure}[h!]
\centering
\includegraphics[width = 0.4\textwidth]{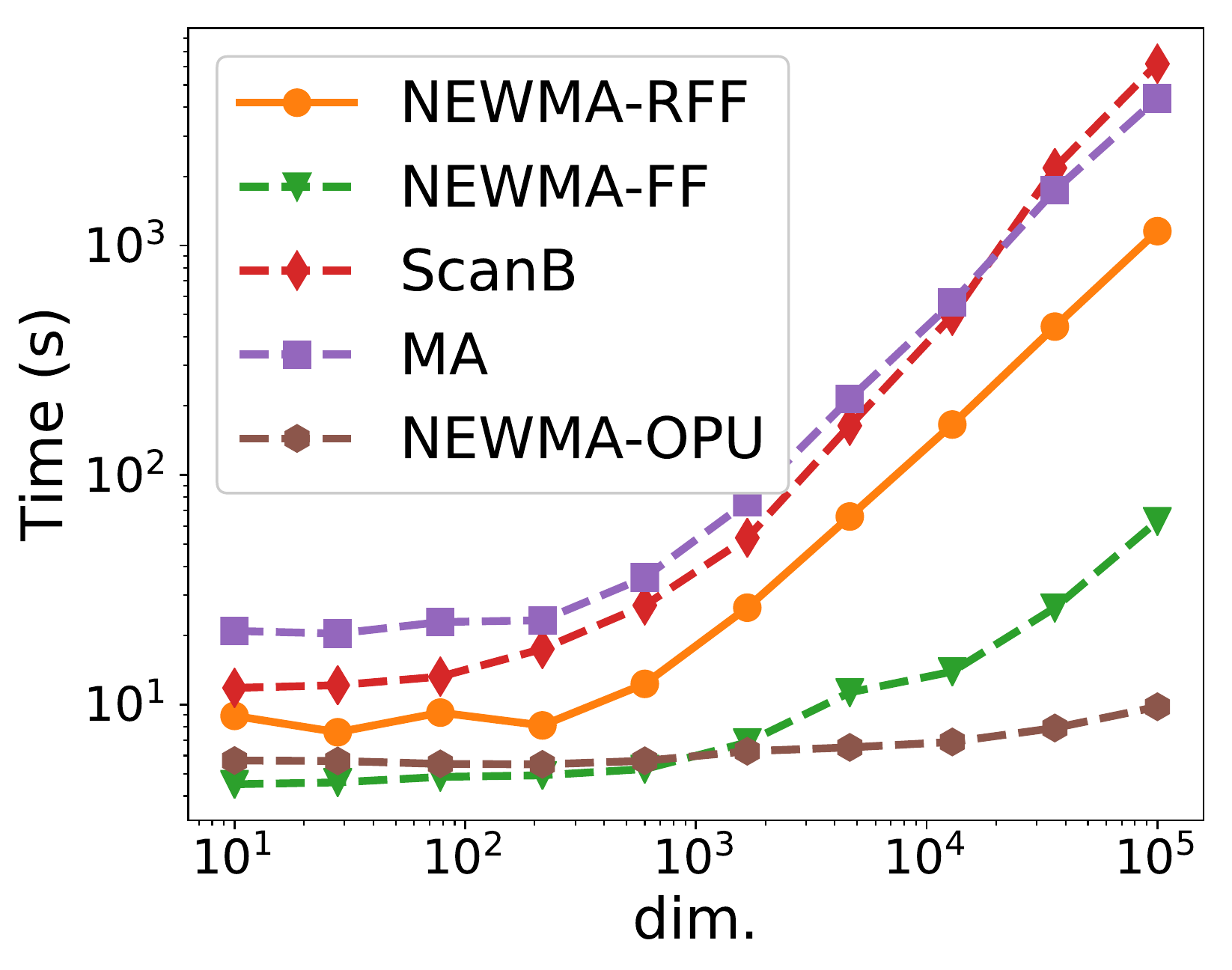}
\includegraphics[width = 0.4\textwidth]{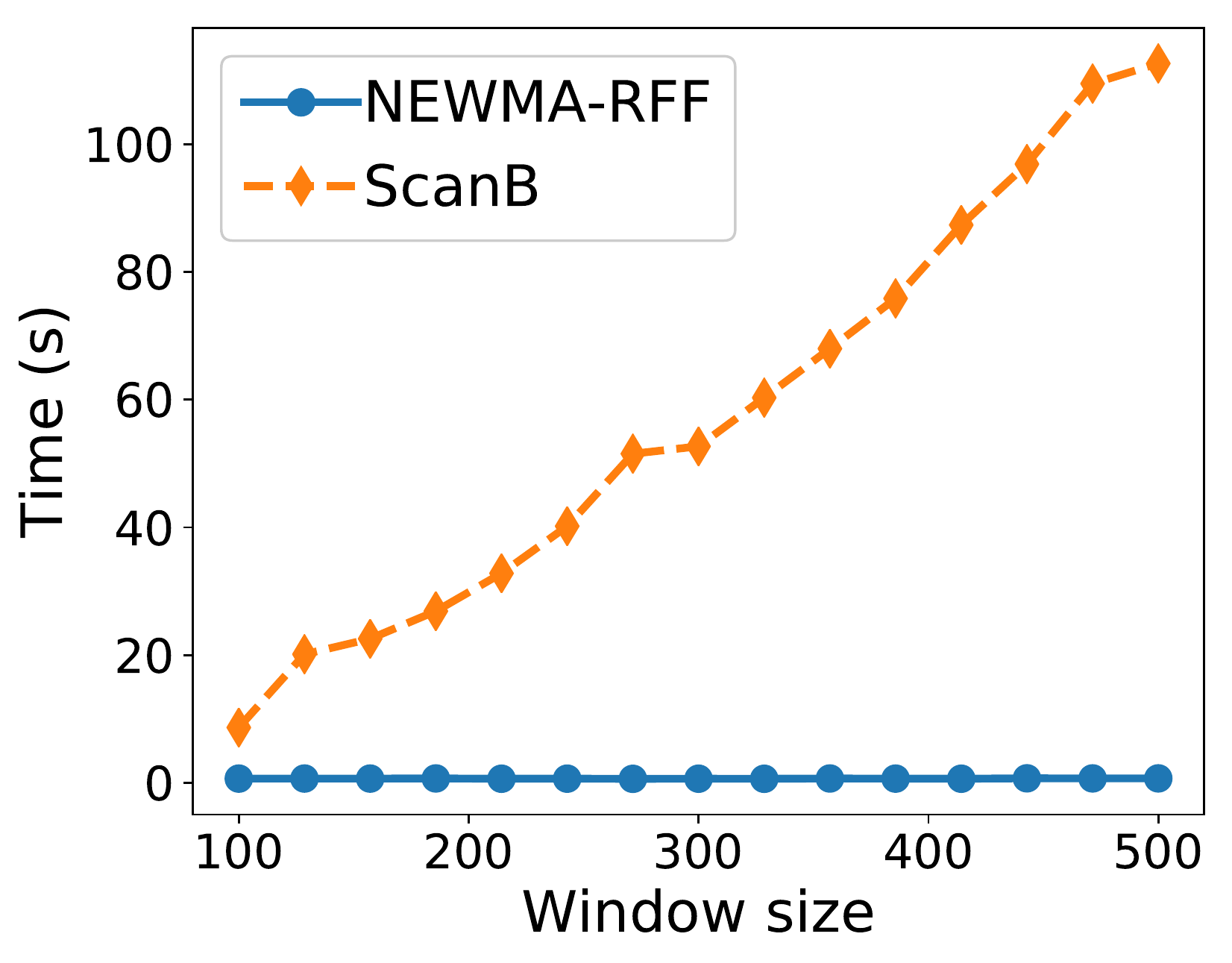}
\caption{Time of execution on $12000$ samples with $m=3000$ RFs for NEWMA and SW, window size $\winsize=250$ for Scan-$B$ and SW, and dimension $d=100$ unless otherwise precised.}
\label{fig:time}
\vspace{-10pt}
\end{figure}

\begin{figure}[h!]
\centering
\begin{subfigure}{0.3\textwidth}
\includegraphics[width = \textwidth]{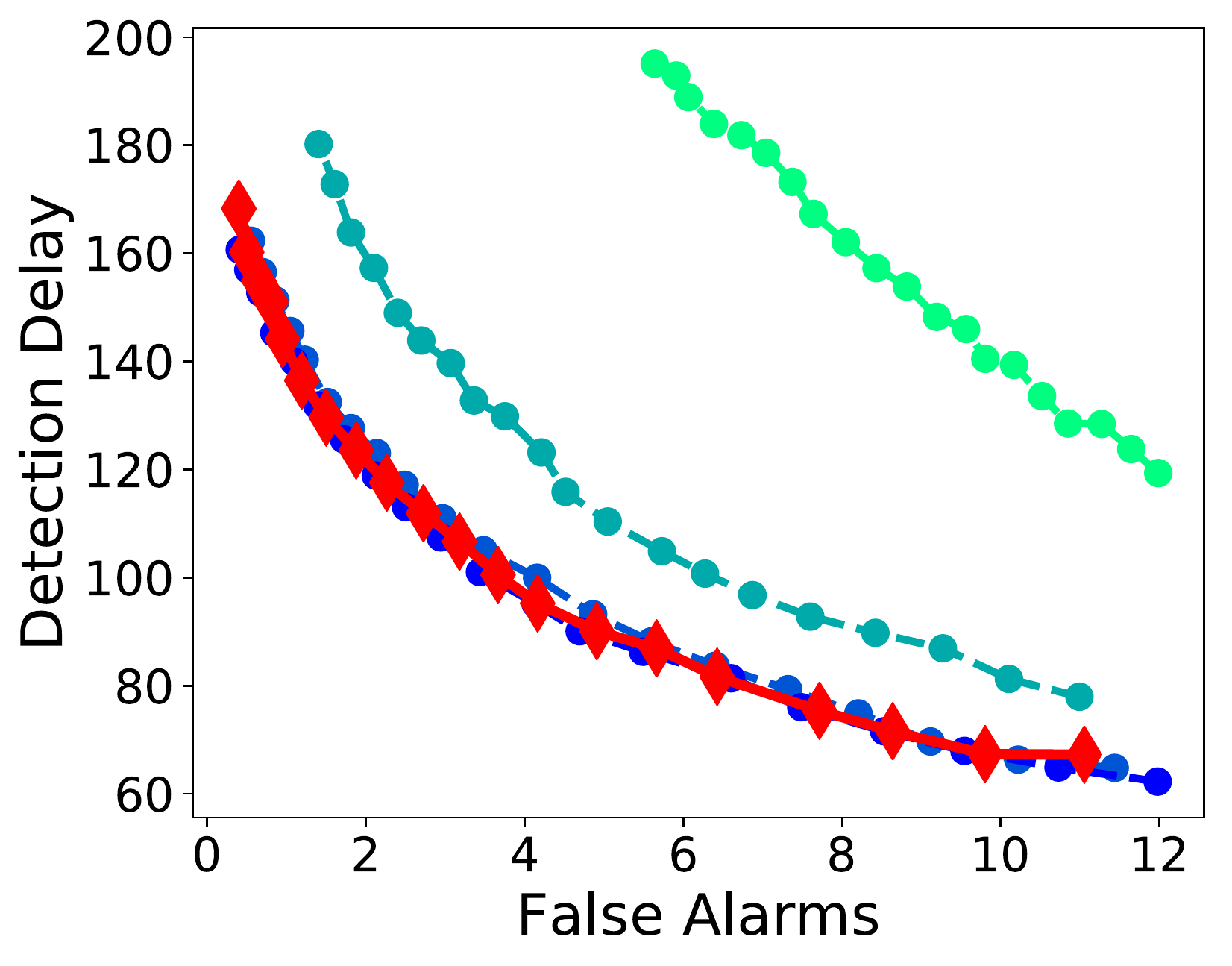} \\
\includegraphics[width = \textwidth]{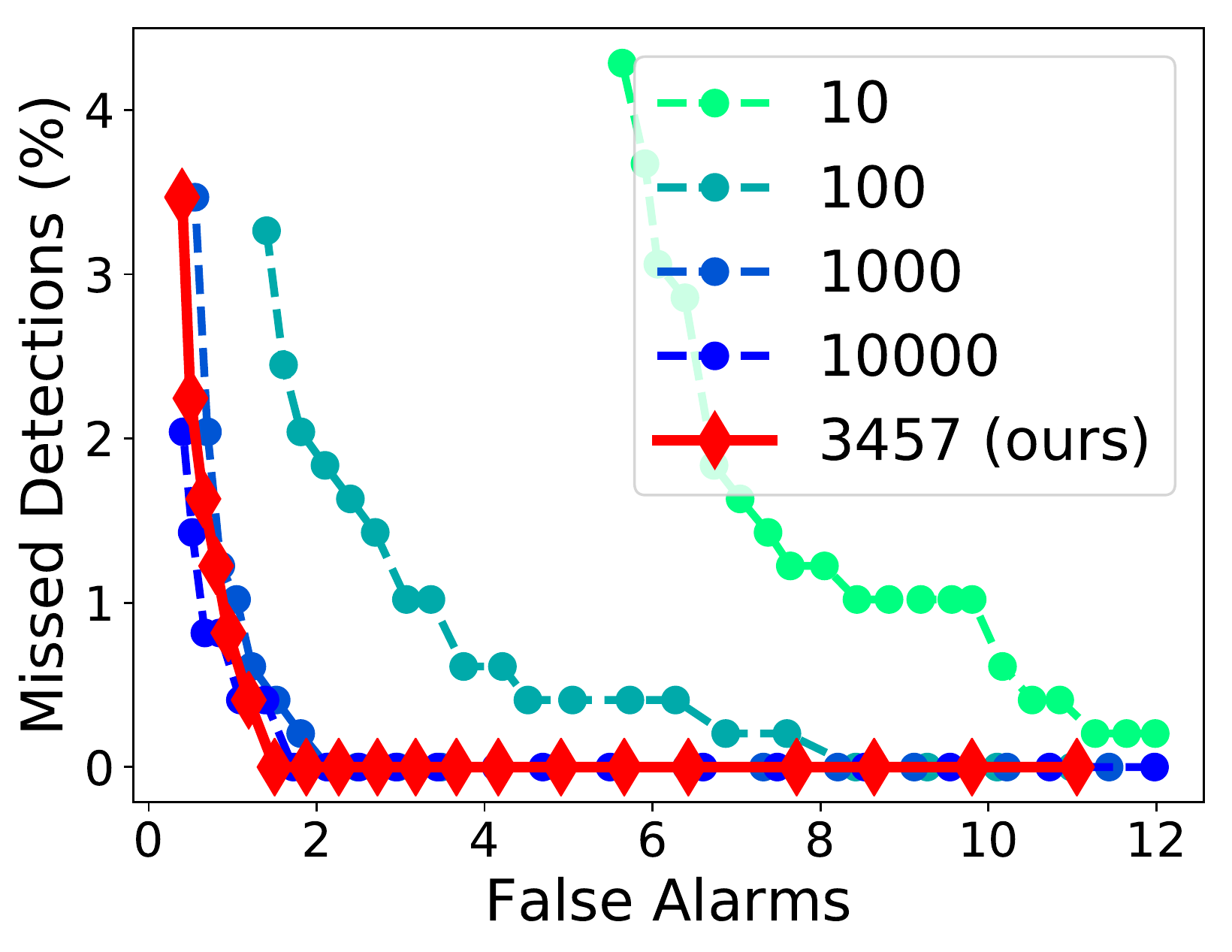}
\caption{Number of random features $m$.}
\label{subfig:m}
\end{subfigure}~
\begin{subfigure}{0.3\textwidth}
\includegraphics[width = \textwidth]{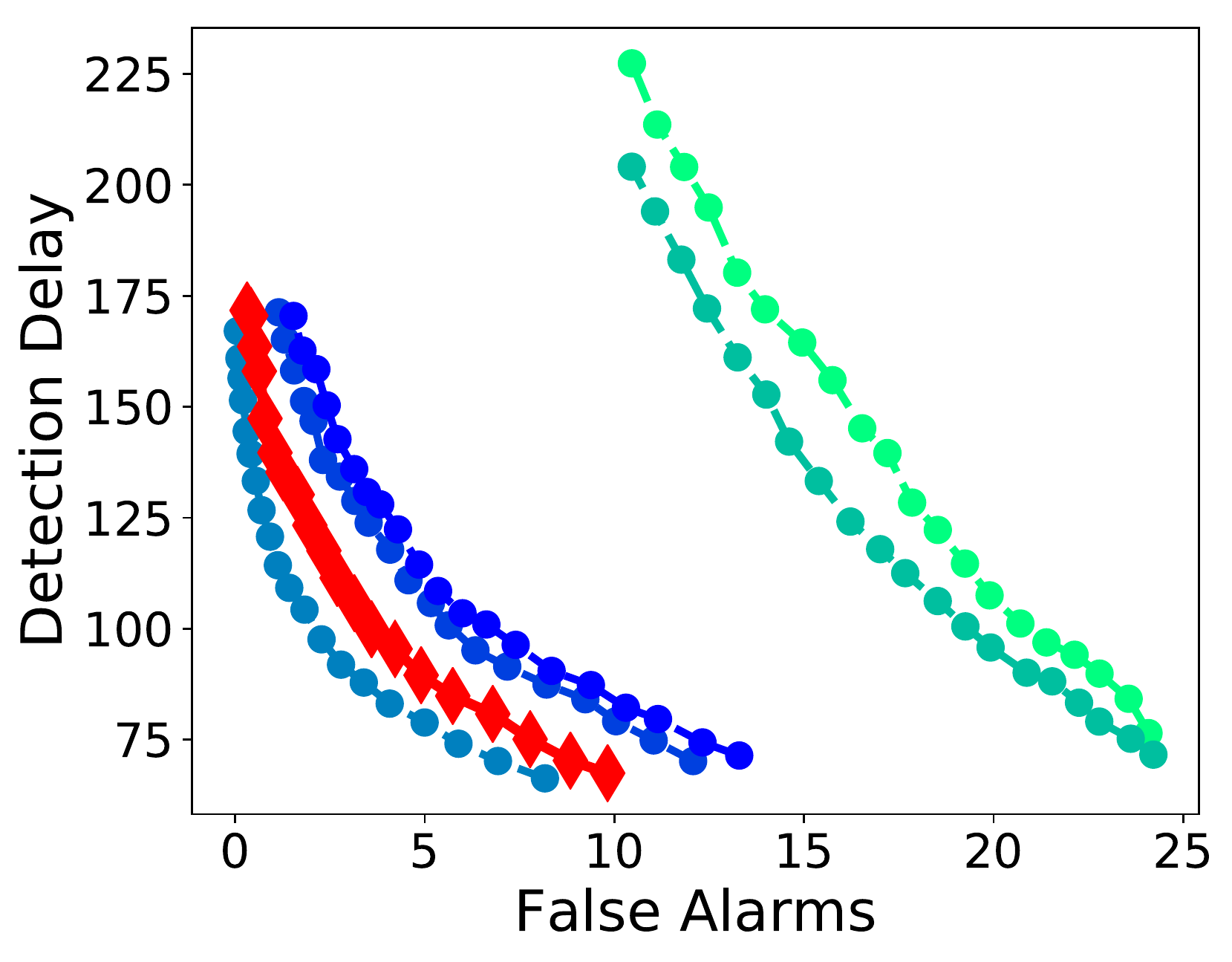} \\
\includegraphics[width = \textwidth]{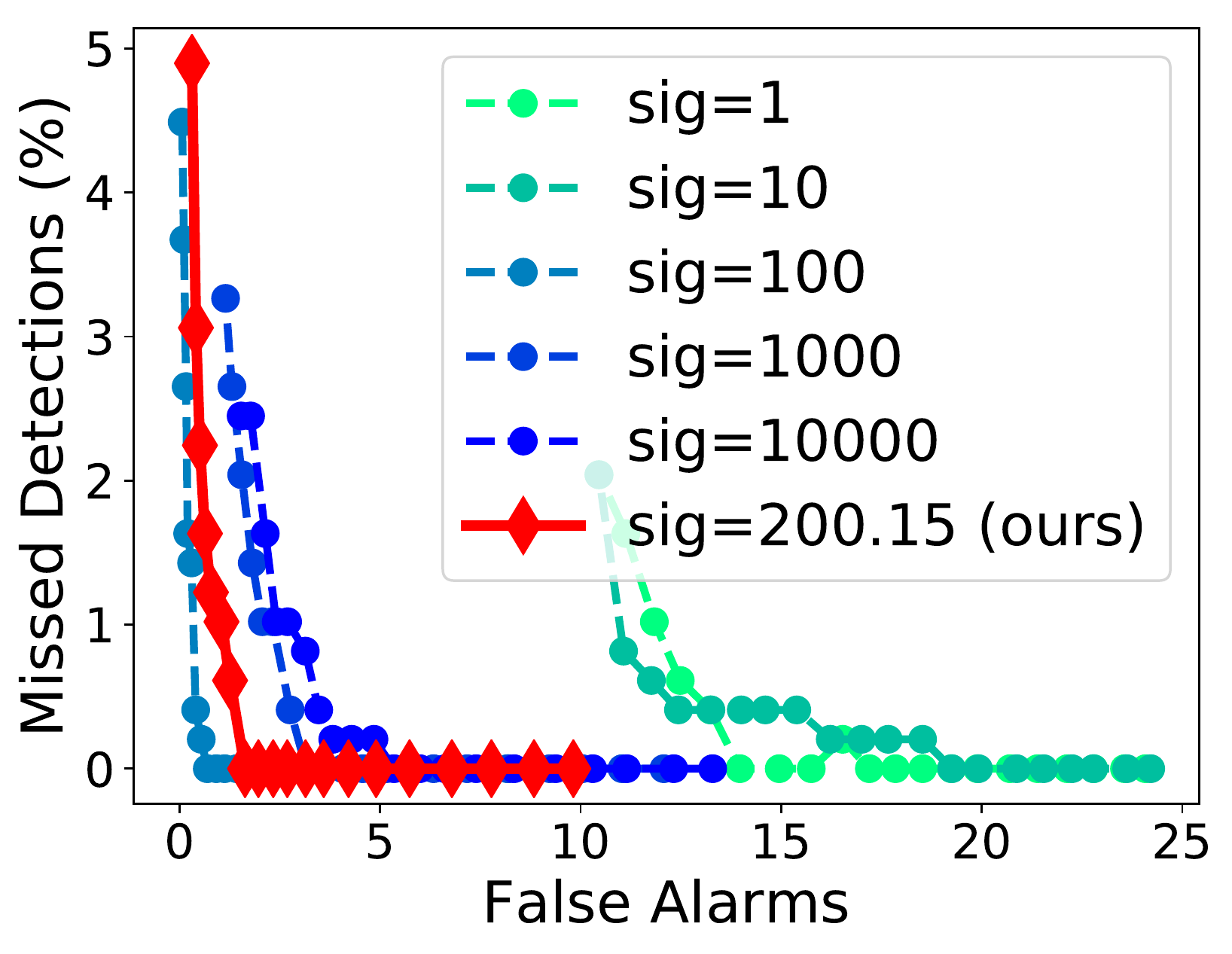}
\caption{Choice of $\sigma$ for Gaussian kernel.}
\label{subfig:sigma}
\end{subfigure}~
\begin{subfigure}{0.3\textwidth}
\includegraphics[width = \textwidth]{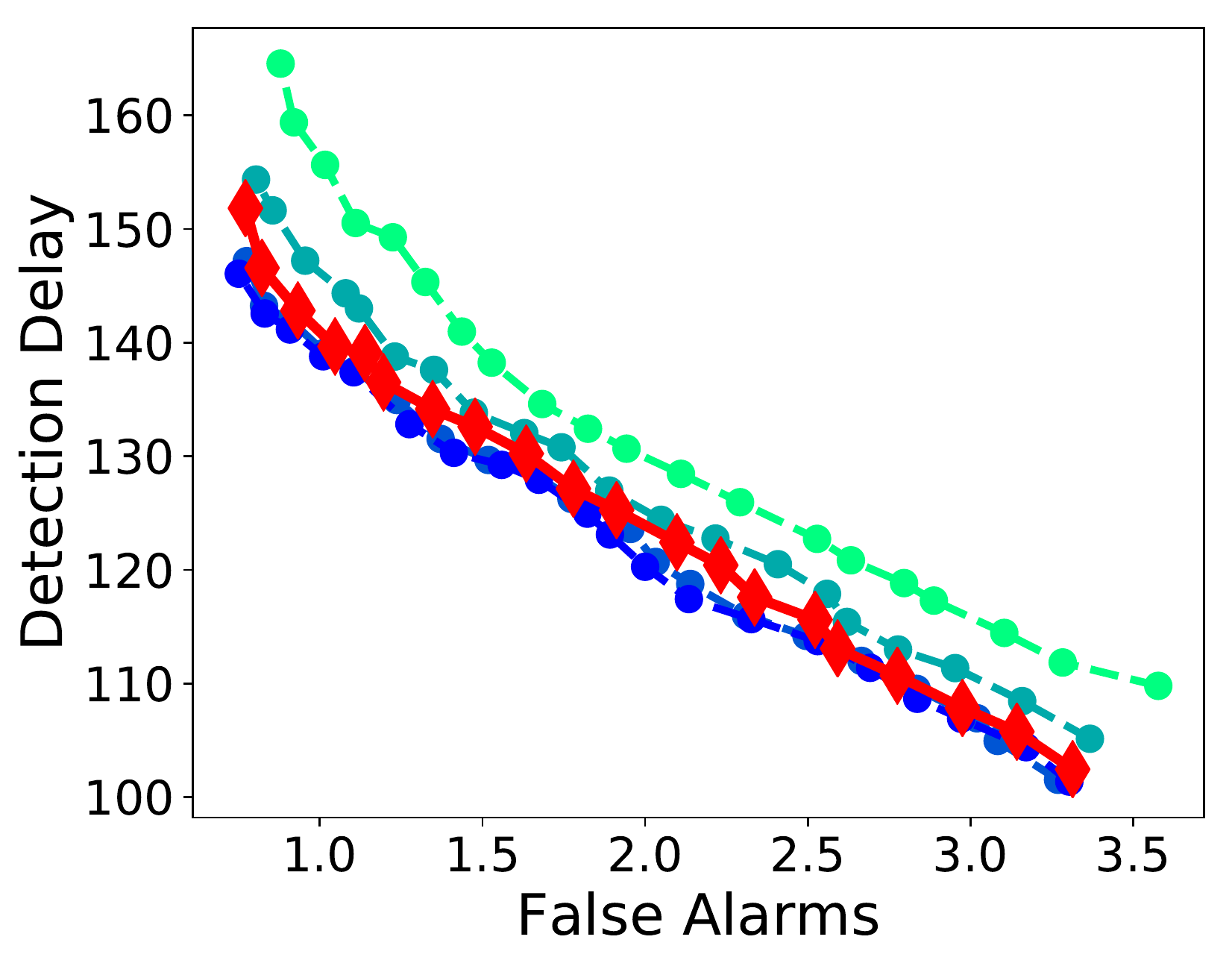} \\
\includegraphics[width = \textwidth]{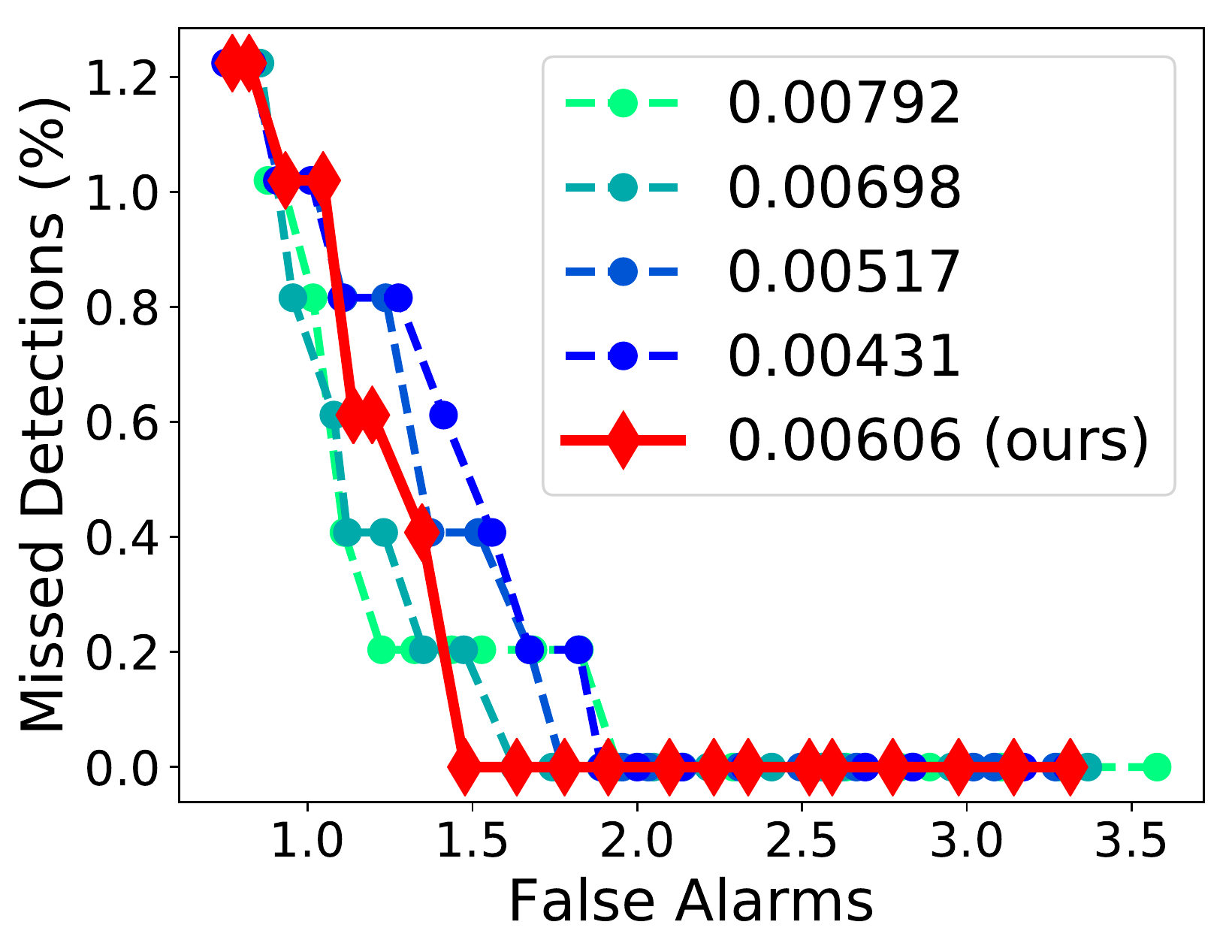}
\caption{Choice of $\updt$, for a fixed window size $\winsize$ (for $\updtp = \updtp_{\updt,\winsize}$).}
\label{subfig:lambda}
\end{subfigure}
\caption{Effect of the choice of hyperparameters in NEWMA. The thick red line indicates the proposed choices.}\label{fig:param}
\end{figure}

\begin{figure}[h!]
\centering
\begin{subfigure}{0.4\textwidth}
\includegraphics[width = \textwidth]{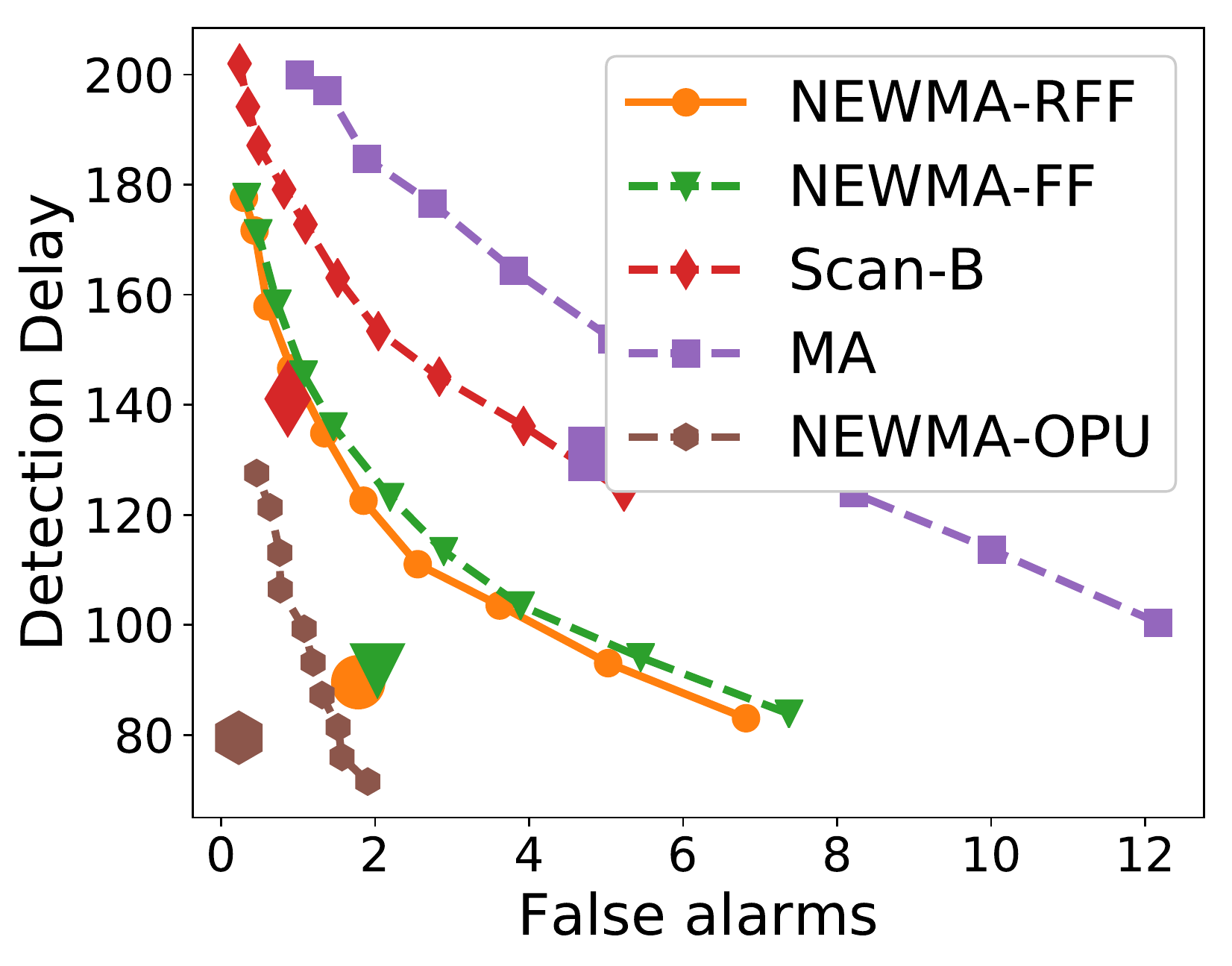} \\
\includegraphics[width = \textwidth]{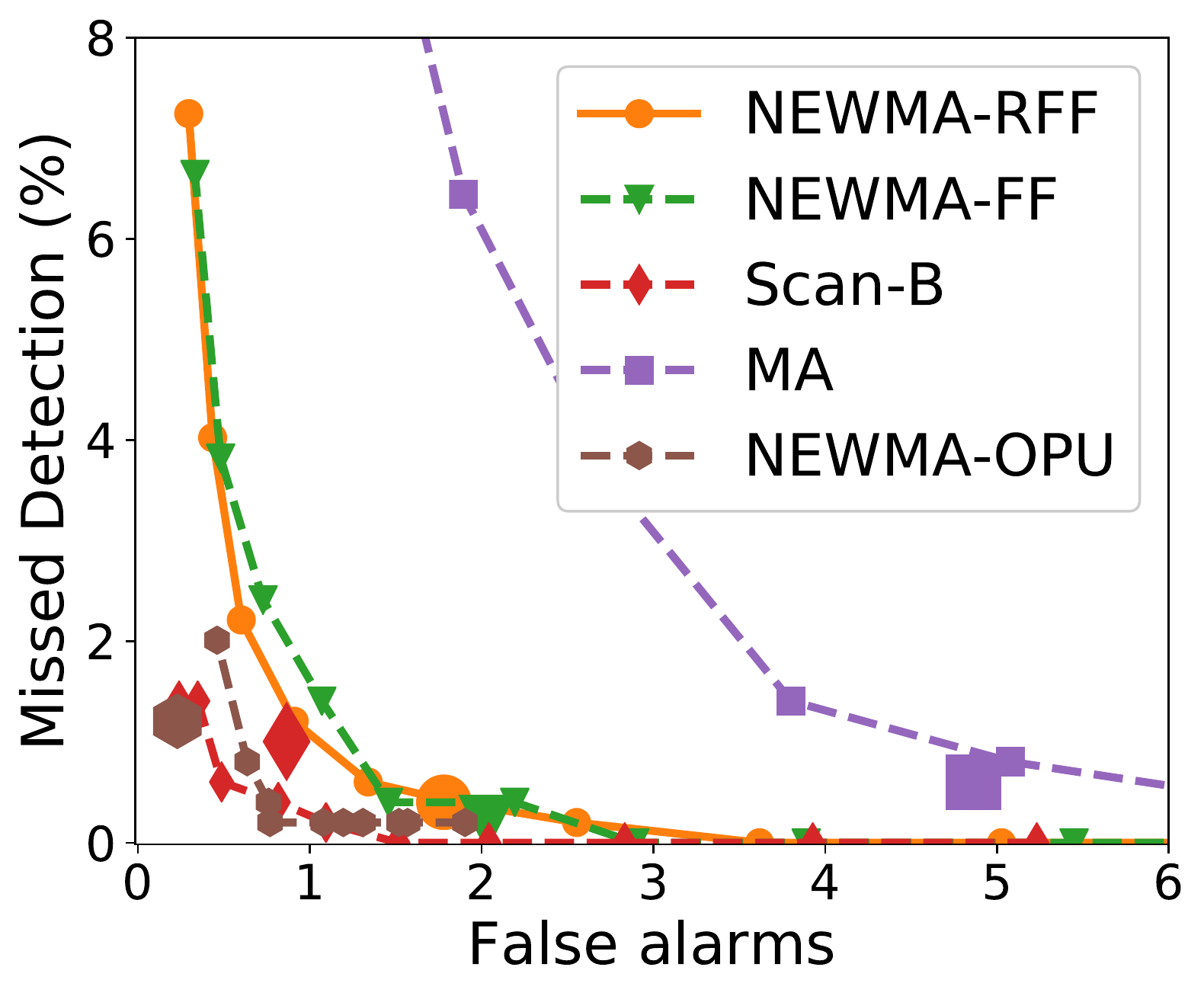}
\caption{Synthetic data.}
\label{subfig:scanB}
\end{subfigure}~
\begin{subfigure}{0.4\textwidth}
\includegraphics[width = \textwidth]{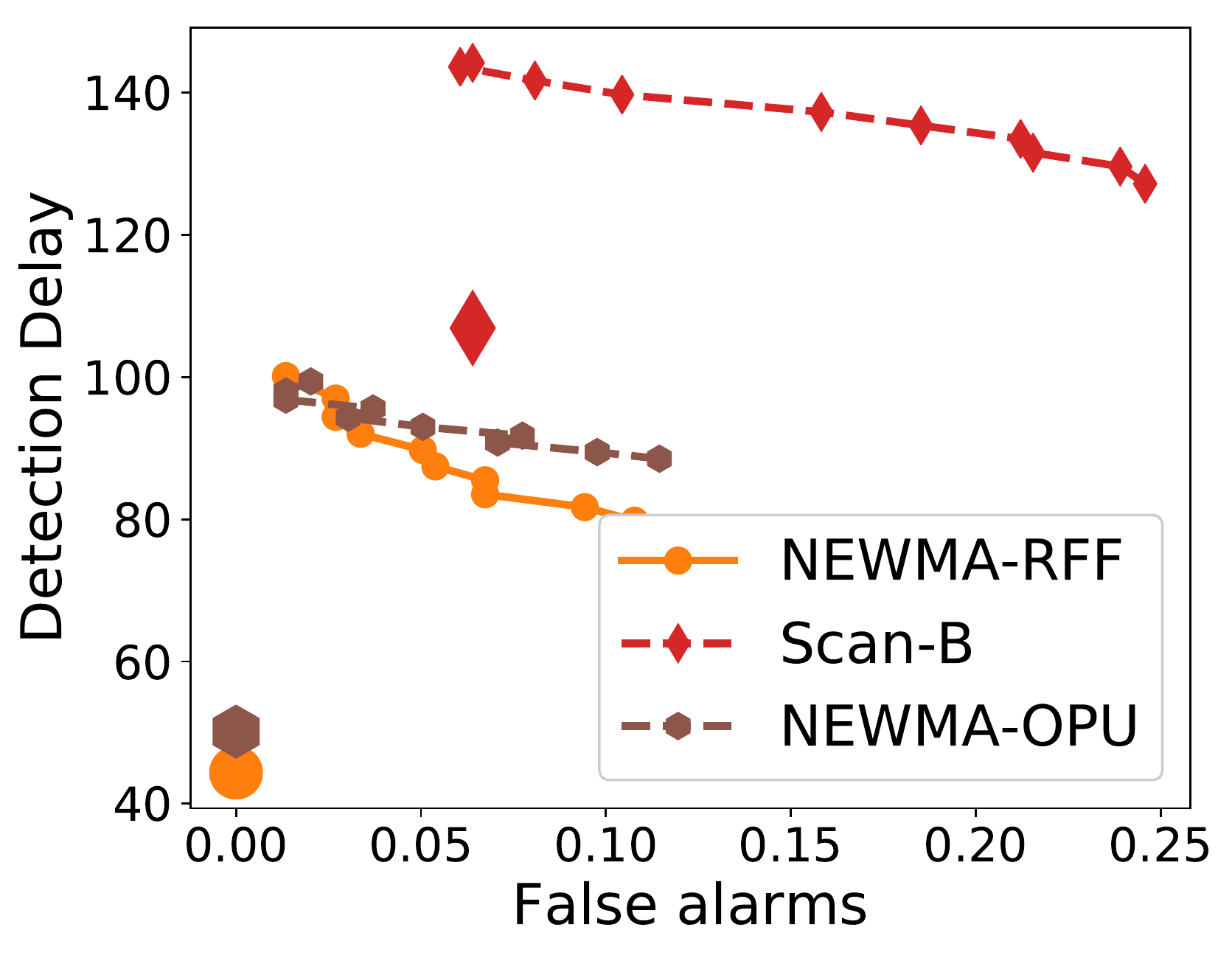} \\
\includegraphics[width = \textwidth]{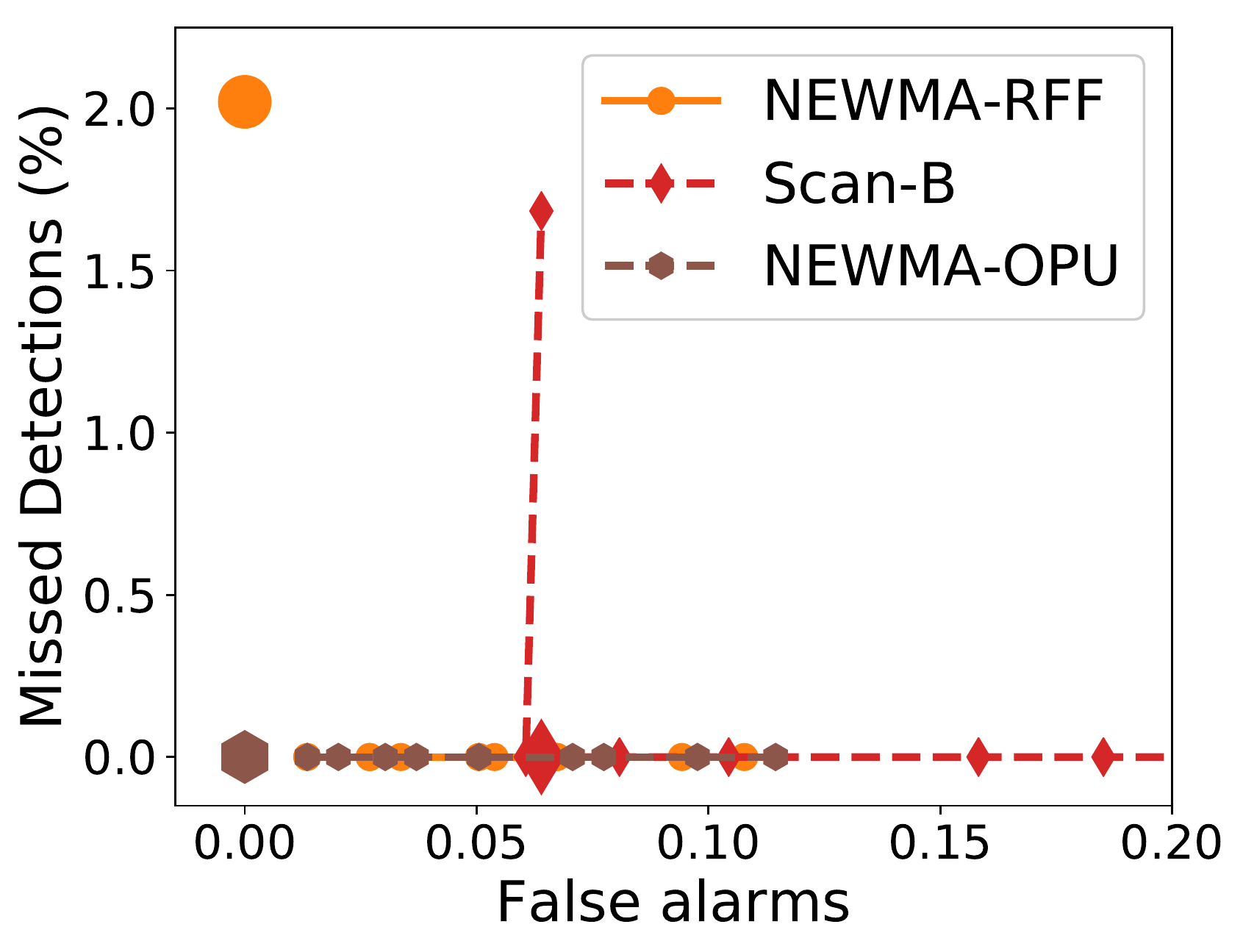}
\caption{Audio data.}
\label{subfig:bci}
\end{subfigure}
\caption{Experimental results. The solid lines corresponds to several possible choices of a fixed threshold $\tau$, while the single large dot corresponds to the performance of the adaptive threshold procedure described in Sec.~\ref{sec:adapt_thres}.}\label{fig:res}
\end{figure}

\subsection{Time of execution} 

In Fig.~\ref{fig:time} we examine the time of execution of the algorithms with respect to the dimension~$d$ of the data and window size $\winsize$. Being similar, Scan-$B$ and SW have approximately the same running time. As expected, NEWMA-FF is sublinear in the dimension, and NEWMA-OPU is almost independent of the dimension and much faster than the other approaches in high dimension. 
The results also confirm that NEWMA's complexity is independent of $\winsize$, while that of Scan-$B$ increases linearly with $\winsize$.

\subsection{Comparison of algorithms on synthetic data} 

Next we examine the detection performance of the algorithms on synthetic data. We generate the data as follows: $10^{6}$ samples are drawn from Gaussian Mixture Models (GMM) in dimension $d=100$ with $k=10$ components, and the GMM changes every $n=2000$ samples (at each change, we draw~$k$ new vector means according to a centered Gaussian distribution, $k$ new covariance matrices from an inverse-Wishart distribution, and $k$ new mixing weights from a Dirichlet distribution), \rev{resulting in 500 changes to detect in a row. We recall that these settings are more typical of online methods, where changes are detected on-the-fly, rather than offline ones, for which a high number of changes results in a high computational complexity and memory-load.} Note that \rev{the considered changes are rather complex}, with high-dimensional, multimodal, unknown distributions before and after the change, so that classical parametric change-point detection methods cannot be applied here.
For all algorithms we use a window size $\winsize = 250$.

\rev{To evaluate performance we compute false alarm rate, missed detections, and detection delay. We consider that the algorithm should be stable \emph{before} every true change and we count every detected change in the n/2 samples before it as a false alarm. We record the time until the first detected change in the n/2 samples \emph{after} every true change as detection delay, and we record a missed detection if no change is detected. This is then repeated for 500 changes in a row, and all statistics are averaged by the number of changes.}
We plot different ratios Expected Detection Delay (EDD)-to-Number of False Alarms or Missed Detections-to-Number of False Alarms (lower left corner is better), by varying a \emph{fixed} threshold $\tau$. In Fig.~\ref{fig:res}, the result of the \emph{adaptive} threshold procedure of Sec.~\ref{sec:adapt_thres} is plotted as single large dots.

\paragraph{Effects of the hyperparameters} In Fig.~\ref{fig:param}, we examine the effects of the different hyperparameters in NEWMA and our choices described in the previous sections.

In Fig.~\ref{subfig:m}, we set the forgetting factors $\updt^\star$ and $\updtp^\star$ according to Sec.~\ref{sec:optimal_param}, and vary the number of random features $m$. We compare it with our prescribed choice from Prop.~\ref{prop:MMD-RF}, $m^\star = \order{(\updt^\star + \updtp^\star)^{-2}}$ (in practice we choose an arbitrary multiplicative constant $1/4$ to reduce computation time). As expected, the performance of the detection increases with $m$, and at low $m$ the algorithm is observed to be relatively unstable. However, it is also seen that increasing $m$ beyond our choice $m^\star$ has negligible effect, so that our heuristic seems to yield the right order of magnitude for $m$. In the rest of the experiments we always choose $m = \frac{1}{4}(\updt^\star + \updtp^\star)^{-2}$, except when using the OPU, for which we choose $10$ times this value since we do not have computational restrictions in this case.

In Fig.~\ref{subfig:sigma} we examine the choice of the kernel bandwidth $\sigma$, and compare different values with the median trick that we use in practice. It is seen that the median trick yields a correct order of magnitude of about $10^2$, with all other values performing worse.

Finally, in Fig.~\ref{subfig:lambda} we vary the forgetting factor $\updt$, while keeping the window size $\winsize$ constant by choosing $\updtp =\updtp_{\updt,\winsize}$. It is seen that our prescribed choice $(\updt^\star,\updtp^\star)$ offers a balanced performance: increasing $\updt$ worsens the number of missed detections while only marginally decreasing the detection delay, and decreasing $\updt$ has the inverse effect. 

\paragraph{Comparison of algorithms} In Fig.~\ref{subfig:scanB} we compare the algorithms on synthetic data. We first observe that the adaptive threshold procedure, indicated by single large dots in the figure, is consistently better than any fixed threshold for all algorithms.

It is seen that SW performs generally poorly, confirming the superiority of Scan-$B$ as a window-based approach. NEWMA with Gaussian random features (RFF or FF) exhibits a reduced detection delay compared to Scan-$B$ but a slightly higher number of missed detections. NEWMA-OPU is seen to perform well, which may indicate that the kernel induced by the OPU is more appropriate than a Gaussian kernel on this example. 

\subsection{Real Data: voice activity detection} 

We apply our method to a Voice Activity Detection (VAD) task on audio data. We consider real environments background noise from the QUT-NOISE dataset~\cite{Dean2010} and add, every $10$s, a $3$s speech extract from the TIMIT dataset~\cite{Garofolo1993}, with $-7.5$dB Signal-to-Noise Ratio. Our goal is to detect the onset of the speech segments. We use the Short Time Fourier Transform (STFT) of the signal, \rev{a Fourier transform localized in time that is ubiquitous in audio and speech processing \cite{Cohen1995}. It allows to extract on-the-fly frequency information from the one-dimensional audio signal $s_t$, turning it into a $d=128$-dimensional time series $x_t \in \mathbb{R}^d$, where $d$ is the number of considered frequencies (usually the time axis is also dilated between $s_t$ and $x_t$). For $x_t$,} we consider a change every $1250$ samples, and $300$ changes in total. 
We take a window size $\winsize=150$.
%
We display the results in Fig.~\ref{subfig:bci}. Similar to the results on synthetic data, Scan-$B$ has a higher detection delay than NEWMA. However, it does also exhibit slightly more missed detections: we suspect that, because it uses several windows of reference in-control samples, Scan-$B$ is sensitive to highly heterogeneous data, which can be the case for audio data. In this case, the Gaussian random features are seen to perform on par with the OPU kernel. 


\section{Conclusion and outlooks}

We introduced \NEWMA{}, a new method for online change-point detection that is faster and lighter than existing model-free methods. 
The simple, key idea behind \NEWMA{} is to compare recursive averages computed with different forgetting factors on the same data, in order to extract time-varying information without keeping in memory the raw data. 

In the future, we plan to further develop the analysis of our method under the null to derive properties that do not depend on the in-control distribution, as done in~\cite{Li2015b}. \rev{Additionally, the robustness of the random generalized moments methodology to noise and missing data appears as an interesting extension.} Recent approaches for learning from random feature moments~\cite{Gribonval2017} would allow extracting more information from $\Sketch_t-\Sketch'_t$ than mere occurence of a change, without increasing the computational load. 
Another direction for our research is the study of mappings $\AnyOpNA$ for graph data, which, combined with \NEWMA{}, would allow to detect changes in large-scale social networks~\cite{Peel2014}.


\subsection*{Acknowledgment}

The authors would like to thank LightOn for the use of the OPU, Francis Bach and Sylvain Arlot for discussions on the idea behind NEWMA.

\bibliographystyle{plain}
\bibliography{library,library1,bib_dg}

\begin{thebibliography}{10}

\bibitem{Abo_Gou_Blo:2015}
A.~Abou-Elailah, V.~Gouet-Brunet, and I.~Bloch.
\newblock Detection of abrupt changes in spatial relationships in video
  sequences.
\newblock In {\em International Conference on Pattern Recognition Applications
  and Methods}, pages 89--106. Springer, 2015.

\bibitem{Allen2016}
Stephanie Allen, David Madras, and Ye~Ye.
\newblock {Change-point Detection Methods for Body-Worn Video}.
\newblock {\em arXiv:1610.06453}, 2016.

\bibitem{Kifer2004}
Shai Ben-David, Johannes Gehrke, and Daniel Kifer.
\newblock {Detecting Change in Data Streams}.
\newblock {\em Proceedings of the 30th VLDB Conference}, pages 180--191, 2004.

\bibitem{Ber_Chr_Res:1984}
C.~Berg, J.~Christensen, and P.~Ressel.
\newblock Harmonic analysis on semigroups.
\newblock 100, 1984.

\bibitem{Berlinet2004}
Alain Berlinet and Christine Thomas-Agnan.
\newblock {\em {Reproducing Kernel Hilbert Spaces in Probability and
  Statistics}}.
\newblock Kluwer Academics Publisher, 2004.

\bibitem{Bie_Bac_Con:2015}
A.~Bietti, F.~Bach, and A.~Cont.
\newblock An online em algorithm in hidden (semi-) markov models for audio
  segmentation and clustering.
\newblock In {\em Acoustics, Speech and Signal Processing (ICASSP)}, pages
  1881--1885. IEEE, 2015.

\bibitem{Bosc2003}
Marcel Bosc, Fabrice Heitz, Jean~Paul Armspach, Izzie Namer, Daniel Gounot, and
  Lucien Rumbach.
\newblock Automatic change detection in multimodal serial mri: application to
  multiple sclerosis lesion evolution.
\newblock {\em NeuroImage}, 20 2:643--56, 2003.

\bibitem{Bou_Lug_Mas:2013}
St{\'e}phane Boucheron, G{\'a}bor Lugosi, and Pascal Massart.
\newblock {\em Concentration inequalities: A nonasymptotic theory of
  independence}.
\newblock Oxford University Press, 2013.

\bibitem{Cheng2019}
Kevin~C. Cheng, Shuchin Aeron, Michael~C. Hughes, Erika Hussey, and Eric~L.
  Miller.
\newblock {Optimal Transport Based Change Point Detection and Time Series
  Segment Clustering}.
\newblock 2019.

\bibitem{Chwialkowski2015}
Kacper Chwialkowski, Aaditya Ramdas, Dino Sejdinovic, and Arthur Gretton.
\newblock {Fast Two-Sample Testing with Analytic Representations of Probability
  Measures}.
\newblock In {\em Advances in Neural Information Processing Systems (NIPS)},
  2015.

\bibitem{Cisar2011}
Petar {\v{C}}isar and Sanja~Maravi{\'{c}} {\v{C}}isar.
\newblock {Optimization methods of EWMA statistics}.
\newblock {\em Acta Polytechnica Hungarica}, 8(5):73--87, 2011.

\bibitem{Cohen1995}
Leon Cohen.
\newblock {\em {Time-Frequency Analysis}}.
\newblock Prentice-Hall, PTR, 1995.

\bibitem{Costa2006}
A.~F.~B. Costa and M.~A. Rahim.
\newblock {A Single EWMA Chart for Monitoring Process Mean and Process
  Variance}.
\newblock {\em Quality Technology and Quantitative Management}, 3(3):295--305,
  2006.

\bibitem{Dean2010}
David Dean, Sridha Sridharan, Robert Vogt, and Michael Mason.
\newblock {The QUT-NOISE-TIMIT corpus for the evaluation of voice activity
  detection algorithms}.
\newblock In {\em Proceedings of Interspeech}, number September, pages 26--30,
  2010.

\bibitem{Dremeau2015}
Ang{\'{e}}lique Dr{\'{e}}meau, Antoine Liutkus, David Martina, Ori Katz,
  Christophe Sch{\"{u}}lke, Florent Krzakala, Sylvain Gigan, and Laurent
  Daudet.
\newblock {Reference-less measurement of the transmission matrix of a highly
  scattering material using a DMD and phase retrieval techniques}.
\newblock {\em Optics Express}, 23(9):11898, 2015.

\bibitem{Francis2018}
Deena~P Francis and Kumudha Raimond.
\newblock {\em {A Random Fourier Features based Streaming Algorithm for Anomaly
  Detection in Large Datasets}}, volume 645.
\newblock Springer Singapore, 2018.

\bibitem{Fu2002}
James~C Fu, Fred~A. Spiring, and Hansheng Xie.
\newblock {On the average run length of quality control schemes using a Markov
  Chain approach}.
\newblock {\em Statistics {\&} Probability Letters}, 56(4):369--380, 2002.

\bibitem{Garofolo1993}
John~S. Garofolo, Lori~F. Lamel, William~M. Fisher, Jonathan~G. Fiscus,
  David~S. Pallett, Nancy~L. Dahlgren, and Victor Zue.
\newblock {TIMIT Acoustic-Phonetic Continuous Speech Corpus LDC93S1}, 1993.

\bibitem{Gar_Arl:2018}
Damien Garreau and Sylvain Arlot.
\newblock Consistent change-point detection with kernels.
\newblock {\em Electronic Journal of Statistics}, 12(2):4440--4486, 2018.

\bibitem{Gopalan2018}
Parikshit Gopalan, Vatsal Sharan, and Udi Wieder.
\newblock {Faster Anomaly Detection via Matrix Sketching}.
\newblock (Nips), 2018.

\bibitem{Gretton2009}
A.~Gretton, K.~Fukumizu, Z.~Harchaoui, and B.~K. Sriperumbudur.
\newblock {A Fast, Consistent Kernel Two-Sample Test}.
\newblock In {\em Advances in Neural Information Processing Systems (NIPS)},
  pages 673--681, 2009.

\bibitem{Gretton2007}
Arthur Gretton, Karsten~M. Borgwardt, Malte~J. Rasch, Bernhard Sch{\"{o}}lkopf,
  and Alexander~J. Smola.
\newblock {A Kernel Method for the Two-Sample Problem}.
\newblock In {\em Advances in Neural Information Processing Systems (NIPS)},
  pages 513--520, 2007.

\bibitem{Gribonval2017}
R{\'{e}}mi Gribonval, Gilles Blanchard, Nicolas Keriven, and Yann Traonmilin.
\newblock {Compressive Statistical Learning with Random Feature Moments}.
\newblock {\em arXiv:1706.07180}, pages 1--72, 2017.

\bibitem{Han2007}
D.~Han, F.~Tsung, X.~Hu, and K.~Wang.
\newblock {CUSUM and EWMA multi-charts for detecting a range of mean shifts}.
\newblock {\em Statistica Sinica}, 17(3):1139--1164, 2007.

\bibitem{Harchaoui2009}
Za{\"{i}}d Harchaoui, Francis Bach, and Eric Moulines.
\newblock {Kernel change-point analysis}.
\newblock In {\em Advances in Neural Information Processing Systems (NIPS)},
  pages 609--616, 2009.

\bibitem{Hawkins2010}
D.~Hawkins and Qiqi Deng.
\newblock {A Nonparametric Change-Point Control Chart}.
\newblock {\em Journal of Quality Technology}, 42(2):165--173, 2010.

\bibitem{Khan2016a}
Naveed Khan, Sally McClean, Shuai Zhang, and Chris Nugent.
\newblock {Optimal Parameter Exploration for Online Change-Point Detection in
  Activity Monitoring Using Genetic Algorithms}.
\newblock {\em Sensors}, 16(11), 2016.

\bibitem{Kim_Mar_Per:2009}
Albert~Y. Kim, C.~Marzban, D.~Percival, and W.~Stuetzle.
\newblock Using labeled data to evaluate change detectors in a multivariate
  streaming environment.
\newblock {\em Signal Processing}, 89(12):2529--2536, 2009.

\bibitem{Le2013}
Quoc~V Le, Tam{\'{a}}s Sarl{\'{o}}s, and Alexander~J. Smola.
\newblock {Fastfood - Approximating Kernel Expansions in Loglinear Time}.
\newblock In {\em International Conference on Machine Learning (ICML)},
  volume~28, 2013.

\bibitem{Li2015b}
Shuang Li, Yao Xie, Hanjun Dai, and Le~Song.
\newblock {Scan B-Statistic for Kernel Change-Point Detection}.
\newblock {\em arXiv:1507.01279}, pages 1--48, 2015.

\bibitem{Liu2013a}
Song Liu, Makoto Yamada, Nigel Collier, and Masashi Sugiyama.
\newblock {Change-point detection in time-series data by relative density-ratio
  estimation}.
\newblock {\em Neural Networks}, 43:72--83, 2013.

\bibitem{Lowry1992}
CA~A Lowry and WH~H Woodall.
\newblock {A multivariate exponentially weighted moving average control chart}.
\newblock {\em Technometrics}, 34(1):46--53, 1992.

\bibitem{Malladi2013}
R.~Malladi, G.~P. Kalamangalam, and B.~Aazhang.
\newblock Online bayesian change point detection algorithms for segmentation of
  epileptic activity.
\newblock In {\em Asilomar Conference on Signals, Systems and Computers}, pages
  1833--1837, Nov 2013.

\bibitem{Peel2014}
Leto Peel and Aaron Clauset.
\newblock {Detecting change points in the large-scale structure of evolving
  networks}.
\newblock In {\em AAAI Conference on Artificial Intelligence}, pages 1--11,
  2015.

\bibitem{Pol_Tar:2012}
A.~Polunchenko and A.~Tartakovsky.
\newblock State-of-the-art in sequential change-point detection.
\newblock {\em Methodology and computing in applied probability},
  14(3):649--684, 2012.

\bibitem{Rahimi2007}
Ali Rahimi and Benjamin Recht.
\newblock {Random Features for Large Scale Kernel Machines}.
\newblock In {\em Advances in Neural Information Processing Systems (NIPS)},
  2007.

\bibitem{Roberts1959}
S.~W. Roberts.
\newblock {Control Chart Tests Based on Geometric Moving Averages}.
\newblock {\em Technometrics}, 1(3):239--250, 1959.

\bibitem{Saade2016}
Alaa Saade, Francesco Caltagirone, Igor Carron, Laurent Daudet, Ang{\'{e}}lique
  Dremeau, Sylvain Gigan, and Florent Krzakala.
\newblock {Random projections through multiple optical scattering:
  Approximating Kernels at the speed of light}.
\newblock In {\em IEEE International Conference on Acoustic, Speech and Signal
  Processing (ICASSP)}, pages 6215--6219, 2016.

\bibitem{Serfling1980}
Robert~J Serfling.
\newblock {\em {Approximation Theorems of Mathematical Statistics}}, volume~37.
\newblock John Wiley {\&} Sons, Inc., 1980.

\bibitem{Sriperumbudur2010}
Bharath~K. Sriperumbudur, Arthur Gretton, Kenji Fukumizu, Bernhard
  Sch{\"{o}}lkopf, and Gert~R.G. Lanckriet.
\newblock {Hilbert space embeddings and metrics on probability measures}.
\newblock {\em The Journal of Machine Learning Research}, 11:1517--1561, 2010.

\bibitem{Staudacher2005}
M.~Staudacher, S.~Telser, A.~Amann, H.~Hinterhuber, and M.~Ritsch-Marte.
\newblock A new method for change-point detection developed for on-line
  analysis of the heart beat variability during sleep.
\newblock {\em Physica A: Statistical Mechanics and its Applications},
  349(3):582 -- 596, 2005.

\bibitem{Sutherland2015}
Dougal~J. Sutherland, Junier~B. Oliva, Poczos Barnabas, and Jeff Schneider.
\newblock {Linear-time Learning on Distributions with Approximate Kernel
  Embeddings}.
\newblock {\em arXiv:1509.07553}, 2015.

\bibitem{VanderVaart1998}
A~W van~der Vaart.
\newblock {\em {Asymptotic Statistics}}.
\newblock Cambridge University Press, 1998.

\bibitem{Weed2017}
Jonathan Weed and Francis Bach.
\newblock {Sharp asymptotic and finite-sample rates of convergence of empirical
  measures in Wasserstein distance}.
\newblock pages 1--35, 2017.

\bibitem{Xie2016}
Yao Xie, Meng Wang, and Andrew Thompson.
\newblock {Sketching for sequential change-point detection}.
\newblock {\em 2015 IEEE Global Conference on Signal and Information
  Processing, GlobalSIP 2015}, pages 78--82, 2016.

\bibitem{Yang2012a}
T~Yang, M~Mahdavi, R~Jin, J~Yi, and Sch Hoi.
\newblock {Online Kernel Selection: Algorithms and Evaluations.}
\newblock In {\em AAAI Conference on Artifical Intelligence}, pages 1197--1203,
  2012.

\bibitem{Zou2010a}
Changliang Zou and Fugee Tsung.
\newblock {Likelihood Ratio-Based Distribution-Free EWMA Control Charts}.
\newblock {\em Journal of Quality Technology}, 42(2):1--23, 2010.

\end{thebibliography}

\clearpage
\appendix

\section{Proofs of theorems \ref{thm:ARLuni} and \ref{thm:limit}}

We start with some elementary computations that are used throughout the rest of the proofs.
Set $0<\updtp < \updt < 1$.
For $t\geq 1$ and $i\leq t$, define $\alpha_0=(1-\updt)^t$, $\beta_0=(1-\updtp)^t$, 
$
\alpha_i = \updt(1-\updt)^{t-i} \quad \text{and}\quad \beta_i = \updtp(1-\updtp)^{t-i}
$, such that in NEWMA $\Sketch_t = \alpha_0 \Sketch_0 + \sum_{i=1}^t \alpha_i \AnyOp{\sample_i}$ and similarly for $\Sketch'_t$ and $\beta_i$.
Then, for any $1\leq t_1 < t_2\leq t$, we have
\begin{align}
\sum_{i=0}^{t_1} \alpha_i &= (1-\updt)^{t-t_1}, \label{eq:sum1t1}\\
\sum_{i=t_1}^{t_2} \alpha_i &= (1-\updt)^{t-t_2} - (1-\updt)^{t-t_1+1} \label{eq:sumt1t2}
\end{align}
and similarly for $\beta$ with $\updtp$, and
\begin{align}
\sum_{i=1}^t \alpha_i^{r_1}\beta_i^{r_2} &= \tfrac{\updt^{r_1}\updtp^{r_2}(1-A^t)}{1- A}, \text{ with }   A=(1-\updt)^{r_1}(1-\updtp)^{r_2}\, . \label{eq:sumpuiss}
\end{align}

\subsection{Mean time between false alarm (Thm. \ref{thm:ARLuni})}\label{app:ARL}
\label{sec:false-alarm}

\edit{In this section, we prove a multi-dimensional generalization of Theorem \ref{thm:ARLuni} (Theorem \ref{thm:ARL}), inspired by the approach in \cite{Fu2002}.}

Assuming all samples~$\sample_t$ are drawn \iid from a distribution $\Prob$, recall that we define the ARL under control as 
%
\begin{equation}
\label{eq:ARL}
\Tfalsealarm = \Exp\brac{\inf\set{t~|~S_t\geq \thres}}
\, .
\end{equation}
where in the case of \NEWMA{}, we have $S_t = \norm{\Sketch_t-\Sketch'_t}$. 
In this section, we derive a more tractable expression for $\Tfalsealarm$. 


Our proof strategy relies on the observation that $(\Sketch_t,\Sketch'_t)$ is a Markov chain in $\MeasSpace^2$, and thus it is possible to apply a method similar to \cite{Fu2002} for classical EWMA, with non-trivial modifications. 
%
%
We assume here that $\MeasSpace\eqdef \RR^m$, that is, $\AnyOpNA:\RR^d \to \RR^m$.
%
%
Since the stopping condition for NEWMA involves both components~$\Sketch_t$ and~$\Sketch'_t$ of the chain, we define the set $V_\thres \subset \MeasSpace^2$ as the domain in which the algorithm continues:
\begin{equation}\label{eq:continue_set}
V_\thres \eqdef \set{\vv = (\Sketch,\Sketch')~|~\norm{\Sketch-\Sketch'}< \thres} 
\, .
\end{equation}
%
%
With these notations, when we run \NEWMA{} and stop as soon as an alarm is raised, we produce a Markov chain $\vv_t \in \MeasSpace^2$ defined as: $\vv_0 = (\Sketch_0,\Sketch_0)$, and 
\[
\vv_t = \begin{cases} \paren{\begin{matrix}
(1-\updt) \vv_{t-1,1} + \updt \AnyOp{\sample_t} \\
(1-\updtp) \vv_{t-1,2} +\updtp \AnyOp{\sample_t}
\end{matrix}} &\text{ if } \vv_{t-1} \in V_\thres \, ,\\
\vv_{t-1} &\text{ otherwise.}
\end{cases}
\]
In other words, the chain is stationary as soon as an alarm is raised.
%
%
In order to state our theorem, we need to introduce a fair amount of notations. Consider the space $\MeasSpace^2 = \MeasSpace\times \MeasSpace$, equipped with the norm $\norm{(\vx,\vx')} = \norm{\vx} + \norm{\vx'}$. 
For $\varepsilon>0$, consider the ball of radius $\varepsilon^{-1}$ in $\MeasSpace^2$, $\Ball_\varepsilon = \set{\vv ~|~\norm{\vv}\leq 1/\varepsilon}$, which is compact since $\MeasSpace^2$ has finite dimension. Define $\enet_\varepsilon = \set{\vu_1,\ldots,\vu_{N_\varepsilon}} \subset \Ball_\varepsilon$ any $\varepsilon$-net of $\Ball_\varepsilon$ (we will see that its choice does not matter) such that $\vu_1 = (\Sketch_0,\Sketch_0)$, where $N_\varepsilon$ is the $\varepsilon$-covering number of $\Ball_\varepsilon$. Without lost of generality, assume they are ordered such that $\vu_1,\ldots\vu_{M_\varepsilon} \in V_\thres$ and $\vu_{M_\varepsilon+1},\ldots,\vu_{N_\varepsilon} \in V_\thres^c$ for some $M_\varepsilon$. Denote $P_\varepsilon:\MeasSpace^2 \to \enet_\varepsilon$ the projection operator onto $\enet_\varepsilon$ (\ie, that returns the $\vu_i$ closest to its input).
Define the following Markov chain $\vv_t^\varepsilon \in \enet_\varepsilon$: initialize $\vv_0^\varepsilon = (\Sketch_0,\Sketch_0) = \vu_1$, and
\[
\vv_t^\varepsilon = \begin{cases} P_\varepsilon\paren{\paren{\begin{matrix}
(1-\updt) \vv_{t-1,1}^\varepsilon + \updt\AnyOp{\sample_t} \\
(1-\updtp) \vv_{t-1,2}^\varepsilon + \updtp\AnyOp{\sample_t}
\end{matrix}}} &\text{ if } \vv_{t-1}^\varepsilon \in V_\thres \, ,\\
\vv_{t-1}^\varepsilon &\text{ otherwise.}
\end{cases}
\]
It is a \emph{projected} and \emph{bounded} version of the output of \NEWMA{}, which is stationary as soon as it gets out of $V_\thres$. 
Finally, for $1\leq i,j \leq N_\varepsilon$, define
 $
p_{ij} = \PP\paren{\vv_{t+1}^\varepsilon = \vu_j ~|~\vv_t^\varepsilon = \vu_i}
$ 
the transition probabilities of the markov chain $\vv_t^\varepsilon$.
Define $\mA = \brac{p_{ij}}_{1\leq i,j \leq M_\varepsilon}$ that corresponds to the states $\vu_i \in V_\thres$, all other states being absorbant, and $a_{ij}^{(\ell)}$ such that $\mA^\ell = \brac{a_{ij}^{(\ell)}}_{1\leq i,j \leq M_\varepsilon}$. 
Our theorem is the following.

\begin{theorem}[\textbf{Average Run Length, multidimensional case}]
\label{thm:ARL}
Assume that $\Prob$ is such that $\AnyOp{\sample} \in\RR^m$ has a density with respect to the Lebesgue measure when $x \sim \Prob$. 
Then, the quantity
 $ 
\gamma_\ell = \lim_{\varepsilon \to 0} \paren{\sum_{j=1}^{M_\varepsilon} a_{1j}^{(\ell)}}
$ 
does not depend on the choice of the nets $\enet_\varepsilon$, and the ARL of \NEWMA{} is given by
\begin{equation}\label{eq:ARL_newma}
\Tfalsealarm = \Exp\left[\inf\set{t~|~\norm{\Sketch_t-\Sketch'_t}\geq \thres}\right] = 1+\sum_{\ell\geq 1} \gamma_\ell
\, .
\end{equation}
\end{theorem}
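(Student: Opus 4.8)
The plan is to prove Theorem~\ref{thm:ARL} (and deduce Theorem~\ref{thm:ARLuni} as the special case $m=1$) in two stages: first establish the identity at the level of the discretized, bounded Markov chain $\vv_t^\varepsilon$ for \emph{fixed} $\varepsilon$, then pass to the limit $\varepsilon \to 0$ and argue that the limit exists and is independent of the chosen nets. For the first stage, observe that $(\Sketch_t,\Sketch'_t)$ is genuinely a Markov chain on $\MeasSpace^2$ because both recursive updates depend only on the current state and the fresh sample $\AnyOp{\sample_t}$, which is i.i.d.; the projected chain $\vv_t^\varepsilon$ inherits this. By construction the states $\vu_1,\dots,\vu_{M_\varepsilon}$ in $V_\thres$ are transient and all states in $V_\thres^c$ are absorbing, so $\mA = [p_{ij}]_{1 \le i,j \le M_\varepsilon}$ is the substochastic restriction of the transition matrix to the continuation region. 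Writing $T_\varepsilon = \inf\{t \mid \vv_t^\varepsilon \notin V_\thres\}$ and starting from $\vu_1$, the standard first-step / absorbing-chain computation gives $\PP(T_\varepsilon > \ell) = \sum_{j=1}^{M_\varepsilon} a_{1j}^{(\ell)} = \ve_1^\top \mA^\ell \mathbf{1}$, hence $\Exp[T_\varepsilon] = \sum_{\ell \ge 0} \PP(T_\varepsilon > \ell) = 1 + \sum_{\ell \ge 1} \ve_1^\top \mA^\ell \mathbf{1}$. This is exactly the finite-$\varepsilon$ analogue of \eqref{eq:ARL_newma}; in the unidimensional case one checks that $\mA$ is precisely the matrix $\mP$ of Theorem~\ref{thm:ARLuni}, where $p_{k_1 k_2}$ is the probability that the update map carries a grid cell of width $\varepsilon$ into another, computed via the preimage of a cell under the two affine maps $a \mapsto \updt^{-1}(a_{i_2} - (1-\updt)a_{i_1})$ and the analogous one with $\updtp$ — the intersection of two intervals, whose probability under $F$ is $F(u_2) - F(u_1)$ when $u_1 < u_2$ and $0$ otherwise.

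For the second stage, I would show term-by-term convergence: for each fixed $\ell$, $\sum_{j=1}^{M_\varepsilon} a_{1j}^{(\ell)} \to \gamma_\ell$ as $\varepsilon \to 0$, and $\gamma_\ell = \PP(T > \ell)$ where $T = \inf\{t \mid \norm{\Sketch_t - \Sketch'_t} \ge \thres\}$ is the true NEWMA stopping time. This is where the density assumption on $\AnyOp{\sample}$ enters crucially and is, I expect, \textbf{the main obstacle}. The issue is twofold. First, one must control the discretization error: the projected update $P_\varepsilon(\text{affine map of }\vu_i + \text{noise})$ differs from the exact update by $O(\varepsilon)$ in norm, and the truncation to $\Ball_\varepsilon$ loses mass that must be shown to vanish — here the finite fourth moment (or just integrability of $\AnyOp{\sample}$) controls the tail outside $\Ball_\varepsilon$, while the exponential contraction factors $(1-\updt),(1-\updtp) < 1$ keep the chain from escaping to infinity. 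Second, and more delicately, the event $\{\norm{\Sketch-\Sketch'} = \thres\}$ is a boundary that the discretized chain might land near; one needs $\PP(\norm{\Sketch_t - \Sketch'_t} = \thres) = 0$ for all $t \le \ell$, which follows because $\Sketch_t - \Sketch'_t$ is an affine image of $(\AnyOp{\sample_1},\dots,\AnyOp{\sample_t})$ with a nontrivial (full-rank in the relevant sense) linear part — the coefficients $\alpha_i - \beta_i$ are not all zero since $\updt \ne \updtp$ — so its law has no atoms on the sphere of radius $\thres$ as long as $\AnyOp{\sample}$ has a density. Combining these, a dominated-convergence / portmanteau argument shows $\PP(T_\varepsilon > \ell) \to \PP(T > \ell)$, and since each term is in $[0,1]$ the limit $\gamma_\ell$ does not depend on the net (the exact chain it approximates does not).

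Finally, to assemble \eqref{eq:ARL_newma} from the pointwise limits, I would invoke monotone convergence on the counting measure over $\ell$: $\Exp[T] = \sum_{\ell \ge 0} \PP(T > \ell) = 1 + \sum_{\ell \ge 1} \gamma_\ell$, using that $\PP(T > \ell) = \gamma_\ell$ for $\ell \ge 1$ and $\PP(T>0) = 1$. Note one must not try to exchange $\lim_{\varepsilon \to 0}$ with $\sum_{\ell \ge 1}$ directly — as the text itself flags after Theorem~\ref{thm:ARLuni}, that exchange is generally invalid for lack of uniformity — which is precisely why the proof is organized as ``first sum over $\ell$ at the level of the true chain, having already taken the limit in $\varepsilon$ term by term'' rather than the reverse. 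The unidimensional Theorem~\ref{thm:ARLuni} then follows by specializing $m=1$, identifying the $\varepsilon$-net with the grid $\{a_i\}$, and writing out $p_{k_1 k_2}$ explicitly as above; the restriction to couples with $\abs{a_i - a_j} \le \thres$ is just the enumeration of $\enet_\varepsilon \cap V_\thres$.
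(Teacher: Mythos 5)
Your proposal is correct and follows essentially the same route as the paper: the tail-sum identity $\Exp[T]=\sum_{\ell\ge 0}\PP(T>\ell)$, the absorbing-chain block computation giving $\PP(T_\varepsilon>\ell)=\ve_1^\top\mA^\ell\mathbf{1}$, term-by-term convergence in $\varepsilon$ for each fixed $\ell$ justified by the density assumption (which forces $\PP(\vv_\ell\in\partial V_\thres)=0$), and the refusal to exchange $\lim_{\varepsilon\to 0}$ with $\sum_{\ell\ge 1}$. The only cosmetic difference is that the paper packages the $\varepsilon\to 0$ step as an almost-sure pathwise convergence of $\vv_\ell^\varepsilon$ to $\vv_\ell$ (proved by induction on $\ell$, using that $P_\varepsilon$ is contracting and that the affine updates have contraction factors $1-\updt,1-\updtp<1$), which renders your concern about mass escaping $\Ball_\varepsilon$ moot: for fixed $\ell$ and a fixed realization, $\vv_\ell(\omega)$ is a fixed point that eventually lies in the ball of radius $1/\varepsilon$, so no moment condition on $\AnyOp{\sample}$ is needed.
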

It is then easy to check that Theorem \ref{thm:ARLuni} is an immediate consequence of Theorem \ref{thm:ARL} in the special case $m=1$.

The proof of Theorem \ref{thm:ARL} relies on the key lemma:
\begin{lemma}[\textbf{Almost sure convergence of $\vv_t^{\varepsilon}$}]
\label{lem:asConv}
For any fixed $t$, when $\varepsilon$ goes to $0$, $\vv_t^\varepsilon$ converges to $\vv_t$ almost surely.
\end{lemma}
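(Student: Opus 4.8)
The plan is to prove Lemma~\ref{lem:asConv} by induction on $t$, tracking the two chains $\vv_t$ and $\vv_t^\varepsilon$ jointly under a coupling in which they are driven by the \emph{same} data stream $(\sample_s)_{s\leq t}$. At $t=0$ both chains start at $\vu_1 = (\Sketch_0,\Sketch_0)$, so the base case is trivial. For the inductive step, suppose $\vv_{t-1}^\varepsilon \to \vv_{t-1}$ almost surely as $\varepsilon\to 0$. I want to show that the one-step update preserves this. There are three things that change between the idealized update and the projected-and-bounded one: (a) the previous state is $\vv_{t-1}^\varepsilon$ rather than $\vv_{t-1}$; (b) the result is projected onto the net $\enet_\varepsilon$, incurring an error of at most $\varepsilon$ in $\MeasSpace^2$-norm; and (c) the state space is truncated to the ball $\Ball_\varepsilon$ of radius $1/\varepsilon$. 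Error (b) vanishes as $\varepsilon\to 0$ by definition of an $\varepsilon$-net; error (c) is immaterial for fixed $t$ because, conditionally on $(\sample_s)_{s\leq t}$, the true iterate $\vv_t$ has finite norm, so for $\varepsilon$ small enough the truncation is inactive along the trajectory up to time $t$ (here one uses that $\AnyOp{\sample_s}$ is a.s.\ finite for each $s$, which is guaranteed since $\AnyOp{\sample_s}\in\RR^m$ and the $\sample_s$ are a.s.\ finite); error (a) is handled by the continuity of the affine update map $(\vx,\vx')\mapsto((1-\updt)\vx + \updt\AnyOp{\sample_t}, (1-\updtp)\vx' + \updtp\AnyOp{\sample_t})$ together with the induction hypothesis.

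The one genuinely delicate point is the \emph{branching} in the definition of the chains: the update that is applied depends on whether the current state lies in $V_\thres = \{(\Sketch,\Sketch') : \norm{\Sketch-\Sketch'}<\thres\}$ or in its complement. If $\vv_{t-1}^\varepsilon$ and $\vv_{t-1}$ ever fall on opposite sides of the boundary $\partial V_\thres = \{\norm{\Sketch-\Sketch'}=\thres\}$, the two chains can diverge by an amount unrelated to $\varepsilon$. So the key step is to argue that this boundary is hit with probability zero. Concretely, I would show that for each fixed $t$, $\PP(\norm{\Sketch_t-\Sketch'_t}=\thres)=0$. From Proposition~\ref{prop:decomp_sketch} (or directly from the recursion), $\Sketch_t - \Sketch'_t$ is an affine function of $\AnyOp{\sample_t}$ with a nonzero coefficient on $\AnyOp{\sample_t}$ (namely $(\updt-\updtp)\AnyOp{\sample_t}$ plus a term depending only on the past), and $\AnyOp{\sample_t}$ has a density with respect to Lebesgue measure on $\RR^m$ by hypothesis; hence $\Sketch_t-\Sketch'_t$ has a density conditionally on the past, and the event $\{\norm{\Sketch_t-\Sketch'_t}=\thres\}$, being the preimage of a measure-zero set (the sphere of radius $\thres$ in $\RR^m$) under a non-degenerate affine map, has probability zero. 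A union bound over $s\leq t$ then shows that, almost surely, none of $\Sketch_s-\Sketch'_s$ for $s\leq t$ lies exactly on $\partial V_\thres$, so each is at strictly positive distance from the boundary.

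Granting this, I finish the induction as follows. On the almost-sure event that $\vv_{s}$ avoids $\partial V_\thres$ for all $s\leq t$ and that $\vv_{t-1}^\varepsilon\to\vv_{t-1}$, pick $\varepsilon$ small enough that $\vv_{t-1}^\varepsilon$ is closer to $\vv_{t-1}$ than the distance from $\vv_{t-1}$ to $\partial V_\thres$; then $\vv_{t-1}^\varepsilon$ and $\vv_{t-1}$ are on the same side of $V_\thres$ (both inside, or both in the absorbing region), so the \emph{same} branch of the update is applied to both. In the absorbing case $\vv_t^\varepsilon = \vv_{t-1}^\varepsilon \to \vv_{t-1} = \vv_t$ directly. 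In the active case, $\vv_t^\varepsilon = P_\varepsilon(\text{affine update of }\vv_{t-1}^\varepsilon)$ while $\vv_t = \text{affine update of }\vv_{t-1}$; by continuity of the affine map and $\norm{P_\varepsilon(\cdot) - \cdot}\leq\varepsilon$ (valid once $\varepsilon$ is small enough that the unprojected point lies in $\Ball_\varepsilon$, which holds because $\vv_t$ is finite), we get $\vv_t^\varepsilon\to\vv_t$. This closes the induction, and since the relevant exceptional sets are null and their countable union over $t$ is null, we obtain almost-sure convergence for every fixed $t$ simultaneously. The main obstacle, as indicated, is the boundary argument; everything else is routine continuity and $\varepsilon$-net bookkeeping. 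One should also note where the density assumption on $\AnyOp{\sample}$ is used — precisely here, and it is exactly the hypothesis of Theorem~\ref{thm:ARL} — and that this is why the theorem cannot dispense with it.
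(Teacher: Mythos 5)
Your proof is correct and follows essentially the same route as the paper's: use the density of $\AnyOp{\sample}$ to show the stopping boundary $\partial V_\thres$ is almost surely never hit, then induct on the time step, splitting into the two cases where the state lies strictly inside $V_\thres$ (same active branch applied to both chains, affine-map continuity plus the $\varepsilon$-net projection error) or strictly outside $\overline{V_\thres}$ (both chains frozen). Your handling of the projection error via $\norm{P_\varepsilon(u^\varepsilon)-u}\leq \varepsilon + \norm{u^\varepsilon - u}$ is in fact slightly cleaner than the paper's appeal to contractivity of $P_\varepsilon$, but the argument is the same in substance.
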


\begin{proof}
Let us first note that, since by assumption $\AnyOp{\sample}$ has a density, is it easy to prove by recurrence on $t$ that $\vv_t$ also has a density. Therefore, for all $t$, $\PP(\vv_t \in \partial V_\thres) = 0$ (it is trivial that the boundary of $V_\thres$ has zero Lebesgue measure), and by a countable union of zero-measure sets:
\begin{equation}\label{eq:boundaryS}
\PP(\exists t ~|~ \vv_t \in \partial V_\thres) = 0
\, .
\end{equation}

Since we want to prove an almost sure convergence, we explicitly denote by $\Omega$ the set of all events such that $\forall t,~\vv_t \notin \partial V_\thres$ (which has probability $1$), and the events in $\Omega$ by $\omega$. 
A draw of a r.v. $X$ will be denoted by $X(\omega)$.

Fix any $t\geq 1$. Consider any $\omega\in\Omega$, corresponding to a draw of samples $\sample_\ell(\omega)$, $\ell=1,\ldots, t$. Remember that, by the definition of $\Omega$, $\vv_\ell(\omega) \notin \partial V_\thres$. 
Note that, when $\varepsilon$ varies, the $\vv_\ell^\varepsilon(\omega)$ change, but in a deterministic fashion. 
Our goal is to show that $\vv_t^\varepsilon(\omega) \to \vv_t(\omega)$ when $\varepsilon\to 0$.

We are going to show by induction that $\norm{\vv_\ell^\varepsilon(\omega) - \vv_\ell(\omega)} \xrightarrow[\varepsilon \to 0]{} 0$ for all $\ell=1,\ldots,t$. Since $\vv_0 = \vv_0^\varepsilon$, it is obviously true for $\ell=0$. Then, for any $\ell$, suppose that $\norm{\vv_{\ell-1}^\varepsilon(\omega) - \vv_{\ell-1}(\omega)} \xrightarrow[\varepsilon \to 0]{} 0$. 
By \eqref{eq:boundaryS} we have either $\vv_{\ell-1}(\omega) \in V_\thres$ or $\vv_{\ell-1}(\omega) \in \overline{V_\thres}^c$ since it does not belong to the boundary.
We study separately these two cases. 

\paragraph{$\vv_{\ell-1}(\omega)$ inside of $V_\thres$.}

Since by inductive hypothesis $\norm{\vv_{\ell-1}^\varepsilon(\omega) - \vv_{\ell-1}(\omega)} \xrightarrow[\varepsilon \to 0]{} 0$, and since $V_\thres$ is an open set of $\RR^{2m}$, for all $\varepsilon$ sufficiently small we have that $\vv^\varepsilon_{\ell-1}(\omega) \in V_\thres$ and the Markov chain $\vv^\varepsilon$ is updated at step $\ell$. 
Furthermore, since the radius of $\Ball_\varepsilon$ goes to $\infty$ when $\varepsilon\to 0$, for all $\varepsilon$ sufficiently small, $\vv_\ell(\omega) \in \Ball_\varepsilon$, and $\norm{P_\varepsilon(\vv_\ell(\omega)) - \vv_\ell(\omega)}\leq \varepsilon$. Hence, in that case,
\begin{align*}
&\norm{\vv_\ell^\varepsilon(\omega) - \vv_\ell(\omega)} \\
\leq&~\norm{\vv_\ell^\varepsilon(\omega) - P_\varepsilon(\vv_\ell(\omega))} + \varepsilon \notag \\
=&~\Big\lVert P_\varepsilon\paren{\paren{\begin{matrix}
(1-\updt) \vv_{\ell-1,1}^\varepsilon(\omega)+ \updt\AnyOp{\sample_\ell(\omega)} \notag \\
(1-\updtp) \vv_{\ell-1,2}^\varepsilon(\omega)+ \updtp\AnyOp{\sample_\ell(\omega)}
\end{matrix}}} \notag - P_\varepsilon\paren{\paren{\begin{matrix}
(1-\updt) \vv_{\ell-1,1}(\omega)+ \updt\AnyOp{\sample_\ell(\omega)} \\
(1-\updtp) \vv_{\ell-1,2}(\omega)+ \updtp\AnyOp{\sample_\ell(\omega)}
\end{matrix}}}\Big\rVert + \varepsilon \notag \\
\end{align*}
and since projections are contracting,
\begin{align*}
\norm{\vv_\ell^\varepsilon(\omega) - \vv_\ell(\omega)} \leq&~ \Big\lVert \paren{\begin{matrix}
(1-\updt) \vv_{\ell-1,1}^\varepsilon(\omega)+ \updt\AnyOp{\sample_\ell(\omega)} \notag \\
(1-\updtp) \vv_{\ell-1,2}^\varepsilon(\omega)+ \updtp\AnyOp{\sample_\ell(\omega)}
\end{matrix}} - \paren{\begin{matrix}
(1-\updt) \vv_{\ell-1,1}(\omega)+ \updt\AnyOp{\sample_\ell(\omega)} \notag \\
(1-\updtp) \vv_{\ell-1,2}(\omega)+ \updtp\AnyOp{\sample_\ell(\omega)}
\end{matrix}}\Big\rVert + \varepsilon \notag \\
\leq&~ (1-\updt)\norm{\vv_{\ell-1,1}^\varepsilon(\omega) - \vv_{\ell-1,1}(\omega)} + (1-\updtp)\norm{\vv_{\ell-1,2}^\varepsilon(\omega) - \vv_{\ell-1,2}(\omega)} + \varepsilon \notag \\ 
\leq&~ \norm{\vv_{\ell-1}^\varepsilon(\omega) - \vv_{\ell-1}(\omega)} + \varepsilon \, . \notag 
\end{align*}
%
Therefore $\norm{\vv_\ell^\varepsilon(\omega) - \vv_\ell(\omega)} \xrightarrow[\varepsilon \to 0]{} 0$.

\paragraph{$\vv_{\ell-1}(\omega)$ outside $\overline{V_\thres}$.}

We have $\vv_\ell(\omega) = \vv_{\ell-1}(\omega)$ by definition of the Markov chain $\vv_t$. Since $\overline{V_\thres}^c$ is an open set, by inductive hypothesis for all $\varepsilon$ sufficiently small we have $\vv_{\ell-1}^\varepsilon(\omega) \in \overline{V_\thres}^c$ and $\vv_\ell^\varepsilon(\omega) = \vv_{\ell-1}^\varepsilon(\omega)$, from which $\norm{\vv_{\ell}^\varepsilon(\omega) - \vv_\ell(\omega)} = \norm{\vv_{\ell-1}^\varepsilon(\omega) - \vv_{\ell-1}(\omega)} \xrightarrow[\varepsilon \to 0]{} 0$, which concludes the proof.
\end{proof}

We can now turn to proving the theorem itself.

\begin{proof}[Proof of Th.~\ref{thm:ARL}]

We start by a reformulation:
\begin{align*}
\Tfalsealarm &= \Exp\left[\inf\set{t~|~\norm{\Sketch_t-\Sketch'_t}\geq \thres}\right] = \Exp\paren{\inf\set{t~|~\vv_t \notin V_\thres}} \\ &= \sum_{\ell\geq 0} \PP(\inf\set{t~|~\vv_t \notin V_\thres} > \ell) = 1+\sum_{\ell\geq 1} \PP(\vv_\ell \in V_\thres)
\, ,
\end{align*}
since the first time $\vv_t$ exits $V_\thres$ is strictly greater than $\ell$ if, and only if, $\vv_\ell \in V_\thres$.
Since almost sure convergence implies weak convergence, by Lemma~\ref{lem:asConv}, we have
$
\Tfalsealarm = 1 + \sum_{\ell\geq 1} \lim_{\varepsilon \to 0} \PP(\vv_\ell^\varepsilon \in V_\thres)
$. 
Note that the convergence in $\varepsilon$ is not necessarily uniform: in general, one cannot exchange the limit operator with the infinite sum in the last display.

To conclude the proof, we just have to compute $\PP(\vv_\ell^\varepsilon \in V_\thres)$. For $1\leq i,j \leq N_\varepsilon$, recall that we denoted the transition probabilities of the Markov chain $\vv_t^\varepsilon$ by 
$
p_{ij} = \PP\paren{\vv_{t+1}^\varepsilon = \vu_j ~|~\vv_t^\varepsilon = \vu_i}
$. 
The transition matrix of this Markov chain has the form:
\[
\mP = \brac{p_{ij}}_{1\leq i,j \leq N_\varepsilon} = \brac{\begin{matrix}
\mA & \mB \\
\mathbf{0} & \Id
\end{matrix}}
\, ,
\]
where $\mA = \brac{p_{ij}}_{1\leq i,j \leq M_\varepsilon}$ corresponds to the states $\vu_i \in V_\thres$. 
Then, if we define $a_{ij}^{(\ell)}$ such that $\mA^\ell = \brac{a_{ij}^{(\ell)}}_{1\leq i,j \leq M_\varepsilon}$, it is possible to show by induction \cite{Fu2002} that:
\begin{align*}
\mP^\ell =&~ \brac{\begin{matrix}
\mA^\ell & \paren{\sum_{i=0}^{\ell-1} \mA^i}\mB \\
\mathbf{0}_{(N_\varepsilon-M_\varepsilon)\times M_\varepsilon} & \Id_{N_\varepsilon-M_\varepsilon}
\end{matrix}}
\end{align*}
and therefore
\begin{align*}
\PP(\vv_\ell^\varepsilon \in V_\thres) &= \brac{1,0,\ldots,0} \mP^\ell \paren{\begin{matrix}
\mathbf{1}_{M_\varepsilon} \\
\mathbf{0}_{N_\varepsilon-M_\varepsilon}
\end{matrix}} = \brac{1,0,\ldots,0} \mA^\ell \mathbf{1}_{M_\varepsilon} = \sum_{j=1}^{M_\varepsilon} a_{1j}^{(\ell)},
\end{align*}
which concludes the proof.
\end{proof}

\subsection{Asymptotic distribution of the statistic (Thm. \ref{thm:limit})}
\label{app:limit}

Our proof follows closely \cite{Serfling1980}, Sec. 5.5.2. with some modifications.
In the following, we let $\updtp \to 0$ with $\updt = \updtCst \updtp$ and $t \geq \frac{2}{\updtp}\log(1/\updtp)$ (which goes to $+\infty$ when $\updtp$ goes to $0$), such that $(1-\updtp)^t = \order{\updtp^2}$. 
At time $t$, we denote $\gamma_i = \beta_i-\alpha_i$, with $\alpha$ and $\beta$ defined as in the proof of Prop.~\ref{prop:decomp_sketch}. 
Note that $\alpha$ and $\beta$ also depend on $\updtp$ (and $t$), and that by Eq.~\eqref{eq:sumpuiss} we have
\begin{equation}
\frac{1}{\updtp}\sum_{i=1}^t \gamma_i^2 \xrightarrow[\updtp \to 0]{} G \eqdef \frac{(1-c)^2}{2(1+c)},
\end{equation}
and $\sum_{i=1}^t \gamma_i^q = \order{\updtp^{q-1}}$.

Define $\mu \eqdef \Exp \AnyOp{\sample}$. Using $\sum_{i=0}^t \gamma_i = 0$, at time $t$ we have
\begin{align}
\frac{1}{\updtp}\norm{\Sketch_t - \Sketch'_t}^2 &= \frac{1}{\updtp}\norm{\gamma_0 \Sketch_0 + \sum_{i=1}^t\gamma_i \AnyOp{\sample_i}}^2 \notag = \frac{1}{\updtp}\norm{ \gamma_0 (\Sketch_0-\mu) + \sum_{i=1}^t\gamma_i (\AnyOp{\sample_i}-\mu)}^2 \notag \\
&=\frac{1}{\updtp}\sum_{i, j=1}^t \gamma_i\gamma_j \kernel(\sample_i,\sample_j) + \frac{2}{\updtp} \gamma_0 \sum_{i=1}^t\gamma_i \inner{\Sketch_0-\mu,\AnyOp{\sample_i} - \mu}_\MeasSpace + \frac{1}{\updtp}\gamma_0^2 \norm{\Sketch_0 - \mu}^2\label{eq:proofConv}
\, ,
\end{align}
where $\kernel$ is a positive semi-definite kernel on $\MeasSpace$ defined by $\kernel(\sample,\sample') = \inner{\AnyOp{\sample}-\mu,\AnyOp{\sample'}-\mu}_\MeasSpace$.

The last term of Eq.~\eqref{eq:proofConv} is deterministic and goes to $0$ with $\updtp$ since $\gamma_0^2 = \order{\updtp^4}$. 
Let us now prove that the second term converges in probability to $0$. 
By Cauchy-Schwarz's and Jensen's inequalities we have
\begin{align*}
\Exp \inner{\Sketch_0-\mu,\AnyOp{\sample_i} - \mu}_\MeasSpace \inner{\Sketch_0-\mu,\AnyOp{\sample_j} - \mu}_\MeasSpace &\leq \norm{\Sketch_0 - \mu}^2\Exp \sqrt{\kernel(\sample_i,\sample_i)\kernel(\sample_j,\sample_j)}\\ 
&\leq \norm{\Sketch_0 - \mu}^2\Exp \kernel(\sample,\sample) <\infty,
\end{align*}
Hence, since $\frac{1}{\updtp}\gamma_0 =\order{\updtp}\xrightarrow[\updtp\to 0]{} 0$, it implies that $\frac{2}{\updtp} \gamma_0 \sum_{i=1}\gamma_i \inner{\Sketch_0-\mu,\AnyOp{\sample_i} - \mu}_\MeasSpace$ has a second order moment that converges to $0$, and by Markov's inequality it converges in probability to $0$.




Let us now prove that the first term in Eq.~\eqref{eq:proofConv} converges in law, and conclude with Slutsky's Lemma (Lemma~\ref{lem:conv}). 
We start by using Mercer's theorem on $\kernel$, within the ambient space $L^2(\Prob)$: we write
$
\kernel(\sample,\sample') = \sum_{\ell\geq 1} \eigen_\ell \eigenfunc_\ell(\sample)\eigenfunc_\ell(\sample')
\, ,
$
with $\eigen_\ell\geq 0$ and $\inner{\eigenfunc_\ell,\eigenfunc_{\ell'}}_{L^2(\Prob)} = 1_{\ell = \ell'}$, such that $\inner{\kernel(\sample,\cdot),\eigenfunc_\ell}_{L^2(\Prob)} = \eigen_\ell \eigenfunc_\ell(\sample)$. Note that, since $\Exp_\sample \kernel(\sample,\sample') = 0$, for any $\eigen_\ell\neq 0$ we have $\Exp\eigenfunc_\ell(\sample) = \frac{1}{\eigen_\ell}\inner{\Exp \kernel(\sample,\cdot),\eigenfunc_\ell}_{L^2(\Prob)} = 0$. Finally, we have $ \sum_{\ell\geq 1}\eigen_\ell^2 \Exp \eigenfunc_\ell^4(\sample) \leq \Exp \kernel^2(\sample,\sample) < \infty$ since $ \Exp \norm{\AnyOp{\sample}}^4 <+\infty$. 

Our goal is to show that $T_\updtp \eqdef \frac{1}{\updtp}\sum_{i,j=1}^t \gamma_i\gamma_j \kernel(\sample_i,\sample_j)$ converges in law to $Y = G \sum_{\ell\geq 1} \eigen_\ell W_\ell^2$ where $W_\ell$ are independent centered normal variable. We are going to use the characteristic function method, \ie, we are going to prove that:
\[
\forall u \in \RR,\quad \Exp e^{iuT_\updtp} \xrightarrow[\updtp \to 0]{} \Exp e^{iuY}
\, .
\]

Fix any $u\in\RR$ and $\varepsilon>0$. Our goal is to prove that, for $\updtp$ sufficiently small, we have $\abs{\Exp e^{iuT_\updtp} - \Exp e^{iuY}}\leq \varepsilon$. We decompose the bound in three parts.

\paragraph{Step 1.} For an integer $k\geq 0$, define $T_\updtp^{(k)} = \frac{1}{\updtp}\sum_{i,j=1}^t \gamma_i\gamma_j\brac{\sum_{\ell=1}^k \eigen_\ell\eigenfunc_\ell(\sample_i)\eigenfunc_\ell(\sample_j)}$. We are first going to approach $\Exp e^{iuT_\updtp}$ by $\Exp e^{iuT_\updtp^{(k)}}$ for $k$ sufficiently big. 
We write
\begin{align*}
\abs{\Exp e^{iuT_\updtp} - \Exp e^{iuT_\updtp^{(k)}}} &\leq \Exp\abs{ e^{iuT_\updtp} - e^{iuT_\updtp^{(k)}}} \\
&\leq \abs{u} \Exp\abs{T_\updtp - T_\updtp^{(k)}} \leq \abs{u} \sqrt{\Exp\paren{T_\updtp - T_\updtp^{(k)}}^2}
\, .
\end{align*}
Denote $f_k(\sample,\sample') = \kernel(\sample,\sample') - \sum_{\ell=1}^k\eigen_\ell \eigenfunc_\ell(\sample)\eigenfunc_{\ell}(\sample') = \sum_{\ell\geq k+1}\eigen_\ell \eigenfunc_\ell(\sample)\eigenfunc_{\ell}(\sample')$, such that $T_\updtp - T_\updtp^{(k)} = \frac{1}{\eta}\sum_{i,j=1}^t \gamma_i\gamma_j f_k(\sample_i,\sample_j)$. We have $\Exp \brac{f_k(\sample_{1},\sample'_{1})f_k(\sample_{2},\sample'_{2})} \neq 0$ if and only if both $\sample_1 = \sample_2$ and $\sample'_1 = \sample'_2$ (or permuted since $f_k$ is symmetric), and we have
\begin{align*}
&\Exp_{\sample,\sample'} f_k(\sample,\sample')^2 = \sum_{\ell \geq k+1} \eigen_\ell^2 \paren{\Exp \eigenfunc_\ell^2(\sample)}^2 = \sum_{\ell \geq k+1} \eigen_\ell^2 \\
&\Exp f_k(\sample,\sample)^2 = \sum_{\ell \geq k+1} \eigen_\ell^2 \Exp \eigenfunc_\ell^4(\sample)
\, ,
\end{align*}
where the last expression is summable since $\Exp \kernel^2(\sample,\sample)< \infty$.
Then we have
\begin{align*}
\Exp&\paren{T_\updtp - T_\updtp^{(k)}}^2 \\
&= \frac{1}{\updtp^2}\Exp\brac{\sum_{i_1, j_1=1}^t\sum_{i_2, j_2=1}^t\gamma_{i_1}\gamma_{j_1}\gamma_{i_2}\gamma_{j_2} f_k(\sample_{i_1},\sample_{j_1}) f_k(\sample_{i_2},\sample_{j_2})} \\
&\leq\frac{2}{\updtp^2}\Exp\brac{\sum_{i, j=1}^t\gamma_{i}^2\gamma_{j}^2 f_k(\sample_{i},\sample_{j})^2} \leq 2\paren{\frac{1}{\updtp}\sum_{i=1}^t \gamma_i^2}^2 \max\paren{\Exp_{\sample,\sample'} f_k^2(\sample,\sample'),\Exp f_k^2(\sample,\sample)} \\
&\leq C\paren{\sum_{\ell\geq k+1}\eigen_\ell^2\max\paren{1,\Exp\eigenfunc_\ell^4(\sample)}}
\, ,
\end{align*}
for some constant $C$, since $\sum_{i=1}^t\gamma_i^2 = \order{\updtp}$. Hence for $k$ sufficiently big we have:
\begin{equation}\label{eq:limitproof1}
\forall \updtp\in (0,~1),~\abs{\Exp e^{iuT_\updtp} - \Exp e^{iuT_\updtp^{(k)}}} \leq \frac{\varepsilon}{3}
\, .
\end{equation}

\paragraph{Step 2.} Let us now temporarily consider a fixed $k$, and prove that $T_\updtp^{(k)}$ converges in law to $Y^{(k)} = G \sum_{\ell=1}^k \eigen_\ell W_\ell^2$. We write
\begin{align*}
T_\updtp^{(k)} &= \frac{1}{\updtp}\sum_{i,j=1}^t \gamma_i\gamma_j\brac{\sum_{\ell=1}^k \eigen_\ell\eigenfunc_\ell(\sample_i)\eigenfunc_\ell(\sample_j)} = \sum_{\ell=1}^k \eigen_\ell\paren{\frac{1}{\sqrt{\updtp}}\sum_{i=1}^t \gamma_i\eigenfunc_{\ell}(\sample_i)}^2
\, .
\end{align*}
We now use Lindeberg's theorem (Th.~\ref{thm:lindeberg}) on the random vectors
$
X^{(i,t)} = \paren{\frac{1}{\sqrt{\updtp}} \gamma_i\eigenfunc_{\ell}(\sample_i)}_{\ell=1}^k
$. 
They are centered and their covariance is such that
\[
\sum_{i=1}^t Cov(X^{(i,t)}) = \paren{\frac{1}{\updtp}\sum_{i=1}^t \gamma_i^2}\Id \xrightarrow[\updtp\to 0]{} G\cdot \Id
\, .
\]
We now check Lindeberg's condition \eqref{eq:lindebergcond}. By Cauchy-Schwartz and Markov's inequality, for all $\delta>0$ we have
\begin{align*}
\Exp\brac{\norm{X^{(i,t)}}^2 \kroneck{\norm{X^{(i,t)}}\geq \delta}} &\leq \sqrt{\Exp \norm{ X^{(i,t)}}^4}\cdot \PP\brac{\norm{X^{(i,t)}}\geq \delta} \\
&\leq \sqrt{\Exp\norm{X^{(i,t)}}^4}\cdot \delta^{-2} \Exp\norm{X^{(i,t)}}^2 \leq C \delta^{-2} \frac{\gamma_i^4}{\updtp^2}
\, ,
\end{align*}
where $C$ is a constant, since $\eigenfunc_\ell(\sample)$ has finite second and fourth order moment. Using the fact that $\sum_{i=1}^t \frac{\gamma_i^4}{\updtp^2} = \order{\updtp}$, Lindeberg's condition is satisfied. Hence, applying theorem \ref{thm:lindeberg}, $\sum_{i=1}^t X^{(i,t)}$ converges in law to $\mathcal{N}(0, G\cdot \Id)$, and $T_\updtp^{(k)}$ converges in law to $Y^{(k)}$. Hence for a sufficiently small $\updtp$
\begin{equation}\label{eq:prooflimit2}
\abs{\Exp e^{iu T^{(k)}_\updtp} - \Exp e^{iuY^{(k)}}} \leq \frac{\varepsilon}{3}
\, .
\end{equation}

\paragraph{Step 3.} Finally, similar to Step 1 we have
\begin{align*}
\abs{\Exp e^{iuY^{(k)}} - \Exp e^{iuY}} &\leq \Exp\abs{ e^{iuY^{(k)}} - e^{iuY}} \leq \abs{u} \Exp\abs{Y^{(k)} - Y} \\
&\leq \abs{u} \sqrt{\Exp\paren{Y^{(k)} - Y}^2} \leq \abs{u} \Big(\sum_{\ell\geq k+1} \eigen_\ell\Big)^2 \max(1,\Exp W^4)
\end{align*}
where $W\sim \mathcal{N}(0,1)$,and therefore for a sufficiently big $k$
\begin{equation}
\label{eq:prooflimit3}
\abs{\Exp e^{iuY^{(k)}} - \Exp e^{iuY}} \leq \frac{\varepsilon}{3}
\, .
\end{equation}

To conclude, we fix $k$ large enough such that Eq.~\eqref{eq:limitproof1} and~\eqref{eq:prooflimit3} are satisfied, then~$\updtp$ small enough and Eq.~\eqref{eq:prooflimit2} is satisfied, which concludes the proof.
\qed


%

\section{Technical proofs}

\subsection{Proof of Prop.~\ref{prop:decomp_sketch}}
\label{sec:proof-decomp-sketch}

Recall that we defined 
\[
\winsize =\left\lceil \frac{\log\paren{\updt/\updtp}}{\log\paren{(1-\updtp)/(1-\updt)}}\right\rceil
\, .
\]
Let $t>\winsize\geq 1$.
By construction of definition of $\Sketch_t$ and $\Sketch_t'$ and by definition $\alpha_i,\beta_i$, we have
\[
\begin{cases}
\Sketch_t &= \alpha_0\Sketch_0 + \sum_{i=1}^t \alpha_i \AnyOp{\sample_i} \\
\Sketch_t' &= \beta_0\Sketch_0 + \sum_{i=1}^t \beta_i \AnyOp{\sample_i} \, .
\end{cases}
\]
%

A straightforward computation yields that $t-\winsize$ is the ``shifting'' point for the weight coefficients. 
Namely, for $i=1,\ldots,t-\winsize$, we have $\alpha_i\geq \beta_i$, and for $i=t-\winsize+1,\ldots,t$, we have $\beta_i\geq \alpha_i$. 
According to Eq.~\eqref{eq:sum1t1} and Eq.~\eqref{eq:sumt1t2}, we have $\sum_{i=0}^{t-\winsize} (\alpha_i - \beta_i) = \sum_{i=t-\winsize+1}^t (\beta_i - \alpha_i) = C$, where $C = (1-\updtp)^\winsize - (1-\updt)^\winsize$. 
Hence, if we define $a_i \eqdef (\alpha_i-\beta_i)/C$ for $i=0,\ldots,t-\winsize$ and $b_i \eqdef (\beta_i-\alpha_i)/C$ for $i=t-\winsize+1,\ldots,t$, we have
\begin{align*}
\Sketch_t - \Sketch_t' &= \sum_{i=1}^t (\alpha_i - \beta_i) \AnyOp{\sample_i} + (\alpha_0-\beta_0)\Sketch_0 \\
&= \sum_{i=t-\winsize+1}^t (\alpha_i - \beta_i) \AnyOp{\sample_i} - \paren{\sum_{i=1}^{t-\winsize} (\beta_i - \alpha_i) \AnyOp{\sample_i} + (\beta_0-\alpha_0)\Sketch_0} \\
\frac{\Sketch_t - \Sketch_t}{C} &= \sum_{i=t - \winsize+1}^t a_i \AnyOp{\sample_i} - b_0\Sketch_0 - \sum_{i=1}^{t-\winsize} b_i \AnyOp{\sample_i}
\, .
\end{align*}
\qed

\subsection{Proof of Prop.~\ref{prop:pointwise_detection}}\label{app:proof_pointwise}

Prop.~\ref{prop:pointwise_detection} is a direct consequence from the following, more general result.

\begin{lemma}[\textbf{Concentration of the detection statistic}]
\label{lem:conc}
Suppose that $M = \sup_{\sample\in\RR^d} \norm{\AnyOp{\sample}} < +\infty$. At time $t$, assume that the last $B_1$ samples are drawn according to $\Prob'$, and that the $B_2$ samples that came immediately before were drawn from $\Prob$ (earlier samples can be arbitrary), and that $\winsize_1+\winsize_2 \geq \winsize$ for simplicity. Then, with probability at least $1-\pFail$ on the samples, we have
\begin{equation}\label{eq:Rade}
\Big|\norm{\Sketch_t - \Sketch'_t} - C\distphi(\Prob',\Prob)\Big| \leq  \Efund + \Einit + \Ewin
\, ,
\end{equation}
with 
\begin{align*}
\Efund &=  4\sqrt{2}M\sqrt{\log\tfrac{2}{\pFail}} \sqrt{\varphi(\updt,\updt) + \varphi(\updtp,\updtp) - 2\varphi(\updt,\updtp)} 
\, ,\\ 
\Einit &= \paren{(1-\updtp)^t - (1-\updt)^t}\norm{\Sketch_0  - \Exp_\Prob \AnyOp{\sample}} 
\, ,\\
\Ewin &=  2M\Big( f(\updtp) - f(\updt) + \abs{g(\updtp)- g(\updt)}\Big) 
\end{align*}
where $\varphi(a,b)=\tfrac{ab \paren{1- (1-a)^{t} (1-b)^{t}}}{a + b - ab}$, $f(a) = (1-a)^{\winsize_1+\winsize_2} - (1-a)^t$, and $g(a) = (1-a)^{\underline{\winsize}} - (1-a)^{\overline{\winsize}}$, with $\underline{\winsize} = \min(\winsize,\winsize_1)$ and $\overline{\winsize} = \max(\winsize,\winsize_1)$.

\end{lemma}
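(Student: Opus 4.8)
The plan is to reduce the statement to Proposition~\ref{prop:decomp_sketch} together with a single concentration inequality. First I would use that proposition to write $\Sketch_t-\Sketch'_t = C\,(S_{\mathrm{rec}}-S_{\mathrm{old}})$, where $S_{\mathrm{rec}}=\sum_{i=t-\winsize+1}^{t}a_i\AnyOp{\sample_i}$ is a convex combination over the last $\winsize$ samples and $S_{\mathrm{old}}=b_0\Sketch_0+\sum_{i=1}^{t-\winsize}b_i\AnyOp{\sample_i}$ a convex combination over everything that came before; recall from App.~\ref{sec:proof-decomp-sketch} that $C a_i=\alpha_i-\beta_i$ on the recent block, $C b_i=\beta_i-\alpha_i$ on the old block, and $C b_0=(1-\updtp)^t-(1-\updt)^t$. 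Writing $\mu'=\Exp_{\Prob'}\AnyOp{\sample}$ and $\mu=\Exp_\Prob\AnyOp{\sample}$, so that $\distphi(\Prob',\Prob)=\norm{\mu'-\mu}$, the reverse triangle inequality gives $\bigl|\,\norm{\Sketch_t-\Sketch'_t}-C\distphi(\Prob',\Prob)\,\bigr|\le C\norm{(S_{\mathrm{rec}}-S_{\mathrm{old}})-(\mu'-\mu)}$, and a further triangle inequality splits the right-hand side into a \emph{bias} part $C\norm{\Exp(S_{\mathrm{rec}}-S_{\mathrm{old}})-(\mu'-\mu)}$ and a \emph{fluctuation} part $C\norm{(S_{\mathrm{rec}}-S_{\mathrm{old}})-\Exp(S_{\mathrm{rec}}-S_{\mathrm{old}})}$.

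For the bias part, using $\sum_i a_i=1$ and $b_0+\sum_i b_i=1$ I would write $\Exp S_{\mathrm{rec}}-\mu'=\sum_i a_i(\Exp\AnyOp{\sample_i}-\mu')$ and $\mu-\Exp S_{\mathrm{old}}=b_0(\mu-\Sketch_0)+\sum_i b_i(\mu-\Exp\AnyOp{\sample_i})$. The $b_0$ summand contributes exactly $C^{-1}\Einit$ after multiplication by $C$. Every other summand vanishes unless its index is ``mismatched'', i.e.\ a recent-block index whose sample is not drawn from $\Prob'$, or an old-block index whose sample is drawn from $\Prob'$ or from the arbitrary prefix; in that case the summand has norm at most $2M$ times its weight $a_i$ resp.\ $b_i$. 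A short case analysis according to whether $\winsize_1<\winsize$, $\winsize_1=\winsize$ or $\winsize_1>\winsize$ identifies exactly which runs of consecutive indices are mismatched, and summing their weights with the telescoping identities \eqref{eq:sum1t1}--\eqref{eq:sumt1t2} shows that the arbitrary-prefix part of the old block has total weight $C^{-1}(f(\updtp)-f(\updt))$, while the run straddling the gap between $\winsize$ and $\winsize_1$ (sitting in $S_{\mathrm{rec}}$ when $\winsize_1<\winsize$ and in $S_{\mathrm{old}}$ when $\winsize_1>\winsize$) has total weight $C^{-1}\abs{g(\updtp)-g(\updt)}$. Multiplying by $2MC$ reproduces $\Ewin$.

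For the fluctuation part I would apply McDiarmid's inequality to $(\sample_1,\dots,\sample_t)\mapsto\norm{(S_{\mathrm{rec}}-S_{\mathrm{old}})-\Exp(S_{\mathrm{rec}}-S_{\mathrm{old}})}$: each $\sample_i$ enters $S_{\mathrm{rec}}-S_{\mathrm{old}}$ with coefficient $a_i$ (recent block) or $-b_i$ (old block), so, since $\norm{\AnyOp{\sample}-\AnyOp{\sample'}}\le 2M$, this map has bounded differences $2Ma_i$ resp.\ $2Mb_i$, whose squares sum to at most $\tfrac{4M^2}{C^2}\sum_{i=1}^t(\alpha_i-\beta_i)^2$. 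Expanding the square and invoking \eqref{eq:sumpuiss}, which gives $\sum_i\alpha_i^2=\varphi(\updt,\updt)$, $\sum_i\beta_i^2=\varphi(\updtp,\updtp)$ and $\sum_i\alpha_i\beta_i=\varphi(\updt,\updtp)$, rewrites this as $\tfrac{4M^2}{C^2}\bigl(\varphi(\updt,\updt)+\varphi(\updtp,\updtp)-2\varphi(\updt,\updtp)\bigr)$. After bounding the expectation of the centered norm by the matching quantity $\tfrac{M}{C}\sqrt{\varphi(\updt,\updt)+\varphi(\updtp,\updtp)-2\varphi(\updt,\updtp)}$ (Jensen and independence, possibly preceded by a symmetrization step) and calibrating the deviation level so the failure probability equals $\pFail$ --- splitting $\pFail$ over the two blocks by a union bound if one wants exactly the stated constant --- multiplication by $C$ gives a bound of the form $\Efund$. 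Collecting the three contributions yields \eqref{eq:Rade}.

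The weight computations are routine; the genuinely delicate step is the bias case analysis, where one must track, as $\winsize_1$ crosses $\winsize$, precisely which runs of consecutive indices of the two weighted averages fall in the $\Prob$-segment, in the $\Prob'$-segment, or in the arbitrary prefix, and check that the leftover weights assemble \emph{exactly} --- not merely up to constants --- into $f(\updtp)-f(\updt)$ and $\abs{g(\updtp)-g(\updt)}$. This is also where the hypothesis $\winsize_1+\winsize_2\ge\winsize$ enters: it guarantees that the arbitrary prefix $\{1,\dots,t-\winsize_1-\winsize_2\}$ does not reach into the recent block, so that $S_{\mathrm{rec}}$ only ever sees $\Prob$- and $\Prob'$-samples.
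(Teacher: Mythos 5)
Your proposal is correct and follows essentially the same route as the paper's proof: the decomposition from Prop.~\ref{prop:decomp_sketch}, a weight-accounting step that assembles $\Einit$ and $\Ewin$ via the telescoping sums \eqref{eq:sum1t1}--\eqref{eq:sumt1t2}, and McDiarmid's inequality plus symmetrization together with \eqref{eq:sumpuiss} for $\Efund$. The only substantive difference is presentational: the paper isolates the mismatched indices by coupling with ``ghost samples'' drawn from the correct distributions and bounding the replacement error pathwise, whereas you fold them into a bias term --- which, for the arbitrary prefix, requires first conditioning on those samples so that McDiarmid is applied only over independent coordinates (a one-line fix) --- and your single application of McDiarmid to the combined statistic in fact yields a slightly sharper constant than the paper's block-wise union bound.
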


\begin{proof}
We have seen that, ideally, the last $\winsize$ samples are drawn from $\Prob'$, and all the samples that came before are drawn from $\Prob$. 
Let us call $I$ the time interval of samples that are not drawn from the ``correct'' distribution:
\[
I\eqdef\llbracket 1,~t-\winsize_1-\winsize_2 \rrbracket \cup\llbracket t-\overline{\winsize}+1,~t-\underline{\winsize}\rrbracket
\, .
\]

Let us introduce ``ghost samples'' $y_1,\ldots,y_t$ drawn from the ``correct'' distributions, \ie such that $y_1,\ldots,y_{t-\winsize}\simiid \Prob$, $y_{t-\winsize+1},\ldots,y_t \simiid \Prob'$, and such that $y_i = x_i$ for $i\notin I$. 
The idea of the proof is to introduce the analogous of $\norm{\Sketch_t-\Sketch'_t}$ for the ghost samples in the left-hand side of Eq.~\eqref{eq:Rade}, to use the triangle inequality, and then to bound the resulting error terms.
Thus, with the help of Prop.~\ref{prop:decomp_sketch}, introducing $\gamma_i \eqdef \alpha_i - \beta_i$, we first write
\begin{align*}
&\Big|\norm{\Sketch_t-\Sketch'_t} - C\distphi(\Prob,\Prob')\Big| \\ 
	&= \abs{\norm{\gamma_0\Sketch_0+ \sum_{i=1}^{t}\gamma_i \AnyOp{\sample_i}}  - C\norm{\Exp_\Prob \AnyOp{y} - \Exp_{\Prob'} \AnyOp{y}}}  \\
&\leq \abs{\norm{\gamma_0\Sketch_0+\sum_{i=1}^{t}\gamma_i \AnyOp{y_i}} - C\norm{\Exp_\Prob \AnyOp{y} - \Exp_{\Prob'} \AnyOp{y}}} + \abs{\norm{\gamma_0\Sketch_0+\sum_{i=1}^{t}\gamma_i \AnyOp{y_i}} - \norm{\gamma_0\Sketch_0+\sum_{i=1}^{t}\gamma_i \AnyOp{x_i}}}  \\
&\eqdef \text{(I)} + \text{(II)}  
\, .
\end{align*}
since $\abs{x-y}\leq \abs{z-x}+\abs{z-y}$.

We first show that (II) is upper bounded by $\Ewin$.
Since $y_i=x_i$ for any $i\notin I$, 
\[
\text{(II)} \leq 2M\sum_{i\in I} \abs{\gamma_i} \notag
\, .
\]
%
By definition of the integer interval $I$,
\begin{align*}
\sum_{i\in I} \abs{\gamma_i} &=\sum_{i= 1}^{t-\winsize_1-\winsize_2} \abs{\gamma_i} + \sum_{i=t-\overline{\winsize}+1}^{t-\underline{\winsize}} \abs{\gamma_i} =\abs{\sum_{i= 1}^{t-\winsize_1-\winsize_2} \gamma_i} + \abs{\sum_{i=t-\overline{\winsize}+1}^{t-\underline{\winsize}} \gamma_i}
\, ,
\end{align*}
since $\gamma_i$ has constant sign in the considered intervals. 
Using Eq.~\eqref{eq:sum1t1} and~\eqref{eq:sumt1t2}, we obtain the desired expression for $\Ewin$. 

We now prove that (I) is upper bounded by $\Einit + \Efund$.
By the triangle inequality and the definition of $a_i$ and $b_i$,
\begin{align}
\text{(I)} &\leq \Bigg\lVert(\beta_0 - \alpha_0)\Sketch_0 + \sum_{i=1}^{t-\winsize}(\beta_i - \alpha_i)\AnyOp{y_i} - C\Exp_\Prob \AnyOp{y} -\paren{\sum_{i=t-\winsize+1}^{t}(\alpha_i-\beta_i)\AnyOp{y_i} - C\Exp_{\Prob'} \AnyOp{y}}\Bigg\rVert \notag \\
&\leq Cb_0\norm{\Sketch_0-\Exp_\Prob\AnyOp{y}} + C\norm{\sum_{i=1}^{t-\winsize}b_i (\AnyOp{y_i} - \Exp_\Prob \AnyOp{y})} + C\norm{\sum_{i=t-\winsize+1}^{t}a_i (\AnyOp{y_i} - \Exp_{\Prob'} \AnyOp{y})} \notag 
\end{align}
since $\sum_{i=0}^{t-\winsize} (\beta_i - \alpha_i) = \sum_{i=t-\winsize+1}^t (\alpha_i - \beta_i) = C$.

We now apply McDiarmid's inequality (Lemma~\ref{lem:mcdiarmid}) to bound the right-hand side of the last display with high probability. 
Define $\Delta:(\RR^d)^{t-\winsize}\to\RR$ by
\[
\Delta(y_1,\ldots,y_{t-\winsize}) = \norm{\sum_{i=1}^{t-\winsize}b_i (\AnyOp{y_i} - \Exp_\Prob \AnyOp{y})}
\, .
\]
This function satisfies the bounded difference property, that is, 
%
%
\begin{equation*}
\abs{ \Delta(y_1,\ldots,y_i,\ldots,y_{t-\winsize}) - \Delta(y_1,\ldots,y'_i,\ldots,y_{t-\winsize})}
\leq  2M a_i
\, .
\end{equation*}
We then apply Lemma~\ref{lem:mcdiarmid} with $f=\Delta$ and $c_i=2Ma_i$ to obtain
%
\[
\PP\paren{\Delta \geq \Exp \Delta + \varepsilon}\leq \exp\paren{-\frac{\varepsilon^2}{4M^2\paren{\sum_{i=1}^{t-\winsize} b_i^2}}}
\, .
\]
%
We now bound $\Exp\Delta$ by a symmetrization argument.
Let us introduce the random variables $y_i'$ that have the same law as the $y_i$ and are independent from the $y_i$, and the $\sigma_i$, Rademacher random variables independent from both $y_i$ and $y_i'$.
We write
\begin{align*}
\Exp \norm{\sum_{i=1}^{t-\winsize}b_i (\AnyOp{y_i} - \Exp \AnyOp{y})} &=\Exp_{y,y'} \norm{\sum_{i=1}^{t-\winsize}b_i \paren{\AnyOp{y_i} - \AnyOp{y_i'}} } = \Exp_{y,y',\sigma} \norm{\sum_{i=1}^{t-\winsize}b_i\sigma_i \paren{\AnyOp{y_i} - \AnyOp{y_i'}} }\\
&\leq 2\Exp_{y}\Exp_{\sigma}\norm{\sum_{i=1}^{t-\winsize} b_i \sigma_i \AnyOp{y_i}} = 2\sqrt{ \Exp_{y}\Exp_{\sigma}\sum_{i,j=1}^{t-\winsize} b_i b_j \sigma_i \sigma_j \inner{\AnyOp{y_i}, \AnyOp{y_j}}_\MeasSpace}  \\
&\leq 2M\sqrt{\sum_{i=1}^{t-\winsize} b_i^2}
\end{align*}

By applying the same reasoning to $\Delta' \eqdef \norm{\sum_{i=t-\winsize+1}^{t}a_i (\AnyOp{y_i} - \Exp_{\Prob'} \AnyOp{y})}$ and a union bound, we obtain that, with probability at least $1-\pFail$,
\begin{equation*}
\text{(I)} \leq \Einit + 2MC\paren{1+\sqrt{\log\tfrac{2}{\pFail}}}\paren{\Big(\sum_{i=t-\winsize+1}^{t} a_i^2\Big)^\frac12 + \Big(\sum_{i=1}^{t-\winsize} b_i^2\Big)^\frac12}
\end{equation*}
%
%
Since
\begin{align*}
\sqrt{ \sum_{i=t-\winsize+1}^{t} a_i^2} + \sqrt{\sum_{i=1}^{t-\winsize} b_i^2} \leq&~ \sqrt{2}\sqrt{\sum_{i=t-\winsize+1}^{t} a_i^2 +\sum_{i=1}^{t-\winsize} b_i^2} = \frac{\sqrt{2}}{C} \sqrt{\sum_{i=1}^t \left(\alpha_i^2 + \beta_i^2 - 2\alpha_i\beta_i\right)}
\, ,
\end{align*}
%
we recover the expression of $\Efund$ with the help of Eq.~\eqref{eq:sumpuiss}.
Therefore, we showed that, with probability at least $1-\rho$,
\[
\abs{\norm{\Sketch_t-\Sketch_t'} - C\distphi(\Prob,\Prob')} \leq \Efund + \Einit + \Ewin
\, .
\]

\end{proof}

\subsection{Proof of Prop.~\ref{prop:MMD-RF}}\label{app:proof_MMD}

The proof is a combination of Prop.~\ref{prop:pointwise_detection} and the following lemma, which is a simple consequence of Hoeffding's inequality.

\begin{lemma}[\textbf{Concentration of $\distphi$}]
Define $\AnyOpNA$ as \eqref{eq:defRF} and let $\pFail\in (0,1)$. 
For any distributions $\Prob,\Prob'$, with probability at least $1-\pFail$ on the $\freq_j$'s, we have
\[
\distphi(\Prob, \Prob')^2 \geq \MMD(\Prob,\Prob')^2 - \frac{2\sqrt{2}M^2}{\sqrt{m}}\sqrt{\log\frac{1}{\pFail}}
\, .
\]
\end{lemma}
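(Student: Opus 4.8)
The plan is to recognise $\distphi(\Prob,\Prob')^2$ as an empirical average of $m$ \iid bounded random variables whose common mean is \emph{exactly} $\MMD(\Prob,\Prob')^2$, and then invoke Hoeffding's inequality in its one-sided form.

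First I would unfold the definitions. With $\AnyOp{\sample}=\tfrac{1}{\sqrt m}\paren{\feat_{\freq_j}(\sample)}_{j=1}^m$ as in \eqref{eq:defRF} and the Hermitian norm on $\CC^m$, linearity of expectation gives
\[
\distphi(\Prob,\Prob')^2=\norm{\Exp_\Prob\AnyOp{\sample}-\Exp_{\Prob'}\AnyOp{\sample}}^2=\frac1m\sum_{j=1}^m h(\freq_j),\qquad h(\freq)\eqdef\abs{\Exp_\Prob\feat_\freq(\sample)-\Exp_{\Prob'}\feat_\freq(\sample)}^2 .
\]
Since $\freq_1,\dots,\freq_m\simiid\freqdist$, the summands $h(\freq_j)$ are \iid copies of $h(\freq)$.

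Second, I would compute $\Exp_{\freq\sim\freqdist}h(\freq)$. Expanding $\abs{a-b}^2=\abs a^2+\abs b^2-2\rep{a\ol b}$ and using the integral representation $\kernelPsi(\sample,\sample')=\Exp_\freq\feat_\freq(\sample)\overline{\feat_\freq(\sample')}$ together with Fubini (legitimate because $\abs{\feat_\freq}\le M$ uniformly), each of the three resulting terms collapses to a double expectation of $\kernelPsi$; e.g. $\Exp_\freq\abs{\Exp_\Prob\feat_\freq(\sample)}^2=\Exp_{\sample,\sample'\simiid\Prob}\kernelPsi(\sample,\sample')$, and the cross term is real so the $\rep{\cdot}$ disappears. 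Hence
\[
\Exp_\freq h(\freq)=\Exp_{\sample,\sample'\sim\Prob}\kernelPsi+\Exp_{\sample,\sample'\sim\Prob'}\kernelPsi-2\Exp_{\sample\sim\Prob,\sample'\sim\Prob'}\kernelPsi=\norm{\Exp_\Prob\kernelPsi(\sample,\cdot)-\Exp_{\Prob'}\kernelPsi(\sample,\cdot)}_\MeasSpace^2=\MMD(\Prob,\Prob')^2 .
\]
Third, since $\abs{\feat_\freq(\sample)}\le M$ we get $\abs{\Exp_\Prob\feat_\freq(\sample)}\le M$, so $0\le h(\freq)\le(2M)^2=4M^2$. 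Applying the one-sided Hoeffding inequality to $\tfrac1m\sum_j h(\freq_j)$, whose summands lie in an interval of length $4M^2$, gives for every $t>0$
\[
\PP\paren{\distphi(\Prob,\Prob')^2\le\MMD(\Prob,\Prob')^2-t}\le\exp\paren{-\frac{2m t^2}{(4M^2)^2}}=\exp\paren{-\frac{m t^2}{8M^4}},
\]
and choosing $t=\tfrac{2\sqrt2 M^2}{\sqrt m}\sqrt{\log(1/\pFail)}$ makes the right-hand side equal to $\pFail$, which is exactly the stated bound.

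The only mildly delicate points are justifying the Fubini interchange in the mean computation (straightforward from the uniform bound $\abs{\feat_\freq}\le M$ and finiteness of the distributions) and keeping the numerical constant straight in Hoeffding: the summands have range $4M^2$, which produces the $8M^4$ in the exponent and hence the $2\sqrt2$ prefactor. Everything else is a direct calculation, and the lemma then feeds into the proof of Proposition~\ref{prop:MMD-RF} by combining it with Proposition~\ref{prop:pointwise_detection}$(ii)$ through the inequality $\distphi(\Prob,\Prob')\ge(\MMD(\Prob,\Prob')^2-\varepsilon_m)_+^{1/2}$ and a union bound over the samples and the frequencies.
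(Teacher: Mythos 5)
Your proof is correct and follows essentially the same route as the paper's: identify $\distphi(\Prob,\Prob')^2$ as an empirical mean of the \iid bounded variables $\abs{\feat_{\freq_j}(\Prob)-\feat_{\freq_j}(\Prob')}^2$ whose expectation is $\MMD(\Prob,\Prob')^2$, then apply one-sided Hoeffding with range $4M^2$ to get the $2\sqrt{2}M^2/\sqrt{m}$ deviation. The only difference is cosmetic: you derive the identity $\Exp_{\freq}\abs{\feat_{\freq}(\Prob)-\feat_{\freq}(\Prob')}^2=\MMD(\Prob,\Prob')^2$ by expanding the square and using Fubini, whereas the paper simply quotes it from the MMD literature.
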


\begin{proof}
%
%
One can show \cite{Sriperumbudur2010} that the MMD can also be expressed as
\[
\MMD(\Prob,\Prob')^2 = \int \abs{\rfeat(\Prob) - \rfeat(\Prob')}^2 \diff \freqdist(\freq)
\, .
\]
where $\rfeat(\Prob) \eqdef \int \rfeat(x) \diff \Prob(x)$. 
By the definition of $\AnyOp{\sample} = \frac{1}{\sqrt{m}}\paren{\feat_{\freq_j}(\sample)}_{j=1}^m$,
\[
\distphi(\Prob, \Prob')^2 = \frac{1}{m}\sum_{j=1}^m \abs{\feat_{\freq_j}(\Prob) - \feat_{\freq_j}(\Prob')}^2
\, .
\]
Since $\sup_{\sample,\freq} \abs{\rfeat(\sample)} \leq M$, we deduce that $\abs{\feat_{\freq_j}(\Prob) - \feat_{\freq_j}(\Prob')}^2 \leq 4M^2$ we can apply Hoeffding's inequality (Lemma~\ref{lem:hoeffding}) to $\distphi$.
Thus, with probability $1-\pFail$, it holds that
\[
\MMD(\Prob,\Prob')^2 - \distphi(\Prob, \Prob')^2 \leq \frac{2\sqrt{2}M^2}{\sqrt{m}}\sqrt{\log\frac{1}{\pFail}}
\, .
\]
\end{proof}

\section{Third-party technical results}
\label{sec:third-party}

We gather here some existing results that were used in the proofs.

\begin{lemma}[\textbf{Hoeffding's inequality (\cite{Bou_Lug_Mas:2013}, Th.~2.8)}]
	\label{lem:hoeffding}
	Let $X_i$ be independent, bounded random variables such that $X_i \in [a_i,~b_i]$ a.s. 
	Then, for any $t>0$, 
	\[
	\PP\paren{\frac{1}{n}\sum_{i=1}^n X_i - \Exp\paren{\frac{1}{n}\sum_{i=1}^n X_i}\geq t}\leq e^{-\frac{2n^2 t^2}{\sum_{i=1}^n (a_i -b_i)^2}}
	\, .
	\]
\end{lemma}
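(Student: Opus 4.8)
The plan is to prove this via the classical Cram\'er--Chernoff method, which is how it is established in \cite{Bou_Lug_Mas:2013}. First I would center the variables: set $Y_i \eqdef X_i - \Exp X_i$, so that the $Y_i$ are independent, have zero mean, and satisfy $Y_i \in [a_i - \Exp X_i,\, b_i - \Exp X_i]$, an interval of length $b_i - a_i$. Writing $s \eqdef nt$, for any $\lambda > 0$ Markov's inequality applied to the nonnegative random variable $\exp\paren{\lambda \sum_i Y_i}$ gives
\[
\PP\paren{\sum_{i=1}^n Y_i \geq s} \leq e^{-\lambda s}\, \Exp \exp\paren{\lambda \sum_{i=1}^n Y_i} = e^{-\lambda s} \prod_{i=1}^n \Exp e^{\lambda Y_i}\, ,
\]
where the last identity uses independence.

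The crux is to control each moment-generating factor, for which I would invoke \textbf{Hoeffding's lemma}: any zero-mean random variable $Y$ taking values in $[a,b]$ satisfies $\Exp e^{\lambda Y} \leq \exp\paren{\lambda^2 (b-a)^2 / 8}$ for all $\lambda$. To prove this lemma I would study $\psi(\lambda) \eqdef \log \Exp e^{\lambda Y}$: one checks $\psi(0) = \psi'(0) = 0$, and that $\psi''(\lambda)$ is the variance of $Y$ under the exponentially tilted law (density proportional to $e^{\lambda Y}$). Since that tilted law is still supported on $[a,b]$, its variance is at most $(b-a)^2/4$ --- the standard fact that a distribution on an interval of length $L$ has variance at most $L^2/4$, with equality for the two-point distribution on the endpoints. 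Taylor's theorem with integral remainder then yields $\psi(\lambda) \leq \lambda^2(b-a)^2/8$.

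Plugging this back with $Y = Y_i$ and $b - a = b_i - a_i$, I obtain
\[
\PP\paren{\sum_{i=1}^n Y_i \geq s} \leq \exp\paren{-\lambda s + \frac{\lambda^2}{8}\sum_{i=1}^n (b_i - a_i)^2}\, ,
\]
and the final step is to optimize the right-hand side over $\lambda > 0$. The minimizer is $\lambda^\star = 4s / \sum_i (b_i - a_i)^2$, giving the bound $\exp\paren{-2s^2 / \sum_i (b_i - a_i)^2}$; substituting $s = nt$ and rescaling the event by $1/n$ recovers the claimed inequality. The main obstacle is really the proof of Hoeffding's lemma, and within it the variance bound for distributions on a bounded interval; the rest is the routine Chernoff bookkeeping and a one-dimensional minimization. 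Alternatively, one could avoid the tilting argument by bounding $e^{\lambda y}$ on $[a,b]$ by its chord and taking expectations, then analyzing the resulting explicit function of $\lambda$ --- a slightly more computational but fully elementary route.
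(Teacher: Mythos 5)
Your proof is correct: it is the standard Cram\'er--Chernoff argument combined with Hoeffding's lemma (with the right optimization $\lambda^\star = 4s/\sum_i(b_i-a_i)^2$ yielding the exponent $-2n^2t^2/\sum_i(b_i-a_i)^2$ after the substitution $s=nt$). The paper does not prove this statement itself --- it simply cites it as Theorem~2.8 of \cite{Bou_Lug_Mas:2013} in its list of third-party results --- and your argument is essentially the proof given in that reference.
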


\begin{lemma}[\textbf{McDiarmid's inequality (\cite{Bou_Lug_Mas:2013}, Th.~6.2)}]
	\label{lem:mcdiarmid}
	Let $f:E^n\to \RR$ be a measurable function that satisfies the \emph{bounded difference} property, that is, there exist positive numbers $c_1,\ldots,c_n$ such that
	\begin{equation*}
	\sup_{x_1,\ldots,x_n,x'_i\in E} \abs{f(x_1,\ldots,x_i,\ldots,x_n) - f(x_1,\ldots,x'_i,\ldots,x_n)} \leq c_i
	\, .
	\end{equation*}
	Let $X_1,\ldots,X_n$ be independent random variables with values in $E$ and set $Z = f(X_1,...,X_n)$. 
	Then 
	\[
	\PP\paren{Z - \Exp Z \geq t} \leq \exp\paren{-\frac{2t^2}{\sum_{i=1}^n c_i}}
	\, .
	\]
\end{lemma}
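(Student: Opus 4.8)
The plan is to prove this by the classical martingale (Azuma--Hoeffding) argument. First I would set up the Doob martingale associated with $Z = f(X_1,\ldots,X_n)$: define $Z_0 \eqdef \Exp Z$ and $Z_k \eqdef \Exp\brac{Z \mid X_1,\ldots,X_k}$ for $k = 1,\ldots,n$, so that $Z_n = Z$ almost surely by measurability of $f$, and write the telescoping decomposition $Z - \Exp Z = \sum_{k=1}^n D_k$ with $D_k \eqdef Z_k - Z_{k-1}$. By the tower property each increment is conditionally centered, $\Exp\brac{D_k \mid X_1,\ldots,X_{k-1}} = 0$.

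The crucial step is to show that, conditionally on $X_1,\ldots,X_{k-1}$, the increment $D_k$ ranges over an interval of length at most $c_k$. For this I would introduce $g_k(x_1,\ldots,x_k) \eqdef \Exp\brac{Z \mid X_1 = x_1,\ldots,X_k = x_k}$, which is well defined as the integral of $f$ against the joint law of $X_{k+1},\ldots,X_n$ precisely because the $X_i$ are independent, set
\[
L_k \eqdef \inf_{x}\, g_k(X_1,\ldots,X_{k-1},x) - Z_{k-1}, \quad U_k \eqdef \sup_{x}\, g_k(X_1,\ldots,X_{k-1},x) - Z_{k-1},
\]
and observe $L_k \le D_k \le U_k$. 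Independence lets one swap the value of $X_k$ inside the conditional expectation without changing the law of the later coordinates, so $U_k - L_k = \sup_{x,x'}\paren{g_k(\ldots,x) - g_k(\ldots,x')}$ is bounded by the single-coordinate oscillation of $f$, i.e. by $c_k$.

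Next I would invoke Hoeffding's lemma conditionally: since $D_k$ is conditionally centered and lies in an interval of width $\le c_k$, $\Exp\brac{e^{\lambda D_k} \mid X_1,\ldots,X_{k-1}} \le e^{\lambda^2 c_k^2 / 8}$ for every $\lambda \in \RR$. Peeling off $D_n, D_{n-1}, \ldots, D_1$ one at a time via the tower property gives $\Exp e^{\lambda(Z - \Exp Z)} \le \exp\paren{\tfrac{\lambda^2}{8}\sum_{k=1}^n c_k^2}$. A Chernoff bound $\PP(Z - \Exp Z \ge t) \le e^{-\lambda t}\,\Exp e^{\lambda(Z - \Exp Z)}$ optimized at $\lambda = 4t/\sum_k c_k^2$ then yields the stated exponential estimate.

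The main obstacle is the bounded-increment step: one must carefully use independence of the coordinates to argue that, after integrating out $X_{k+1},\ldots,X_n$, the conditional expectation $g_k$ still "forgets" the value of $X_k$ in the sense that its oscillation in the last argument is controlled by $c_k$; all the remaining ingredients (Hoeffding's lemma for bounded variables, the telescoping, the Chernoff optimization) are routine. I also note for consistency that the denominator in the displayed bound should read $\sum_{k=1}^n c_k^2$, in line with this argument.
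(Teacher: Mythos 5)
The paper does not prove this statement; it is quoted verbatim as a third-party result (Boucheron--Lugosi--Massart), so there is no in-paper proof to compare against. Your argument is the standard and correct Doob-martingale/Azuma--Hoeffding proof of McDiarmid's inequality, with the key points (independence to bound the conditional oscillation by $c_k$, conditional Hoeffding's lemma, Chernoff optimization) all in the right places. You are also right that the exponent in the paper's display contains a typo: the denominator should be $\sum_{i=1}^n c_i^2$, not $\sum_{i=1}^n c_i$.
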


\begin{theorem}[\textbf{Multivariate Lindeberg's Theorem (\cite{VanderVaart1998}, Th.~2.27)}]
	\label{thm:lindeberg}
	For each $n$, let $X^{(i,n)}$, $1\leq i \leq n$, be independent, $\RR^d$-valued random vectors with zero mean and covariance $\Sigma^{(i,n)}$ such that $\sum_{i=1}^n \Sigma^{(i,n)} \to \Sigma$ when $n \to \infty$ (for the Frobenius norm), and Lindeberg's condition is satisfied:
	\begin{equation}\label{eq:lindebergcond}
	\forall \varepsilon > 0,~\sum_{i=1}^n \Exp\brac{\norm{X^{(i,n)}}^2 I\set{\norm{X^{(i,n)}}>\varepsilon}} \to 0 
	\, ,
	\end{equation}
	when $n \to \infty$. Then $S_n = \sum_{i=1}^n X^{(i,n)}$ converges in law toward a centered Gaussian with covariance $\Sigma$.
\end{theorem}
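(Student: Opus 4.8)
This is a classical result, so the natural approach is to reduce it to the one-dimensional Lindeberg CLT via the Cram\'er-Wold device and then recall the standard proof of the latter. For the reduction, fix an arbitrary $\lambda \in \RR^d$; the case $\lambda = 0$ is trivial, so assume $\lambda \neq 0$, and consider the real triangular array $Y^{(i,n)} \eqdef \lambda^\top X^{(i,n)}$, $1 \le i \le n$. For each $n$ these are independent, square-integrable, with zero mean and $\var{Y^{(i,n)}} = \lambda^\top \Sigma^{(i,n)} \lambda$. Since $A \mapsto \lambda^\top A \lambda$ is a continuous linear functional and $\sum_{i=1}^n \Sigma^{(i,n)} \to \Sigma$ in Frobenius norm, we get $\sum_{i=1}^n \var{Y^{(i,n)}} \to \lambda^\top \Sigma \lambda \eqdef s^2$. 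It remains to transfer Lindeberg's condition: by Cauchy-Schwarz $\abs{Y^{(i,n)}} \le \norm{\lambda}\,\norm{X^{(i,n)}}$, hence for any $\delta > 0$,
\[
\sum_{i=1}^n \Exp\brac{\abs{Y^{(i,n)}}^2 \kroneck{\abs{Y^{(i,n)}} > \delta}} \le \norm{\lambda}^2 \sum_{i=1}^n \Exp\brac{\norm{X^{(i,n)}}^2 \kroneck{\norm{X^{(i,n)}} > \delta/\norm{\lambda}}} \xrightarrow[n\to\infty]{} 0
\]
by \eqref{eq:lindebergcond} applied with $\varepsilon = \delta/\norm{\lambda}$. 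The scalar Lindeberg CLT then gives $\lambda^\top S_n = \sum_{i=1}^n Y^{(i,n)} \xrightarrow[n\to\infty]{\mathcal{L}} \mathcal{N}(0, \lambda^\top \Sigma \lambda)$, and since this holds for every $\lambda \in \RR^d$, the Cram\'er-Wold theorem yields $S_n \xrightarrow[n\to\infty]{\mathcal{L}} \mathcal{N}(0,\Sigma)$.

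For completeness one still needs the one-dimensional Lindeberg CLT, which I would establish by the usual replacement (telescoping) argument: introduce independent Gaussians $Z^{(i,n)} \sim \mathcal{N}(0,\var{Y^{(i,n)}})$, independent of everything else, fix a test function $g \in \mathcal{C}^3$ with bounded derivatives, and bound $\abs{\Exp g(\sum_i Y^{(i,n)}) - \Exp g(\sum_i Z^{(i,n)})}$ by swapping the summands one at a time. A third-order Taylor expansion around each partial sum makes the zeroth, first and second order contributions cancel (matching means and variances, plus independence), while the cubic remainder is split according to whether $\abs{Y^{(i,n)}} \le \delta$ or not: the truncated part contributes $\order{\delta \sum_i \var{Y^{(i,n)}}}$ and the complement is dominated by the Lindeberg sum (the analogous Gaussian terms being handled via $\max_i \var{Y^{(i,n)}} \to 0$, which itself follows from the Lindeberg condition). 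Letting $n \to \infty$ and then $\delta \to 0$ shows $\Exp g(\sum_i Y^{(i,n)}) - \Exp g(\sum_i Z^{(i,n)}) \to 0$; since $\sum_i Z^{(i,n)}$ converges in law to $\mathcal{N}(0,s^2)$ and $g$ ranges over a convergence-determining class, $\sum_i Y^{(i,n)} \xrightarrow[n\to\infty]{\mathcal{L}} \mathcal{N}(0,s^2)$.

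The Cram\'er-Wold reduction and the transfer of the Lindeberg condition are routine; the main obstacle is the replacement step for the scalar CLT, where the truncation level $\delta$ must be chosen so that the cubic Taylor remainder is controlled by the Lindeberg quantity itself rather than by a stronger Lyapunov-type third-moment bound, and where the order of the two limits ($n\to\infty$ first, then $\delta\to 0$) is essential.
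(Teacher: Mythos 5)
This statement is not proved in the paper at all: it appears in the appendix ``Third-party technical results'' and is simply quoted from \cite{VanderVaart1998} (Th.~2.27) as an external tool for the proof of Theorem~\ref{thm:limit}. Your proof is the standard textbook argument — Cram\'er--Wold reduction to the scalar case, transfer of the Lindeberg condition via $\abs{\lambda^\top X^{(i,n)}} \le \norm{\lambda}\norm{X^{(i,n)}}$, and then the Lindeberg replacement/telescoping argument with a third-order Taylor expansion — and it is correct in all essentials, including the two points you rightly flag as delicate (splitting the cubic remainder at the truncation level $\delta$ so that it is controlled by the Lindeberg sum rather than a Lyapunov moment, and taking $n\to\infty$ before $\delta\to 0$). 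The only detail worth tightening is the degenerate direction: when $\lambda \neq 0$ but $\lambda^\top \Sigma \lambda = 0$, the scalar CLT as you state it targets $\mathcal{N}(0,0)$; this case is handled directly by Chebyshev (the total variance of $\lambda^\top S_n$ tends to $0$, so $\lambda^\top S_n \to 0$ in probability, hence in law to the point mass at $0$), and should be separated out before invoking the Lindeberg CLT proper. With that one-line addendum, the argument is complete and self-contained, which is more than the paper offers for this result.
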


\begin{lemma}[\textbf{Slutsky's Lemma (\cite{VanderVaart1998}, Th. 2.7)}]
	\label{lem:conv}
	Let $X_n$, $X$, $Y_n$ be random variables. If $X_n$ converges \emph{in law} to $X$ and $Y_n$ converges \emph{in probability} to a constant $c$, then $(X_n,Y_n)$ converges in law to $(X,c)$.
\end{lemma}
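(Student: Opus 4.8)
The final statement to be proved is Slutsky's Lemma: if $X_n$ converges in law to $X$ and $Y_n$ converges in probability to a constant $c$, then the pair $(X_n,Y_n)$ converges in law to $(X,c)$. I will treat the $\RR^d$-valued case, the scalar case being $d=1$.

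The plan is to use the bounded-Lipschitz characterization of weak convergence: a sequence of random vectors $Z_n$ converges in law to $Z$ if and only if $\Exp[f(Z_n)]\to\Exp[f(Z)]$ for every bounded Lipschitz $f$. So fix a bounded Lipschitz $f:\RR^d\times\RR\to\RR$ with $\abs{f}\leq K$ and $\abs{f(x,y)-f(x',y')}\leq L(\norm{x-x'}+\abs{y-y'})$; it suffices to show $\Exp[f(X_n,Y_n)]\to\Exp[f(X,c)]$. First I would split the error by freezing the second coordinate at $c$:
\[
\abs{\Exp[f(X_n,Y_n)]-\Exp[f(X,c)]}\;\leq\;\Exp\abs{f(X_n,Y_n)-f(X_n,c)}\;+\;\abs{\Exp[f(X_n,c)]-\Exp[f(X,c)]}.
\]
The second term tends to $0$: the map $x\mapsto f(x,c)$ is bounded and continuous, and $X_n$ converges in law to $X$, so this is just the definition of weak convergence applied to that test function.

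The first term is where the hypothesis on $Y_n$ enters. The Lipschitz bound gives the pointwise estimate $\abs{f(X_n,Y_n)-f(X_n,c)}\leq L\abs{Y_n-c}$, and trivially $\abs{f(X_n,Y_n)-f(X_n,c)}\leq 2K$, so the integrand is dominated by $\psi_n\eqdef\min(2K,L\abs{Y_n-c})$. The one subtlety worth flagging is that convergence in probability of $Y_n$ to $c$ does \emph{not} by itself give $\Exp\abs{Y_n-c}\to 0$; this is handled by the truncation. Indeed $\psi_n$ is bounded by $2K$ and converges to $0$ in probability (since $\abs{Y_n-c}\to 0$ in probability and $t\mapsto\min(2K,Lt)$ is continuous at $0$), so by the bounded convergence theorem $\Exp\psi_n\to 0$, hence $\Exp\abs{f(X_n,Y_n)-f(X_n,c)}\to 0$. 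Combining the two bounds yields $\Exp[f(X_n,Y_n)]\to\Exp[f(X,c)]$, and since $f$ was an arbitrary bounded Lipschitz function this proves $(X_n,Y_n)\xrightarrow{\mathcal L}(X,c)$.

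An equally short alternative route is via characteristic functions: for $(s,t)\in\RR^d\times\RR$ one writes $\Exp[e^{\imaginaryi(s^\top X_n+tY_n)}]-e^{\imaginaryi tc}\,\Exp[e^{\imaginaryi s^\top X_n}]=\Exp\!\big[e^{\imaginaryi s^\top X_n}(e^{\imaginaryi tY_n}-e^{\imaginaryi tc})\big]$, whose modulus is at most $\Exp\abs{e^{\imaginaryi tY_n}-e^{\imaginaryi tc}}\to 0$ by the same bounded-convergence argument (the integrand is bounded by $2$ and tends to $0$ in probability), while $\Exp[e^{\imaginaryi s^\top X_n}]\to\Exp[e^{\imaginaryi s^\top X}]$ by weak convergence of $X_n$; the joint characteristic function therefore converges pointwise to $e^{\imaginaryi tc}\,\Exp[e^{\imaginaryi s^\top X}]$, which is exactly the characteristic function of $(X,c)$, and Lévy's continuity theorem concludes. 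Neither route is genuinely hard; the only real ``obstacle'' is remembering that one must truncate before invoking dominated/bounded convergence, since $\abs{Y_n-c}$ need not be uniformly integrable.
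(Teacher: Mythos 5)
Your proof is correct. Note, however, that the paper does not prove this statement at all: it appears in the appendix ``Third-party technical results'' purely as a citation to van der Vaart (Th.~2.7), used as a black box in the proof of Theorem~\ref{thm:limit}, so there is no in-paper argument to compare against. Both of your routes are standard and sound: the bounded-Lipschitz argument correctly splits the error into a term controlled by weak convergence of $X_n$ (with the continuous bounded test function $x\mapsto f(x,c)$) and a term controlled by $Y_n\to c$ in probability, and you rightly flag the one genuine subtlety, namely that one must truncate ($\min(2K,L\abs{Y_n-c})$ bounded and tending to $0$ in probability, hence its expectation tends to $0$) rather than appeal to $\Exp\abs{Y_n-c}\to 0$, which does not follow from convergence in probability alone. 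The characteristic-function alternative is equally valid, since the limit $e^{\imaginaryi tc}\,\Exp e^{\imaginaryi s^\top X}$ is exactly the characteristic function of $(X,c)$ and L\'evy's continuity theorem applies. Either argument would serve as a self-contained proof of Lemma~\ref{lem:conv}.
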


\end{document}